\documentclass[11pt,a4paper,reqno]{amsart}
\pdfoutput=1
\usepackage{times}
\usepackage[margin=1.0in,tmargin=0.9in,bmargin=0.80in]{geometry}
\usepackage[dvipsnames,table]{xcolor}
\usepackage[english]{babel} 

\definecolor{bred}{rgb}{0.8,0,0}
\definecolor{Gray}{gray}{0.85}

\usepackage{amsmath,amssymb,graphicx,epsfig}
\usepackage{mathtools}
\usepackage{mathrsfs}
\usepackage{enumerate,amsthm,epstopdf,mathabx}
\usepackage[shortlabels]{enumitem}
\setlist[enumerate]{label={\upshape(\roman*)}}
\usepackage[flushleft]{threeparttable}
\usepackage{multirow}
\usepackage{longtable}
\usepackage[utf8]{inputenc} 
\usepackage[T1]{fontenc}    
\usepackage{bbm}
\usepackage{xargs}
\usepackage{footnote}
\makesavenoteenv{tabular}
\usepackage{latexsym}
\usepackage{wasysym}
\usepackage{float}
\usepackage{hyperref}       
\usepackage{url}            
\usepackage{booktabs}       
\usepackage{amsfonts}       
\usepackage{nicefrac}       
\usepackage{microtype}      
\usepackage{cleveref}
\usepackage{dsfont}
\usepackage{footmisc}
\usepackage{array}
\usepackage{subcaption}
\usepackage[linesnumbered,ruled,vlined]{algorithm2e}
\usepackage[section]{placeins}
\usepackage[labelfont=bf]{caption}
\usepackage[textsize=footnotesize]{todonotes}
\hypersetup{colorlinks,linkcolor={blue},citecolor={bred},urlcolor={blue}}
\usepackage[numbers,sort]{natbib}

\SetCommentSty{mycommfont}

\SetKwInput{KwInput}{Input}                
\SetKwInput{KwOutput}{Output}              





\DeclareMathOperator{\re}{Re}
\DeclareMathOperator{\im}{Im}
\DeclareMathOperator{\img}{Img}
\DeclareMathOperator{\sgn}{sgn}

\DeclareMathOperator{\id}{id}

\DeclareMathOperator{\supp}{supp}

\usepackage{stackengine}
\newcommand\xrowht[2][0]{\addstackgap[.5\dimexpr#2\relax]{\vphantom{#1}}}

\newtheorem{theorem}{Theorem}[section]
\newtheorem{proposition}[theorem]{Proposition}
\newtheorem{lemma}[theorem]{Lemma}

\newtheorem{remark}[theorem]{Remark}
\newtheorem{definition}[theorem]{Definition}
\newtheorem{example}[theorem]{Example}
\newtheorem{assumption}[theorem]{Assumption}

\newcolumntype{L}[1]{>{\raggedright\let\newline\\\arraybackslash\hspace{0pt}}m{#1}}
\newcolumntype{C}[1]{>{\centering\let\newline\\\arraybackslash\hspace{0pt}}m{#1}}
\newcolumntype{R}[1]{>{\raggedleft\let\newline\\\arraybackslash\hspace{0pt}}m{#1}}

\usepackage{scalerel,stackengine}
\stackMath
\newcommand\reallywidehat[1]{%
	\savestack{\tmpbox}{\stretchto{%
			\scaleto{%
				\scalerel*[\widthof{\ensuremath{#1}}]{\kern.1pt\mathchar"0362\kern.1pt}%
				{\rule{0ex}{\textheight}}
			}{\textheight}%
		}{2.4ex}}%
	\stackon[-6.9pt]{#1}{\tmpbox}%
}
\newcommand\reallywidetilde[1]{%
	\savestack{\tmpbox}{\stretchto{%
			\scaleto{%
				\scalerel*[\widthof{\ensuremath{#1}}]{\kern.1pt\mathchar"0366\kern.1pt}%
				{\rule{0ex}{\textheight}}
			}{\textheight}%
		}{2.4ex}}%
	\stackon[-6.9pt]{#1}{\tmpbox}%
}

\makeatletter
\newcommand{\labeltext}[3][]{%
	\@bsphack%
	\csname phantomsection\endcsname
	\def\tst{#1}%
	\def\labelmarkup{\emph}
	\def\refmarkup{}%
	\ifx\tst\empty\def\@currentlabel{\refmarkup{#2}}{\label{#3}}%
	\else\def\@currentlabel{\refmarkup{#1}}{\label{#3}}\fi%
	\@esphack%
	\labelmarkup{#2}
}
\makeatother

\allowdisplaybreaks

\begin{document}

\title[]{Universal approximation results for neural networks with non-polynomial activation function over non-compact domains}

\author[]{Ariel Neufeld}

\address{Nanyang Technological University, Division of Mathematical Sciences, 21 Nanyang Link, Singapore}
\email{ariel.neufeld@ntu.edu.sg}

\author[]{Philipp Schmocker}

\address{Nanyang Technological University, Division of Mathematical Sciences, 21 Nanyang Link, Singapore}
\email{philippt001@e.ntu.edu.sg}

\date{\today}
\thanks{Financial support by the MOE AcRF Tier 2 Grant   \textit{MOE-T2EP20222-0013} and the Nanyang Assistant Professorship Grant (NAP Grant) \emph{Machine Learning based Algorithms in Finance and Insurance} is gratefully acknowledged.}
\keywords{Machine Learning, neural networks, universal approximation, Wiener's Tauberian theorem, non-polynomial activation function, weighted Sobolev spaces, approximation rates, curse of dimensionality}

\begin{abstract}
	This paper extends the universal approximation property of single-hidden-layer feedforward neural networks beyond compact domains, which is of particular interest for the approximation within weighted $C^k$-spaces and weighted Sobolev spaces over unbounded domains. More precisely, by assuming that the activation function is non-polynomial, we establish universal approximation results within function spaces defined over non-compact subsets of a Euclidean space, including $L^p$-spaces, weighted $C^k$-spaces, and weighted Sobolev spaces, where the latter two include the approximation of the (weak) derivatives. Moreover, we provide some dimension-independent rates for approximating a function with sufficiently regular and integrable Fourier transform by neural networks with non-polynomial activation function.
\end{abstract}

\maketitle

\vspace{-0.7cm}

\section{Introduction}
\label{SecIntro}

Inspired by the functionality of human brains, (artificial) neural networks have been discovered in the seminal work of McCulloch and Pitts~\cite{mcculloch43}, and can be described in mathematical terms as concatenation of affine and non-affine functions. Nowadays, neural networks are successfully applied in the fields of image classification \cite{krizhevsky12}, speech recognition \cite{hinton12}, and computer games \cite{silver16}, and provide as a supervised machine learning technique an algorithmic approach for the quest of artificial intelligence (see \cite{turing50,mitchell97}). 

One of the fundamental reasons for the success of neural networks is their universal approximation property. In so-called universal approximation theorems (UATs), this property establishes the denseness of the set of (single-hidden-layer) neural networks within a given function space. First, Cybenko~\cite{cybenko89}, Funahashi~\cite{funahashi89}, and Hornik et al.~\cite{hornik89} showed independently that neural networks with a continuous sigmoidal activation function are dense in the space of continuous functions over a compact subset of a Euclidean space. Subsequently, the universal approximation property was extended to broader classes of activation functions, e.g., by Hornik et al.~\cite{hornik91} to continuous, bounded, and non-constant activation functions, and by Mhaskar and Micchelli~\cite{mhaskar92}, Leshno et.~al~\cite{leshno93}, Chen and Chen~\cite{chen95}, and Pinkus~\cite{pinkus99} to non-polynomial activation functions. Moreover, Kidger and Lyons~\cite{kidger20} has established a UAT for deep narrow neural networks with non-affine activation function which is differentiable at at least one point with non-zero derivative. In addition, different UATs have been established within more general function spaces, e.g., Cybenko~\cite{cybenko89}, Mhaskar and Micchelli~\cite{mhaskar92}, and Leshno~\cite{leshno93} within $L^p$-spaces with compactly supported measure, and Hornik~\cite{hornik91} within $L^p$-spaces with finite measure. In addition, \cite{hsw90,hornik91} included the approximation of the derivatives, showing UATs within $C^k$-spaces over compact domains and within Sobolev spaces with compactly supported measures.

In this paper, we extend the universal approximation property of (single-hidden-layer) neural networks to function spaces over non-compact domains. This is of particular interest for the approximation in weighted $C^k$-spaces and weighted Sobolev spaces over unbounded domains, which, to the best of our knowledge, has not yet been addressed in the literature. To be as general as possible, we consider function spaces obtained as completion of the space of bounded and $k$-times continuously differentiable functions with bounded derivatives over a possibly non-compact domain with respect to a weighted norm. In this setting, we then follow Cybenko's approach~\cite{cybenko89} to obtain the universal approximation property, where we combine the classical Hahn-Banach separation argument with a Riesz representation theorem on weighted spaces \cite{doersek10} and Korevaar's distributional extension~\cite{korevaar65} of Wiener's Tauberian theorem~\cite{wiener32}. Our approach also extends the weighted UAT in \cite{cuchiero23} proven for neural networks between infinite dimensional spaces, by now also including the approximation of the derivatives. In the same way it also generalizes the weighted UAT in \cite{vannuland24}, which observes that bounded sigmoidal activation functions with distinct limits at positive and negative infinity are more powerful than those with identical limits. Moreover, in the context of dynamical systems, \cite{grigoryeva19,acciaio22} considered weighted approximation in time.

Furthermore, we prove dimension-independent rates to approximate a given function by a single-hidden-layer neural network in a (weighted) Sobolev space, which relates the approximation error to the network size. First, Barron~\cite{barron92,barron93} obtained the approximation rates for neural networks with sigmoidal activation within $L^2$-spaces, which was extended by Darken et al.~\cite{darken93,darken97} to $L^p$-spaces, in Mhaskar and Micchelli~\cite{mhaskar94,mhaskar95} to non-sigmoidal activation functions, and in Siegel and Xu~\cite{siegel20} to $W^{k,2}$-Sobolev spaces over bounded domains. For the particular case of shallow neural networks with ReLU activation, better approximation rates have been obtained, e.g., in \cite{siegel22,mao23,yang24}. On the other hand, \cite{papon22} derived approximation rates for deep neural networks with non-affine activation function which is differentiable at at least one point with non-zero derivative, coinciding with the best known rates for deep ReLU neural networks in \cite{zhang24} up to constants. For a more detailed review of the literature on approximation rates, we refer to \cite{pinkus99,kurkova12,bgkp17,siegel20}. In this paper, we extend the approximation rates for single-hidden-layer neural networks to weighted Sobolev spaces over unbounded domains, where we apply the (distributional dual) ridgelet transform of Sonoda and Murata~\cite{sonoda17} and use the concept of Rademacher averages.

\subsection{Outline}

In Section~\ref{SecNN}, we introduce (single-hidden-layer) neural networks and extend their universal approximation property beyond compact domains. In Section~\ref{SecApproxRates}, we provide some dimension-independent rates for neural networks in a weighted Sobolev space. Finally, Section~\ref{SecProofs} contains all proofs.

\subsection{Notation}
\label{SubsecNotation}

As usual, $\mathbb{N} := \lbrace 1, 2, 3, ... \rbrace$ and $\mathbb{N}_0 := \mathbb{N} \cup \lbrace 0 \rbrace$ denote the sets of natural numbers, whereas $\mathbb{Z}$ represents the set of integers. Moreover, $\mathbb{R}$ and $\mathbb{C}$ denote the sets of real and complex numbers, respectively, where $\mathbf{i} := \sqrt{-1} \in \mathbb{C}$ represents the imaginary unit. In addition, for any $r \in \mathbb{R}$, we define $\lceil r \rceil := \min\left\lbrace k \in \mathbb{Z}: k \geq r \right\rbrace$. Furthermore, for any $z \in \mathbb{C}$, we denote its real and imaginary part by $\re(z)$ and $\im(z)$, respectively, whereas its complex conjugate is defined as $\overline{z} := \re(z) - \im(z) \mathbf{i}$. Moreover, for any $m \in \mathbb{N}$, we denote by $\mathbb{R}^m$ (and $\mathbb{C}^m$) the $m$-dimensional (complex) Euclidean space, which is equipped with the Euclidean norm $\Vert u \Vert = \sqrt{\sum_{i=1}^m \vert u_i \vert^2}$.

In addition, for $U \subseteq \mathbb{R}^m$, we denote by $\mathcal{B}(U)$ the $\sigma$-algebra of Borel-measurable subsets of $U$. Moreover, for $U \in \mathcal{B}(\mathbb{R}^m)$, we denote by $\mathcal{L}(U)$ the $\sigma$-algebra of Lebesgue-measurable subsets of $U$, while $du: \mathcal{L}(U) \rightarrow [0,\infty]$ represents the Lebesgue measure on $U$. Then, a property is said to hold true almost everywhere (shortly a.e.)~if it holds everywhere true except on a set of Lebesgue measure zero.

Furthermore, for every fixed $m,d \in \mathbb{N}$, $k \in \mathbb{N}_0$, and $U \subseteq \mathbb{R}^m$ (open, if $k \geq 1$), we denote by $C^k(U;\mathbb{R}^d)$ the vector space of $k$-times continuously differentiable functions $f: U \rightarrow \mathbb{R}^d$ such that for every $\alpha \in \mathbb{N}^m_{0,k} := \left\lbrace \alpha = (\alpha_1,...,\alpha_m) \in \mathbb{N}_0^m: \vert\alpha\vert := \alpha_1 + ... + \alpha_m \leq k \right\rbrace$ the partial derivative $U \ni u \mapsto \partial_\alpha f(u) := \frac{\partial^{\vert\alpha\vert} f}{\partial u_1^{\alpha_1} \cdots \partial u_m^{\alpha_m}}(u) \in \mathbb{R}^d$ is continuous. If $m = 1$, we write $f^{(j)} := \frac{\partial^j f}{\partial u^j}: U \rightarrow \mathbb{R}^d$, $j = 0,...,k$. Note that for $k \geq 1$ the domain $U \subseteq \mathbb{R}^m$ must be open for the definition of partial derivatives.

Moreover, we denote by $C^k_b(U;\mathbb{R}^d)$ the vector space of bounded functions $f \in C^k(U;\mathbb{R}^d)$ with bounded partial derivatives $\partial_\alpha f: U \rightarrow \mathbb{R}^d$, for all $\alpha \in \mathbb{N}^m_{0,k}$. Then, the norm
\begin{equation*}
	\Vert f \Vert_{C^k_b(U;\mathbb{R}^d)} := \max_{\alpha \in \mathbb{N}^m_{0,k}} \sup_{u \in U} \Vert \partial_\alpha f(u) \Vert.
\end{equation*}
turns $(C^k_b(U;\mathbb{R}^d),\Vert \cdot \Vert_{C^k_b(U;\mathbb{R}^d)})$ into a Banach space. For $k = 0$ and $U \subset \mathbb{R}^m$ compact, we obtain the Banach space $(C^0(U;\mathbb{R}^d),\Vert \cdot \Vert_{C^0(U;\mathbb{R}^d)})$ of continuous functions equipped with the supremum norm.

In addition, for $\gamma \in [0,\infty)$, we denote by $C^k_{pol,\gamma}(U;\mathbb{R}^d)$ the vector space of functions $f \in C^k(U;\mathbb{R}^d)$ with
\begin{equation*}
	\Vert f \Vert_{C^k_{pol,\gamma}(U;\mathbb{R}^d)} := \max_{\alpha \in \mathbb{N}^m_{0,k}} \sup_{u \in U} \frac{\Vert \partial_\alpha f(u) \Vert}{(1 + \Vert u \Vert)^\gamma} < \infty.
\end{equation*}
Furthermore, we introduce the \emph{weighed $C^k$-space} $\overline{C^k_b(U;\mathbb{R}^d)}^\gamma$ defined as the closure of $C^k_b(U;\mathbb{R}^d)$ with respect to $\Vert \cdot \Vert_{C^k_{pol,\gamma}(U;\mathbb{R}^d)}$. Then, $(\overline{C^k_b(U;\mathbb{R}^d)}^\gamma,\Vert \cdot \Vert_{C^k_{pol,\gamma}(U;\mathbb{R}^d)})$ is by definition a Banach space. If $U \subseteq \mathbb{R}^m$ is bounded, then $\overline{C^k_b(U;\mathbb{R}^d)}^\gamma = C^k_b(U;\mathbb{R}^d)$. Otherwise, $f \in \overline{C^k_b(U;\mathbb{R}^d)}^\gamma$ if and only if $f \in C^k(U;\mathbb{R}^d)$ and $\lim_{r \rightarrow \infty} \max_{\alpha \in \mathbb{N}^m_{0,k}} \sup_{u \in U, \, \Vert u \Vert \geq r} \frac{\Vert \partial_\alpha f(u) \Vert}{(1+\Vert u \Vert)^\gamma} = 0$ (see Lemma~\ref{LemmaCkw}).

Moreover, we abbreviate $C^k(U) := C^k(U;\mathbb{R})$, $C^k_b(U) := C^k_b(U;\mathbb{R})$ and define the complex-valued function spaces $C^k(U;\mathbb{C}^d) \cong C^k(U;\mathbb{R}^{2d})$, $C^k_b(U;\mathbb{C}^d) \cong C^k_b(U;\mathbb{R}^{2d})$, etc.~by identifying $\mathbb{C}^d \cong \mathbb{R}^{2d}$.

Furthermore, we define the Fourier transform of any Lebesgue-integrable function $f: \mathbb{R}^m \rightarrow \mathbb{C}^d$ as
\begin{equation}
	\label{EqDefFourier}
	\mathbb{R}^m \ni \xi \quad \mapsto \quad \widehat{f}(\xi) := \int_{\mathbb{R}^m} e^{-i\xi^\top u} f(u) du \in \mathbb{C}^d,
\end{equation}
see \cite[p.~247]{folland92}. Then, by using \cite[Proposition~1.2.2]{hytoenen16}, it follows that
\begin{equation}
	\label{EqFourierBound}
	\sup_{\xi \in \mathbb{R}^m} \left\Vert \widehat{f}(\xi) \right\Vert = \sup_{\xi \in \mathbb{R}^m} \left\Vert \int_{\mathbb{R}^m} e^{-i\xi^\top u} f(u) du \right\Vert \leq \int_{\mathbb{R}^m} \Vert f(u) \Vert du = \Vert f \Vert_{L^1(\mathbb{R}^m,\mathcal{L}(\mathbb{R}^m),du;\mathbb{R}^d)}.
\end{equation}
In addition, we denote by $\mathcal{S}'(\mathbb{R}^m;\mathbb{C})$ the dual space of the Schwartz space $\mathcal{S}(\mathbb{R}^m;\mathbb{C})$ consisting of smooth functions $f: \mathbb{R}^m \rightarrow \mathbb{C}$ such that the seminorms $\max_{\alpha \in \mathbb{N}^m_{0,n}} \sup_{u \in \mathbb{R}^m} \left( 1 + \Vert u \Vert^2 \right)^n \left\vert \partial_\alpha f(u) \right\vert < \infty$, $n \in \mathbb{N}_0$, are finite. Then, the Fourier transform of any tempered distribution $T \in \mathcal{S}'(\mathbb{R}^m;\mathbb{C})$ is defined as $\widehat{T}(g) := T(\widehat{g})$, for $g \in \mathcal{S}(\mathbb{R}^m;\mathbb{C})$ (see \cite[Equation~9.28]{folland92}).

\section{Universal approximation of neural networks} 
\label{SecNN}

Inspired by the functionality of a human brain, neural networks were introduced in \cite{mcculloch43} and are nowadays applied as machine learning technique in various research areas (see \cite{mitchell97}). In mathematical terms, a neural network can be described as a concatenation of affine and non-affine functions.

\begin{definition}
	For $\rho \in C^0(\mathbb{R})$, a function $\varphi: \mathbb{R}^m \rightarrow \mathbb{R}^d$ is called a \emph{(single-hidden-layer feed-forward) neural network} if it is of the form
	\begin{equation}
		\label{EqDefNN}
		\mathbb{R}^m \ni u \quad \mapsto \quad \varphi(u) = \sum_{n=1}^N y_n \rho\left( a_n^\top u - b_n \right) \in \mathbb{R}^d
	\end{equation}
	with respect to some $N \in \mathbb{N}$ denoting the \emph{number of neurons}, where $a_1,...,a_N \in \mathbb{R}^m$, $b_1,...,b_N \in \mathbb{R}$, and $y_1,...,y_N \in \mathbb{R}^d$ represent the \emph{weight vectors}, \emph{biases}, and \emph{linear readouts}, respectively.
\end{definition}

\begin{definition}
	For $U \subseteq \mathbb{R}^m$ and $\rho \in C^0(\mathbb{R})$, we denote by $\mathcal{NN}^\rho_{U,d}$ the set of all neural networks of the form \eqref{EqDefNN} restricted to $U$ with corresponding activation function $\rho \in C^0(\mathbb{R})$.
\end{definition}

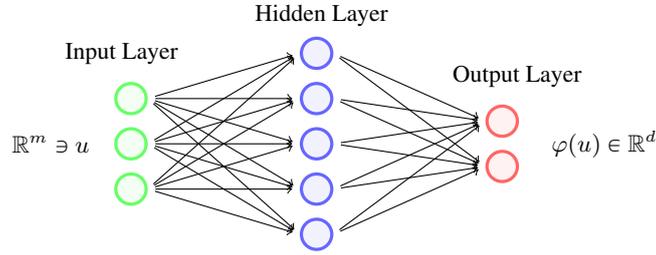
\begin{figure}[ht]
	\centering
	\begin{tikzpicture}[
		inputnode/.style={circle, draw=green!60, fill=green!5, very thick, minimum size=4mm},
		hiddennode/.style={circle, draw=blue!60, fill=blue!5, very thick, minimum size=4mm},
		outputnode/.style={circle, draw=red!60, fill=red!5, very thick, minimum size=4mm},
		node distance=6mm
		]
		\node[inputnode] (x1) {};
		\node[inputnode] (x2) [below of = x1] {};
		\node[inputnode] (x3) [below of = x2] {};
		\node[hiddennode] (y2) [right = 2cm of x1] {};
		\node[hiddennode] (y1) [above of = y2] {};
		\node[hiddennode] (y3) [below of = y2] {};
		\node[hiddennode] (y4) [below of = y3] {};
		\node[hiddennode] (y5) [below of = y4] {};
		\node[outputnode] (z1) [right = 2cm of y2, yshift=-3.0mm] {};
		\node[outputnode] (z2) [below of = z1] {};
		
		\draw[shorten >=0.1cm,shorten <=0.1cm, ->] (x1.east) -- (y1.west);	
		\draw[shorten >=0.1cm,shorten <=0.1cm, ->] (x1.east) -- (y2.west);
		\draw[shorten >=0.1cm,shorten <=0.1cm, ->] (x1.east) -- (y3.west);
		\draw[shorten >=0.1cm,shorten <=0.1cm, ->] (x1.east) -- (y4.west);
		\draw[shorten >=0.1cm,shorten <=0.1cm, ->] (x1.east) -- (y5.west);
		\draw[shorten >=0.1cm,shorten <=0.1cm, ->] (x2.east) -- (y1.west);	
		\draw[shorten >=0.1cm,shorten <=0.1cm, ->] (x2.east) -- (y2.west);
		\draw[shorten >=0.1cm,shorten <=0.1cm, ->] (x2.east) -- (y3.west);
		\draw[shorten >=0.1cm,shorten <=0.1cm, ->] (x2.east) -- (y4.west);
		\draw[shorten >=0.1cm,shorten <=0.1cm, ->] (x2.east) -- (y5.west);
		\draw[shorten >=0.1cm,shorten <=0.1cm, ->] (x3.east) -- (y1.west);	
		\draw[shorten >=0.1cm,shorten <=0.1cm, ->] (x3.east) -- (y2.west);
		\draw[shorten >=0.1cm,shorten <=0.1cm, ->] (x3.east) -- (y3.west);
		\draw[shorten >=0.1cm,shorten <=0.1cm, ->] (x3.east) -- (y4.west);
		\draw[shorten >=0.1cm,shorten <=0.1cm, ->] (x3.east) -- (y5.west);
		\draw[shorten >=0.1cm,shorten <=0.1cm, ->] (y1.east) -- (z1.west);
		\draw[shorten >=0.1cm,shorten <=0.1cm, ->] (y2.east) -- (z1.west);
		\draw[shorten >=0.1cm,shorten <=0.1cm, ->] (y3.east) -- (z1.west);
		\draw[shorten >=0.1cm,shorten <=0.1cm, ->] (y4.east) -- (z1.west);
		\draw[shorten >=0.1cm,shorten <=0.1cm, ->] (y5.east) -- (z1.west);
		\draw[shorten >=0.1cm,shorten <=0.1cm, ->] (y1.east) -- (z2.west);
		\draw[shorten >=0.1cm,shorten <=0.1cm, ->] (y2.east) -- (z2.west);
		\draw[shorten >=0.1cm,shorten <=0.1cm, ->] (y3.east) -- (z2.west);
		\draw[shorten >=0.1cm,shorten <=0.1cm, ->] (y4.east) -- (z2.west);
		\draw[shorten >=0.1cm,shorten <=0.1cm, ->] (y5.east) -- (z2.west);
		
		\draw[] (-1,0.6) node[anchor=west] {\footnotesize Input Layer};
		\draw[] (1.5,1.1) node[anchor=west] {\footnotesize Hidden Layer};
		\draw[] (4.1,0.3) node[anchor=west] {\footnotesize Output Layer};
		\draw[] (-1.7,-0.6) node[anchor=west] {\footnotesize $\mathbb{R}^m \ni u$};
		\draw[] (5.4,-0.6) node[anchor=west] {\footnotesize $\varphi(u) \in \mathbb{R}^d$};
	\end{tikzpicture}
	\caption{A neural network $\varphi: \mathbb{R}^m \rightarrow \mathbb{R}^d$ with $m = 3$, $d = 2$, and $N = 5$.}
	\label{FigNN}
\end{figure}

In this paper, we restrict ourselves to single-hidden-layer feed-forward neural networks of the form \eqref{EqDefNN} and simply refer to them as neural networks. 

\subsection{Universal approximation}

Neural networks admit the so-called universal approximation property, which establishes the denseness of the set of neural networks in a given function space with respect to some suitable topology. For example, every continuous function can be approximated arbitrarily well on a compact subset of a Euclidean space (see, e.g., \cite{cybenko89,hornik91,pinkus99} and the references therein). 

In order to generalize the approximation property of neural networks beyond the space of continuous functions on compacta, we now introduce the following type of function spaces. For this purpose, we fix the input dimension $m \in \mathbb{N}$ and the output dimension $d \in \mathbb{N}$ throughout the rest of this paper.

\begin{assumption}
	\label{AssApprox}
	For $k \in \mathbb{N}_0$, $U \subseteq \mathbb{R}^m$ (open, if $k \geq 1$), and $\gamma \in (0,\infty)$, let $(X,\Vert \cdot \Vert_X)$ be a Banach space consisting of functions $f: U \rightarrow \mathbb{R}^d$ such that the restriction map
	\begin{equation}
		\label{EqAssApprox1}
		(C^k_b(\mathbb{R}^m;\mathbb{R}^d),\Vert \cdot \Vert_{C^k_{pol,\gamma}(\mathbb{R}^m;\mathbb{R}^d)}) \ni f \quad \mapsto \quad f\vert_U \in (X,\Vert \cdot \Vert_X)
	\end{equation}
	is a continuous dense embedding.
\end{assumption}

Note that for $k \geq 1$ the domain $U \subseteq \mathbb{R}^m$ must be open to compute partial derivatives of a function.

\begin{remark}
	\label{RemApprox}
	The restriction map \eqref{EqAssApprox1} is a continuous dense embedding if and only if it is continuous and its image $\left\lbrace f\vert_U: f \in C^k_b(\mathbb{R}^m;\mathbb{R}^d) \right\rbrace$ is dense in $X$. Since $\overline{C^k_b(\mathbb{R}^m;\mathbb{R}^d)}^\gamma$ is defined as the closure of $C^k_b(\mathbb{R}^m;\mathbb{R}^d)$ with respect to $\Vert \cdot \Vert_{C^k_{pol,\gamma}(\mathbb{R}^m;\mathbb{R}^d)}$, the restriction map \eqref{EqAssApprox1} is a continuous dense embedding if and only if $(\overline{C^k_b(\mathbb{R}^m;\mathbb{R}^d)}^\gamma,\Vert \cdot \Vert_{C^k_{pol,\gamma}(\mathbb{R}^m;\mathbb{R}^d)}) \ni f \mapsto f\vert_U \in (X,\Vert \cdot \Vert_X)$ is a continuous dense embedding.
\end{remark}

The continuous dense embedding in \eqref{EqAssApprox1} ensures that the set of neural networks $\mathcal{NN}^\rho_{U,d} \subseteq X$ is well-defined in the function space $(X,\Vert \cdot \Vert_X)$, for all activation functions $\rho \in \overline{C^k_b(\mathbb{R})}^\gamma$.

\begin{lemma}
	\label{LemmaProp}
	Let $(X,\Vert \cdot \Vert_X)$ satisfy Assumption~\ref{AssApprox} with $k \in \mathbb{N}_0$, $U \subseteq \mathbb{R}^m$ (open, if $k \geq 1$), and $\gamma \in (0,\infty)$. Moreover, let $\rho \in \overline{C^k_b(\mathbb{R})}^\gamma$. Then, we have $\mathcal{NN}^\rho_{\mathbb{R}^m,d} \subseteq \overline{C^k_b(\mathbb{R}^m;\mathbb{R}^d)}^\gamma$ and $\mathcal{NN}^\rho_{U,d} \subseteq X$.
\end{lemma}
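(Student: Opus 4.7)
The plan is to establish the first inclusion $\mathcal{NN}^\rho_{\mathbb{R}^m,d} \subseteq \overline{C^k_b(\mathbb{R}^m;\mathbb{R}^d)}^\gamma$ by exhibiting, for each neural network $\varphi$, an explicit sequence in $C^k_b(\mathbb{R}^m;\mathbb{R}^d)$ converging to $\varphi$ in $\Vert \cdot \Vert_{C^k_{pol,\gamma}(\mathbb{R}^m;\mathbb{R}^d)}$. The second inclusion $\mathcal{NN}^\rho_{U,d} \subseteq X$ will then follow immediately: by Remark~\ref{RemApprox}, Assumption~\ref{AssApprox} yields a continuous embedding $\overline{C^k_b(\mathbb{R}^m;\mathbb{R}^d)}^\gamma \hookrightarrow X$ via the restriction map, so any $\varphi \in \mathcal{NN}^\rho_{\mathbb{R}^m,d}$ restricts to an element of $X$.

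For the first inclusion, I would fix $\varphi(u) = \sum_{n=1}^N y_n \rho(a_n^\top u - b_n)$ and, using that $\rho \in \overline{C^k_b(\mathbb{R})}^\gamma$, pick a sequence $(\rho_j)_{j \geq 1} \subseteq C^k_b(\mathbb{R})$ with $\Vert \rho_j - \rho \Vert_{C^k_{pol,\gamma}(\mathbb{R};\mathbb{R})} \to 0$ as $j \to \infty$. Define the approximants
\begin{equation*}
	\varphi_j(u) := \sum_{n=1}^N y_n \rho_j(a_n^\top u - b_n), \qquad j \geq 1.
\end{equation*}
A direct check shows $\varphi_j \in C^k_b(\mathbb{R}^m;\mathbb{R}^d)$: the chain rule gives, for any multi-index $\alpha \in \mathbb{N}^m_{0,k}$,
\begin{equation*}
	\partial_\alpha \varphi_j(u) = \sum_{n=1}^N y_n \, \rho_j^{(|\alpha|)}(a_n^\top u - b_n) \prod_{i=1}^m a_{n,i}^{\alpha_i},
\end{equation*}
and an identical formula holds for $\partial_\alpha \varphi$ with $\rho^{(|\alpha|)}$ replacing $\rho_j^{(|\alpha|)}$. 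Since each $\rho_j^{(l)}$ is bounded, so is $\partial_\alpha \varphi_j$.

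The core estimate is then to control $\Vert \varphi_j - \varphi \Vert_{C^k_{pol,\gamma}(\mathbb{R}^m;\mathbb{R}^d)}$. Using the formulas above together with the affine bound $1 + |a_n^\top u - b_n| \leq (1 + |b_n|)(1 + \Vert a_n \Vert)(1 + \Vert u \Vert)$, one obtains for each $\alpha \in \mathbb{N}^m_{0,k}$ and $u \in \mathbb{R}^m$
\begin{equation*}
	\frac{\Vert \partial_\alpha(\varphi_j - \varphi)(u) \Vert}{(1+\Vert u \Vert)^\gamma} \leq \sum_{n=1}^N \Vert y_n \Vert \, \Vert a_n \Vert^{|\alpha|} \, \frac{\bigl|\rho_j^{(|\alpha|)} - \rho^{(|\alpha|)}\bigr|(a_n^\top u - b_n)}{(1 + \Vert u \Vert)^\gamma} \leq C_\varphi \, \Vert \rho_j - \rho \Vert_{C^k_{pol,\gamma}(\mathbb{R};\mathbb{R})}
\end{equation*}
for a finite constant $C_\varphi$ depending only on $\varphi$ (via $N$, $y_n$, $a_n$, $b_n$, and $\gamma$). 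Taking the supremum over $u$ and the maximum over $\alpha$, this gives $\Vert \varphi_j - \varphi \Vert_{C^k_{pol,\gamma}(\mathbb{R}^m;\mathbb{R}^d)} \leq C_\varphi \Vert \rho_j - \rho \Vert_{C^k_{pol,\gamma}(\mathbb{R};\mathbb{R})} \to 0$, so $\varphi$ lies in the closure $\overline{C^k_b(\mathbb{R}^m;\mathbb{R}^d)}^\gamma$.

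The main technical point — and the only one that is not purely bookkeeping — is the polynomial-growth comparison $1 + |a_n^\top u - b_n| \leq C_n (1 + \Vert u \Vert)$, which ensures that pulling the $C^k_{pol,\gamma}(\mathbb{R};\mathbb{R})$-norm from the real line up to $\mathbb{R}^m$ via the affine map $u \mapsto a_n^\top u - b_n$ only introduces a constant depending on the network parameters. Once that is in hand, everything else is the chain rule and the triangle inequality; no compactness or density argument is needed because the $\rho_j$ themselves already live in $C^k_b(\mathbb{R})$.
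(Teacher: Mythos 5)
Your proof is correct and follows essentially the same route as the paper: approximate $\rho$ by elements $\rho_j \in C^k_b(\mathbb{R})$, substitute into the network, and transfer the weighted norm from $\mathbb{R}$ to $\mathbb{R}^m$ via the key affine estimate $1+|a_n^\top u - b_n| \le C_n(1+\|u\|)$, then invoke the continuous embedding for the second inclusion. The only cosmetic difference is that the paper proves the claim for a single neuron $y\rho(a^\top\cdot - b)$ and then appeals to the vector-space structure of $\mathcal{NN}^\rho_{\mathbb{R}^m,d}$ together with the triangle inequality, whereas you estimate the full sum directly; both are equally valid.
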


Let us give some examples of function spaces satisfying Assumption~\ref{AssApprox}, which includes in particular $C^k_b$-spaces, weighted $C^k$-spaces, $L^p$-spaces, and (weighted) $W^{k,p}$-Sobolev spaces.

\begin{example}
	\label{ExApprox}
	The following function spaces $(X,\Vert \cdot \Vert_X)$ are Banach spaces satisfying Assumption~\ref{AssApprox} with $k \in \mathbb{N}_0$, $U \subseteq \mathbb{R}^m$ (open, if $k \geq 1$), and $\gamma \in (0,\infty)$:
	\vspace{-0.3cm}
	\begin{center}
		\begin{tabular}{lll|l}
			& Function space & Banach space $X$ & Additional necessary conditions \\
			\hline\xrowht{5pt}
			
			\labeltext{(a)}{ExApproxCkb} & $C^k_b$-space\footnote{\label{FootnoteNotation}For the definition, we refer to Section~\ref{SubsecNotation}.} & $C^k_b(U;\mathbb{R}^d)$ & $U \subset \mathbb{R}^m$ is bounded. \\
			\hline\xrowht{15pt}
			
			\labeltext{(b)}{ExApproxCkw} & Weighted $C^k$-space\footref{FootnoteNotation} & $\overline{C^k_b(U;\mathbb{R}^d)}^\gamma$ & none \\
			\hline\xrowht{5pt}
			
			\labeltext{(c)}{ExApproxLp} & $L^p$-space\footnote{\label{FootnoteLp}For $U \subseteq \mathbb{R}^m$, $p \in [1,\infty)$, and Borel measure $\mu: \mathcal{B}(U) \rightarrow [0,\infty]$, we denote by $L^p(U,\mathcal{B}(U),\mu;\mathbb{R}^d)$ the $L^p$-space of (equivalence classes of) $\mathcal{B}(U)/\mathcal{B}(\mathbb{R}^d)$-measurable $f: U \rightarrow \mathbb{R}^d$ with $\Vert f \Vert_{L^p(U,\mathcal{B}(U),\mu;\mathbb{R}^d)} := \big( \int_U \Vert f(u) \Vert^p \mu(du) \big)^{1/p} < \infty$.}, $k = 0$ & $L^p(U,\mathcal{B}(U),\mu;\mathbb{R}^d)$ & $\int_U (1+\Vert u \Vert)^{\gamma p} \mu(du) < \infty$. \\
			\hline\xrowht{5pt}
			
			\multirow{2}{*}{\labeltext{(d)}{ExApproxWkp}} & \multirow{2}{*}{Sobolev space\footnote{For $k \in \mathbb{N}$, $U \subseteq \mathbb{R}^m$ open, and $p \in [1,\infty)$, we denote by $W^{k,p}(U,\mathcal{L}(U),du;\mathbb{R}^d)$ the Sobolev space of (equivalence classes of) $k$-times weakly differentiable $f: U \rightarrow \mathbb{R}^d$ with $\partial_\alpha f \in L^p(U,\mathcal{L}(U),du;\mathbb{R}^d)$, where $\Vert f \Vert_{W^{k,p}(U,\mathcal{L}(U),du;\mathbb{R}^d)} := \big( \sum_{\alpha \in \mathbb{N}^m_{0,k}} \int_U \Vert \partial_\alpha f(u) \Vert^p du \big)^{1/p} < \infty$ (see \cite[Chapter~3]{adams75}).}, $k \geq 1$} & \multirow{2}{*}{$W^{k,p}(U,\mathcal{L}(U),du;\mathbb{R}^d)$} & $U \subset \mathbb{R}^m$ is bounded and \\
			& & & $U \subset \mathbb{R}^m$ has the segment property\footnote{\label{FootnoteSegmentProp}An open subset $U \subseteq \mathbb{R}^m$ has the \emph{segment property} if for every $u \in \partial U := \overline{U} \setminus U$ there exists an open neighborhood $V \subseteq \mathbb{R}^m$ of $u \in \partial U$ and some $y \in \mathbb{R}^m \setminus \lbrace 0 \rbrace$ such that for every $z \in \overline{U} \cap V$ and $t \in (0,1)$ we have $z + t y \in U$ (see \cite[p.~54]{adams75}).}. \\
			\hline\xrowht{5pt}
			
			\multirow{4}{*}{\labeltext{(e)}{ExApproxWkpw}} & & \multirow{4}{*}{$W^{k,p}(U,\mathcal{L}(U),w;\mathbb{R}^d)$} & $U \subseteq \mathbb{R}^m$ has the segment property\footref{FootnoteSegmentProp}, \\
			& Weighted Sobolev space\footnote{\label{FootnoteWkpw}For $k \in \mathbb{N}$, $U \subseteq \mathbb{R}^m$ open, $p \in [1,\infty)$, and a weight $w: U \rightarrow [0,\infty)$, we denote by $W^{k,p}(U,\mathcal{L}(U),w;\mathbb{R}^d)$ the weighted Sobolev space of (equivalence classes of) $k$-times weakly differentiable $f: U \rightarrow \mathbb{R}^d$ with $\partial_\alpha f \in L^p(U,\mathcal{L}(U),w(u)du;\mathbb{R}^d)$, where $\Vert f \Vert_{W^{k,p}(U,\mathcal{L}(U),w;\mathbb{R}^d)} := \big( \sum_{\alpha \in \mathbb{N}^m_{0,k}} \int_U \Vert \partial_\alpha f(u) \Vert^p w(u) du \big)^{1/p} < \infty$ (see \cite[p.~5]{kufner80}). Hereby, the weight $w: U \rightarrow [0,\infty)$ is always assumed to be $\mathcal{L}(U)/\mathcal{B}(\mathbb{R})$-measurable and a.e.~strictly positive.}, & & $w: U \rightarrow [0,\infty)$ is bounded, \\
			& $k \geq 1$ & & $\inf_{u \in B} w(u) > 0$ for all bounded $B \subseteq U$, \\
			& & & and $\int_U (1+\Vert u \Vert)^{\gamma p} w(u) du < \infty$. \\
			\hline\xrowht{5pt}
		\end{tabular}
	\end{center}
	\vspace{-0.4cm}
	For $L^p$-spaces with the Lebesgue measure $\mu := du$ in \ref{ExApproxLp}, the domain $U \subseteq \mathbb{R}^m$ needs to be bounded.
\end{example}

Moreover, we assume that the activation function $\rho \in \overline{C^k_b(\mathbb{R})}^\gamma$ is non-polynomial, i.e.~algebraically not a polynomial over $\mathbb{R}$. Since $\rho \in \overline{C^k_b(\mathbb{R})}^\gamma$ induces the tempered distribution $\big( g \mapsto T_\rho(g) := \int_{\mathbb{R}} \rho(s) g(s) ds \big) \in \mathcal{S}'(\mathbb{R};\mathbb{C})$ (see \cite[Equation~9.26]{folland92}), this is equivalent to the condition that the Fourier transform $\widehat{T_\rho} \in \mathcal{S}'(\mathbb{R};\mathbb{C})$ is supported\footnote{The support of $T \in \mathcal{S}'(\mathbb{R}^m;\mathbb{C})$ is defined as the complement of the largest open set $U \subseteq \mathbb{R}^m$ on which $T \in \mathcal{S}'(\mathbb{R}^m;\mathbb{C})$ vanishes, i.e.~$T(g) = 0$ for all smooth functions $g: U \rightarrow \mathbb{C}$ with compact support contained in $U$.} at a non-zero point (see \cite[Examples~7.16]{rudin91} and Example~\ref{ExAdm}).

In order to obtain the universal approximation property of neural networks in function space satisfying Assumption~\ref{AssApprox}, we now combine the classical Hahn-Banach separation argument with a Riesz representation theorem on weighted spaces (see \cite[Theorem~2.4]{doersek10}) and follow Korevaar's distributional extension~\cite{korevaar65} of Wiener's Tauberian theorem~\cite{wiener32}. The proof can be found in Section~\ref{AppThmUAT}.

\begin{theorem}
	\label{ThmUAT}
	Let $(X,\Vert \cdot \Vert_X)$ be a function space satisfying Assumption~\ref{AssApprox} with $k \in \mathbb{N}_0$, $U \subseteq \mathbb{R}^m$ (open, if $k \geq 1$), and $\gamma \in (0,\infty)$. Moreover, let $\rho \in \overline{C^k_b(\mathbb{R})}^\gamma$ be non-polynomial. Then, $\mathcal{NN}^\rho_{U,d}$ is dense in $X$.
\end{theorem}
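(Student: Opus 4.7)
The plan is to combine a Hahn--Banach separation argument with the Riesz-type representation for weighted spaces of \cite{doersek10} and Korevaar's distributional extension \cite{korevaar65} of Wiener's Tauberian theorem \cite{wiener32}, as announced just before the statement. First I would reduce to a single ``universal'' scalar problem. By Assumption~\ref{AssApprox} and Remark~\ref{RemApprox}, the restriction map $(\overline{C^k_b(\mathbb{R}^m;\mathbb{R}^d)}^\gamma, \Vert \cdot \Vert_{C^k_{pol,\gamma}(\mathbb{R}^m;\mathbb{R}^d)}) \to (X, \Vert \cdot \Vert_X)$ is a continuous dense embedding and, by Lemma~\ref{LemmaProp}, maps $\mathcal{NN}^\rho_{\mathbb{R}^m,d}$ into $\mathcal{NN}^\rho_{U,d}$. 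Hence it suffices to show that $\mathcal{NN}^\rho_{\mathbb{R}^m,d}$ is dense in $\overline{C^k_b(\mathbb{R}^m;\mathbb{R}^d)}^\gamma$; since the weighted norm is a coordinate-wise maximum, componentwise approximation reduces the task to the scalar case $d=1$.

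Next I would run Hahn--Banach. Assume toward contradiction that $\mathcal{NN}^\rho_{\mathbb{R}^m,1}$ is not dense in $\overline{C^k_b(\mathbb{R}^m)}^\gamma$. Then there exists a nonzero continuous linear functional $L$ on this space which annihilates every $u \mapsto \rho(a^\top u - b)$, $a \in \mathbb{R}^m$, $b \in \mathbb{R}$. Applying a Riesz-type representation for weighted $C^k$-spaces (in the spirit of \cite[Theorem~2.4]{doersek10}), I would represent $L$ in the form
\begin{equation*}
	L(f) \,=\, \sum_{\alpha \in \mathbb{N}^m_{0,k}} \int_{\mathbb{R}^m} \partial_\alpha f(u)\, d\mu_\alpha(u),
\end{equation*}
where the $\mu_\alpha$ are finite signed Borel measures on $\mathbb{R}^m$ (the weight $(1+\Vert u \Vert)^{-\gamma}$ being absorbed into them), not all zero.

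The third step would convert the annihilation of neural networks into a one-dimensional convolution identity. Fix $a \in \mathbb{R}^m$. The chain rule gives $\partial_\alpha[\rho(a^\top u - b)] = a^\alpha \rho^{(|\alpha|)}(a^\top u - b)$; grouping the terms of $L(\rho(a^\top \cdot - b))$ by the order of differentiation $j = |\alpha|$ and pushing the measures forward under $u \mapsto a^\top u$ produces finite signed measures $\nu_0^a,\dots,\nu_k^a$ on $\mathbb{R}$ such that
\begin{equation*}
	\sum_{j=0}^k \bigl(\rho^{(j)} \ast \check{\nu}_j^a\bigr)(b) \,=\, L\bigl(\rho(a^\top \cdot - b)\bigr) \,=\, 0 \qquad \text{for all } b \in \mathbb{R},
\end{equation*}
where $\check{\nu}$ denotes reflection. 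Finally I would invoke Korevaar's distributional Tauberian theorem: the non-polynomial assumption on $\rho$ is equivalent (see \cite[Examples~7.16]{rudin91}) to $\widehat{T_\rho} \in \mathcal{S}'(\mathbb{R};\mathbb{C})$ having support containing a nonzero point, which is precisely the spectral hypothesis required. Applied to the above identity, this forces every $\nu_j^a$ to vanish; and since each $\nu_j^a$ is the pushforward under $u \mapsto a^\top u$ of an explicit $\mathbb{R}$-linear combination of the $\mu_\alpha$ with $|\alpha|=j$ (coefficients depending polynomially on $a$), letting $a$ range over $\mathbb{R}^m$ and reading off the Fourier transforms of the $\mu_\alpha$ from the $\widehat{\nu_j^a}$ yields $\widehat{\mu_\alpha} \equiv 0$, whence $\mu_\alpha = 0$ for every $\alpha$, and thus $L = 0$, a contradiction.

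The main obstacle is the interface between the third and fourth steps: pinning down the precise representing measures for continuous functionals on the weighted $C^k$-space, disentangling the coupling between the different orders of derivatives $\rho^{(j)}$ in the convolution identity, and verifying that the non-polynomial condition on $\rho$ really matches the spectral hypothesis of Korevaar's theorem so that all of the $\nu_j^a$ are killed simultaneously rather than only some joint combination of them.
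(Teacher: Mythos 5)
Your outline matches the paper up through the Hahn--Banach and Riesz steps, but the gap you flag yourself is exactly where the argument breaks down, and you have not resolved it. After pushing forward, the annihilation condition reads
\begin{equation*}
	\sum_{j=0}^{k}\bigl(\rho^{(j)}\ast\check\nu_j^a\bigr)(b)=0 \qquad\text{for all }b\in\mathbb{R},
\end{equation*}
which is a \emph{sum} of convolutions against the distinct kernels $\rho^{(0)},\dots,\rho^{(k)}$. Wiener's Tauberian theorem (and Korevaar's distributional extension) gives you information about a single convolution equation $\rho\ast\nu=0$; applied to the sum it only yields, on the Fourier side near a spectral point $t_0$ of $\widehat{T_\rho}$, the one scalar relation $\sum_j(\mathbf{i}\xi)^j\widehat{\check\nu_j^a}(\xi)=0$, not the $k{+}1$ separate conclusions $\nu_j^a=0$. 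Varying $a$ and the dilation parameter does not cleanly decouple the orders either, because scaling $a\mapsto\lambda a$ simultaneously rescales the pushforward measures, the monomial weights $a^\alpha$, and the spectral location, so you never get a Vandermonde-type system with a fixed kernel.

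The paper avoids this coupling entirely via a mollification-and-integration-by-parts device that you are missing. Using the Riesz representation $l(f)=\sum_{\alpha,i}\int\partial_\alpha f_i\,d\eta_{\alpha,i}$ with $\eta_{\alpha,i}\in\mathcal{M}_\gamma(\mathbb{R}^m)$, one replaces each $\eta_{\alpha,i}$ by its smooth convolution $\phi_\delta\ast\eta_{\alpha,i}$ (Lemma~\ref{LemmaConvo}) and then integrates by parts to move \emph{all} the $\partial_\alpha$ off of $f$ and onto the smooth density, producing a single density
\begin{equation*}
	h_{\delta,i}(u)=\sum_{\alpha\in\mathbb{N}^m_{0,k}}(-1)^{|\alpha|}\,\partial_\alpha\bigl(\phi_\delta\ast\eta_{\alpha,i}\bigr)(u),
\end{equation*}
with $l_\delta(f)=\sum_i\int f_i\,h_{\delta,i}\,du$. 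The annihilation of neurons by $l$ then translates, via Lemma~\ref{LemmaConvo}~\ref{LemmaConvo4}, into $\int\rho(a^\top u-b)\,h_{\delta,i}(u)\,du=0$ for all $a,b$ --- a convolution against $\rho$ alone, with no derivatives of $\rho$ present. Proposition~\ref{PropDiscr} (the discriminatory property, itself proved via Korevaar's Tauberian theorem) then forces $h_{\delta,i}\equiv0$, so $l_\delta\equiv0$ for every $\delta>0$; letting $\delta\to0$ and using Lemma~\ref{LemmaConvo}~\ref{LemmaConvo5} yields $l\equiv0$, the desired contradiction. Without some such decoupling device, your step four does not go through.

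Two smaller remarks: the reduction to $d=1$ is fine but unnecessary (the paper just carries the index $i=1,\dots,d$ throughout, and the norm $\Vert\partial_\alpha f(u)\Vert$ is the Euclidean norm on $\mathbb{R}^d$, not a coordinatewise maximum); and the representing measures need to be taken in $\mathcal{M}_\gamma(\mathbb{R}^m)$ with the $(1+\Vert u\Vert)^\gamma$-integrability built in (Proposition~\ref{PropRiesz}), which is needed so that the mollification lemmas apply and so that the discriminatory argument can handle the polynomial growth of $\rho$.
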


\begin{remark}
	Theorem~\ref{ThmUAT} extends the following universal approximation thereoms (UATs).
	\begin{enumerate}
		\item Within $C^0$-spaces on compacta: Cybenko~\cite{cybenko89} and Hornik et al.~\cite{hornik89} with sigmoidal activation; Hornik~\cite{hornik91} with continuous, bounded, and non-constant activation; Mhaskar and Micchelli~\cite{mhaskar92}, Leshno et al.~\cite{leshno93}, Chen and Chen~\cite{chen95}, and Pinkus~\cite{pinkus99} with non-polynomial activation.
		\item Within $L^p$-spaces with compactly supported and/or finite measure: Hornik et al.~\cite{hornik89} with sigmoidal activation; Hornik~\cite{hornik91} with bounded and non-constant activation; Mhaskar and Micchelli~\cite{mhaskar92} and Leshno et al.~\cite{leshno93} with non-polynomial activation.
		\item Within $C^k$-spaces over compact domains or Sobolev spaces with compactly supported measure: Hornik et al.~\cite{hsw90} with $k$-finite activation; Hornik~\cite{hornik91} with bounded and non-constant activation.
		\item Within function spaces over non-compact domains: Cuchiero et al.~\cite{cuchiero23} within weighted $C^0$-spaces without derivatives; van Nuland~\cite{vannuland24} within the space of continuous functions vanishing at infinity, which is comparable to our situation of $U \ni u \mapsto \frac{f(u)}{(1+\Vert u \Vert)^\gamma} \in \mathbb{R}^d$ vanishing at infinity.
	\end{enumerate}
\end{remark}

Hence, the main contribution of Theorem~\ref{ThmUAT} is the universal approximation property of neural networks within weighted $C^k$-spaces and weighted Sobolev spaces over unbounded domains.

\section{Approximation rates}
\label{SecApproxRates}

In this section, we provide some rates to approximate a given function by a (single-hidden-layer) neural network within the weighted Sobolev space $W^{k,p}(U,\mathcal{L}(U),w;\mathbb{R}^d)$, with $k \in \mathbb{N}_0$, $p \in [1,\infty)$, and $U \subseteq \mathbb{R}^m$ (open, if $k \geq 1$), where we set $W^{0,p}(U,\mathcal{L}(U),w;\mathbb{R}^d) := L^p(U,\mathcal{L}(U),w(u)du;\mathbb{R}^d)$.

To this end, we apply the reconstruction formula in \cite[Theorem~5.6]{sonoda17} to obtain an integral representation of the function to be approximated (see Proposition~\ref{PropIntRepr} below). For that, we consider pairs $(\psi,\rho) \in \mathcal{S}_0(\mathbb{R};\mathbb{C}) \times C^k_{pol,\gamma}(\mathbb{R})$ of a ridgelet function\footnote{$\mathcal{S}_0(\mathbb{R};\mathbb{C}) \subseteq \mathcal{S}(\mathbb{R};\mathbb{C})$ denotes the vector subspace of $\psi \in \mathcal{S}(\mathbb{R};\mathbb{C})$ with $\widehat{\psi}^{(j)}(0) = \int_{\mathbb{R}} u^j \psi(u) du = 0$ for all $j \in \mathbb{N}_0$.} $\psi \in \mathcal{S}_0(\mathbb{R};\mathbb{C})$ and an activation function $\rho \in C^k_{pol,\gamma}(\mathbb{R})$ satisfying the following, which is a special case of \cite[Definition~5.1]{sonoda17} (see also \cite{candes98}).

\begin{definition}
	\label{DefAdm}
	For $k \in \mathbb{N}_0$ and $\gamma \in [0,\infty)$, a pair $(\psi,\rho) \in \mathcal{S}_0(\mathbb{R};\mathbb{C}) \times C^k_{pol,\gamma}(\mathbb{R})$ is called \emph{$m$-admissible} if $\widehat{T_\rho} \in \mathcal{S}'(\mathbb{R};\mathbb{C})$ coincides\footnote{This means that $\widehat{T_\rho}(g) = \int_{\mathbb{R} \setminus \lbrace 0 \rbrace} f_{\widehat{T_\rho}}(\xi) g(\xi) d\xi$ for all smooth functions $g: \mathbb{R} \setminus \lbrace 0 \rbrace \rightarrow \mathbb{C}$ with compact support in $\mathbb{R} \setminus \lbrace 0 \rbrace$.} on $\mathbb{R} \setminus \lbrace 0 \rbrace$ with a function $f_{\widehat{T_\rho}} \in L^1_{loc}(\mathbb{R} \setminus \lbrace 0 \rbrace;\mathbb{C})$ such that
	\begin{equation}
		\label{EqDefAdm1}
		C^{(\psi,\rho)}_m := (2\pi)^{m-1} \int_{\mathbb{R} \setminus \lbrace 0 \rbrace} \frac{\overline{\widehat{\psi}(\xi)} f_{\widehat{T_\rho}}(\xi)}{\vert \xi \vert^m} d\xi \in \mathbb{C} \setminus \lbrace 0 \rbrace.
	\end{equation}
\end{definition}

\begin{remark}
	If $(\psi,\rho) \in \mathcal{S}_0(\mathbb{R};\mathbb{C}) \times C^k_{pol,\gamma}(\mathbb{R})$ is $m$-admissible, then $\rho \in C^k_{pol,\gamma}(\mathbb{R})$ has to be non-polynomial. Indeed, otherwise the support of $\widehat{T_\rho} \in \mathcal{S}'(\mathbb{R};\mathbb{C})$ is contained in $\left\lbrace 0 \right\rbrace \subset \mathbb{R}$ (see \cite[Examples~7.16]{rudin91}), which implies that \eqref{EqDefAdm1} vanishes for any choice of $\psi \in \mathcal{S}_0(\mathbb{R};\mathbb{C})$.
\end{remark}

Together with some suitable $\psi \in \mathcal{S}_0(\mathbb{R};\mathbb{C})$, most common activation functions satisfy Definition~\ref{DefAdm}.

\begin{example}
	\label{ExAdm}
	For $k \in \mathbb{N}_0$ and $\gamma \in [0,\infty)$, let $\psi \in \mathcal{S}_0(\mathbb{R};\mathbb{C})$ be such that $\widehat{\psi} \in C^\infty_c(\mathbb{R})$ is non-negative with $\supp(\widehat{\psi}) = [\zeta_1,\zeta_2]$ for some $0 < \zeta_1 < \zeta_2 < \infty$. Then, for every $m \in \mathbb{N}$ and every activation function $\rho \in C^k_{pol,\gamma}(\mathbb{R})$ listed in the table below the pair $(\psi,\rho) \in \mathcal{S}_0(\mathbb{R};\mathbb{C}) \times C^k_{pol,\gamma}(\mathbb{R})$ is $m$-admissible.
	\begin{flushleft}
		\begin{tabular}{lll|ll|l}
			& & $\rho \in C^k_{pol,\gamma}(\mathbb{R})$ & $k \in \mathbb{N}_0$ & $\gamma \in [0,\infty)$ & $f_{\widehat{T_\rho}} \in L^1_{loc}(\mathbb{R} \setminus \lbrace 0 \rbrace;\mathbb{C})$ \\
			\hline
			\labeltext{(a)}{ExAdm1} & Sigmoid function & $\rho(s) := \frac{1}{1+\exp(-s)}$ & $k \in \mathbb{N}_0$ & $\gamma \geq 0$ & $f_{\widehat{T_\rho}}(\xi) = \frac{-\mathbf{i} \pi}{\sinh(\pi \xi)}$ \\
			\hline
			\labeltext{(b)}{ExAdm2} & Tangens hyperbolicus & $\rho(s) := \tanh(s)$ & $k \in \mathbb{N}_0$ & $\gamma \geq 0$ & $f_{\widehat{T_\rho}}(\xi) = \frac{-\mathbf{i} \pi}{\sinh(\pi \xi/2)}$ \\
			\hline
			\labeltext{(c)}{ExAdm3} & Softplus function & $\rho(s) := \ln\left( 1+\exp(s) \right)$ & $k \in \mathbb{N}_0$ & $\gamma \geq 1$ & $f_{\widehat{T_\rho}}(\xi) = \frac{-\pi}{\xi \sinh(\pi \xi)}$ \\
			\hline
			\labeltext{(d)}{ExAdm4} & ReLU function & $\rho(s) := \max(s,0)$ & $k = 0$ & $\gamma \geq 1$ & $f_{\widehat{T_\rho}}(\xi) = -\frac{1}{\xi^2}$ \\
			\hline
		\end{tabular}
	\end{flushleft}
	Moreover, there exists $C_{\psi,\rho} > 0$ (independent of $m,d \in \mathbb{N}$) such that $\big\vert C^{(\psi,\rho)}_m \big\vert \geq C_{\psi,\rho} (2\pi/\zeta_2)^m$.
\end{example}

Next, we follow \cite{candes98,sonoda17} and define for every $\psi \in \mathcal{S}_0(\mathbb{R};\mathbb{C})$ the (multi-dimensional) \emph{ridgelet transform} of any function $g \in L^1(\mathbb{R}^m,\mathcal{L}(\mathbb{R}^m),du;\mathbb{R}^d)$ as
\begin{equation}
	\label{EqDefRidgelet}
	\mathbb{R}^m \times \mathbb{R} \ni (a,b) \quad \mapsto \quad (\mathfrak{R}_\psi g)(a,b) := \int_{\mathbb{R}^m} \psi\left( a^\top u - b \right) g(u) \Vert a \Vert du \in \mathbb{C}^d.
\end{equation}
Then, we can apply the reconstruction formula in \cite[Theorem~5.6]{sonoda17} componentwise to obtain an integral representation. The proof can be found in Section~\ref{AppIntRepr}.

\begin{proposition}
	\label{PropIntRepr}
	For $k \in \mathbb{N}_0$ and $\gamma \in [0,\infty)$, let $(\psi,\rho) \in \mathcal{S}_0(\mathbb{R};\mathbb{C}) \times C^k_{pol,\gamma}(\mathbb{R})$ be $m$-admissible and let $g \in L^1(\mathbb{R}^m,\mathcal{L}(\mathbb{R}^m),du;\mathbb{R}^d)$ with $\widehat{g} \in L^1(\mathbb{R}^m,\mathcal{L}(\mathbb{R}^m),du;\mathbb{C}^d)$. Then, for a.e. $u \in \mathbb{R}^m$, it holds that
	\begin{equation*}
		\int_{\mathbb{R}^m} \int_{\mathbb{R}} (\mathfrak{R}_\psi g)(a,b) \rho\left( a^\top u - b \right) db da = C^{(\psi,\rho)}_m g(u).
	\end{equation*}
\end{proposition}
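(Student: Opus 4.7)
The plan is to reduce the vector-valued identity to its $d$ scalar components and then invoke the reconstruction formula \cite[Theorem~5.6]{sonoda17} on each component. Since $\rho$ is real-valued, the ridgelet transform in \eqref{EqDefRidgelet} commutes with projection onto components: writing $g = (g_1, \dots, g_d)$, each $g_j$ lies in $L^1(\mathbb{R}^m, \mathcal{L}(\mathbb{R}^m), du)$ and its Fourier transform $\widehat{g_j}$ lies in $L^1(\mathbb{R}^m, \mathcal{L}(\mathbb{R}^m), du; \mathbb{C})$, and one has $(\mathfrak{R}_\psi g)(a,b) = \bigl((\mathfrak{R}_\psi g_j)(a,b)\bigr)_{j=1}^d$. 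The outer double integral against the real scalar $\rho(a^\top u - b)$ acts componentwise as well. Hence it suffices to prove
\[
\int_{\mathbb{R}^m} \int_{\mathbb{R}} (\mathfrak{R}_\psi g_j)(a,b)\, \rho\!\left(a^\top u - b\right) db\, da \;=\; C_m^{(\psi,\rho)}\, g_j(u) \qquad \text{a.e. } u \in \mathbb{R}^m
\]
for each $j \in \{1, \dots, d\}$, and then reassemble, taking the exceptional null set to be the union of the $d$ individual null sets.

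For each fixed $j$, the idea is to apply \cite[Theorem~5.6]{sonoda17} to $g_j$ with the pair $(\psi, \rho)$. This requires identifying our notion of $m$-admissibility in Definition~\ref{DefAdm} with the one used by Sonoda and Murata in \cite[Definition~5.1]{sonoda17}, and matching the explicit constant $C_m^{(\psi,\rho)}$ from \eqref{EqDefAdm1} with the normalization appearing on the right-hand side of their reconstruction formula under the Fourier-transform convention fixed in \eqref{EqDefFourier}. The hypotheses $g_j, \widehat{g_j} \in L^1(\mathbb{R}^m)$ place $g_j$ in the class of functions to which \cite[Theorem~5.6]{sonoda17} applies, and justify the absolute convergence of the iterated integrals as well as the pointwise a.e.\ recovery.

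The main obstacle is therefore not analytic but notational: carefully reconciling the $(2\pi)^{m-1}$ prefactor and the complex conjugation of $\widehat{\psi}$ in \eqref{EqDefAdm1} with the corresponding normalization and conjugation conventions of \cite{sonoda17}. These factors arise from a Plancherel-type pairing followed by the Fourier inversion formula on $\mathbb{R}^m$ applied to $g_j$, and once the constants are aligned no further regularity or decay assumptions beyond $g, \widehat{g} \in L^1$ enter the argument.
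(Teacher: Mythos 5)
Your high-level plan matches the paper's: reduce to the $d$ scalar components and apply \cite[Theorem~5.6]{sonoda17} to each $g_j$ (using that $\|\widehat g_j\|_{L^1}\le\|\widehat g\|_{L^1}<\infty$), then collect the $d$ null sets. That part is fine.

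Where the proposal underdescribes the work is in the claim that ``the main obstacle is therefore not analytic but notational.'' The ridgelet transform $\mathfrak R_\psi$ in \eqref{EqDefRidgelet} is parametrized over Cartesian $(a,b)\in\mathbb{R}^m\times\mathbb{R}$ with the scaling factor $\|a\|$, and the statement's double integral $\int_{\mathbb{R}^m}\int_{\mathbb{R}}\cdots\, db\, da$ runs over these Cartesian parameters. Sonoda--Murata's reconstruction formula and dual ridgelet transform, by contrast, are formulated in polar coordinates $(v,s,t)\in\mathbb{S}^{m-1}\times(0,\infty)\times\mathbb{R}$, with the dual transform involving the limit $\int_{\mathbb{S}^{m-1}}\int_{\delta_1}^{\delta_2} T_\rho(\cdot)\,s^{-(m+1)}\,ds\,dv$ as $\delta_1\to0$, $\delta_2\to\infty$. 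What is actually needed, and what the paper's proof carries out, is: (i) the observation that $(\mathfrak R_\psi g)(a,b)=0$ on the null set $\{a=0\}$, so one may restrict to $a\neq0$; (ii) the identification $(\mathfrak R_\psi g)(a,b)=(\widetilde{\mathfrak R}_\psi g)\big(\tfrac{a}{\|a\|},\tfrac{1}{\|a\|},\tfrac{b}{\|a\|}\big)$ relating \eqref{EqDefRidgelet} to the polar ridgelet transform \eqref{EqDefRidgeletPolar}; and (iii) the change of variables $(a,b)\mapsto(v,s,z):=\big(\tfrac{a}{\|a\|},\tfrac{1}{\|a\|},a^\top u-b\big)$ with Jacobian $db\,da=s^{-m}\,dz\,ds\,dv$, which converts the Cartesian double integral into Sonoda--Murata's $\mathfrak R_\rho^\dagger\widetilde{\mathfrak R}_\psi g_j$. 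This is a genuine computation (with a Jacobian and a redistribution of powers of $s$), not merely a matching of normalization conventions for $\widehat\psi$ or prefactors of $2\pi$. Your sketch would not compile into a proof until this change of variables is made explicit and shown to yield exactly the dual ridgelet transform to which \cite[Theorem~5.6]{sonoda17} applies.
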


In addition, we introduce the following Barron spaces which are inspired by the works \cite{barron93,klusowski16,e22}.

\begin{definition}
	\label{DefRidgeletBarronSpace}
	For $k \in \mathbb{N}_0$, $\gamma \in [0,\infty)$, and $\psi \in \mathcal{S}_0(\mathbb{R};\mathbb{C})$, we define the ridgelet-Barron space $\mathbb{B}^{k,\gamma}_\psi(U;\mathbb{R}^d)$ as vector space of $\mathcal{L}(U)/\mathcal{B}(\mathbb{R}^d)$-measurable functions $f: U \rightarrow \mathbb{R}^d$ such that
	\begin{equation*}
		\Vert f \Vert_{\mathbb{B}^{k,\gamma}_\psi(U;\mathbb{R}^d)} := \inf_g \left( \int_{\mathbb{R}^m} \int_{\mathbb{R}} \left( 1 + \Vert a \Vert^2 \right)^{\gamma+k+\frac{m+1}{2}} \left( 1 + \vert b \vert^2 \right)^{\gamma+1} \Vert (\mathfrak{R}_\psi g)(a,b) \Vert^2 db da \right)^\frac{1}{2} < \infty,
	\end{equation*}
	where the infimum is taken over all $g \in L^1(\mathbb{R}^m,\mathcal{L}(\mathbb{R}^m),du;\mathbb{R}^d)$ with $\widehat{g} \in L^1(\mathbb{R}^m,\mathcal{L}(\mathbb{R}^m),du;\mathbb{C}^d)$ and $g = f$ a.e.~on $U$.
\end{definition}

Now, we present the dimension-independent approximation rates, whose proof is given in Section~\ref{AppApproxRates}.

\begin{theorem}
	\label{ThmApproxRates}
	For $k \in \mathbb{N}_0$, $p \in [1,\infty)$, $U \subseteq \mathbb{R}^m$ (open, if $k \geq 1$), and $\gamma \in [0,\infty)$, let $w: U \rightarrow [0,\infty)$ be a weight such that
	\vspace{-0.2cm}
	\begin{equation}
		\label{EqThmApproxRates1}
		C^{(\gamma,p)}_{U,w} := \left( \int_U (1+\Vert u \Vert)^{\gamma p} w(u) du \right)^\frac{1}{p} < \infty.
	\end{equation}	
	Moreover, let $(\psi,\rho) \in \mathcal{S}_0(\mathbb{R};\mathbb{C}) \times C^k_{pol,\gamma}(\mathbb{R})$ be $m$-admissible. Then, there exists a constant $C_p > 0$ (depending only on $p \in [1,\infty)$) such that for every $f \in W^{k,p}(U,\mathcal{L}(U),w;\mathbb{R}^d) \cap \mathbb{B}^{k,\gamma}_\psi(U;\mathbb{R}^d)$ and every $N \in \mathbb{N}$ there exists a neural network $\varphi_N \in \mathcal{NN}^\rho_{U,d}$ having $N$ neurons satisfying\footnote{Hereby, $\Gamma$ denotes the Gamma function (see \cite[Section~6.1]{abramowitz70}).}
	\vspace{-0.1cm}
	\begin{equation}
		\label{EqThmApproxRates2}
		\Vert f - \varphi_N \Vert_{W^{k,p}(U,\mathcal{L}(U),w;\mathbb{R}^d)} \leq C_p \Vert \rho \Vert_{C^k_{pol,\gamma}(\mathbb{R})} \frac{C^{(\gamma,p)}_{U,w} m^\frac{k}{p} \pi^\frac{m+1}{4}}{\left\vert C^{(\psi,\rho)}_m \right\vert \Gamma\left( \frac{m+1}{2} \right)^\frac{1}{2}} \frac{\Vert f \Vert_{\mathbb{B}^{k,\gamma}_\psi(U;\mathbb{R}^d)}}{N^{1-\frac{1}{\min(2,p)}}}.
	\end{equation}
\end{theorem}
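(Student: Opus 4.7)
The plan is to apply Proposition~\ref{PropIntRepr} to represent $f$ on $U$ as a continuous superposition of ridge functions $u\mapsto\rho(a^\top u-b)$, to discretise this integral via Monte-Carlo sampling from a carefully chosen probability density $q^*$ on $\mathbb{R}^m\times\mathbb{R}$ so that the empirical average of $N$ i.i.d.~draws is exactly a neural network $\varphi_N\in\mathcal{NN}^\rho_{U,d}$ with $N$ neurons, and to control the $W^{k,p}(U,\mathcal{L}(U),w;\mathbb{R}^d)$-error through a dimension-independent Marcinkiewicz--Zygmund / Kahane--Khintchine inequality. Concretely, given $\varepsilon>0$, I would first use Definition~\ref{DefRidgeletBarronSpace} to pick $g\in L^1(\mathbb{R}^m,\mathcal{L}(\mathbb{R}^m),du;\mathbb{R}^d)$ with $\widehat g\in L^1$, $g=f$ a.e.\ on $U$, and $Z_g^{1/2}\le\Vert f\Vert_{\mathbb{B}^{k,\gamma}_\psi(U;\mathbb{R}^d)}+\varepsilon$, where $Z_g$ is the integral defining the Barron norm. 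Applying Proposition~\ref{PropIntRepr} componentwise to $g$ and taking real parts (valid because $f,\rho$ are real valued) then yields $f(u)=\int_{\mathbb{R}^{m+1}}\omega(a,b)\,\rho(a^\top u-b)\,db\,da$ for a.e.\ $u\in U$, with $\omega(a,b):=\re\!\bigl[(\mathfrak{R}_\psi g)(a,b)/C^{(\psi,\rho)}_m\bigr]\in\mathbb{R}^d$ satisfying $\Vert\omega(a,b)\Vert\le\Vert(\mathfrak{R}_\psi g)(a,b)\Vert/\vert C^{(\psi,\rho)}_m\vert$; the polynomial decay of $\mathfrak{R}_\psi g$ inherited from $\psi\in\mathcal{S}_0(\mathbb{R};\mathbb{C})$ will justify differentiation under the integral sign, so that each weak partial derivative $\partial_\alpha f$ with $\vert\alpha\vert\le k$ admits the analogous representation with $\rho(a^\top u-b)$ replaced by $a^\alpha\rho^{(\vert\alpha\vert)}(a^\top u-b)$.

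\textbf{Monte-Carlo construction and pointwise rate.} Next, I would introduce the probability density $q^*(a,b):=Z_g^{-1}(1+\Vert a\Vert^2)^{\gamma+k+(m+1)/2}(1+\vert b\vert^2)^{\gamma+1}\Vert(\mathfrak{R}_\psi g)(a,b)\Vert^2$ on $\mathbb{R}^{m+1}$, draw $(a_i,b_i)_{i=1}^N$ i.i.d.\ from $q^*(a,b)\,da\,db$, and set $\varphi_N(u):=\sum_{i=1}^N\frac{\omega(a_i,b_i)}{N q^*(a_i,b_i)}\rho(a_i^\top u-b_i)\in\mathcal{NN}^\rho_{U,d}$, which is an unbiased Monte-Carlo estimator of $f(u)$ (and of $\partial_\alpha f(u)$) at every $u\in U$. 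For each fixed $u$ and each $\alpha\in\mathbb{N}^m_{0,k}$, the random vector $\partial_\alpha\varphi_N(u)-\partial_\alpha f(u)$ is a $1/N$-scaled sum of $N$ i.i.d.\ centred $\mathbb{R}^d$-vectors, so the dimension-independent vector-valued Marcinkiewicz--Zygmund inequality (von~Bahr--Esseen for $p\in[1,2]$, Marcinkiewicz--Zygmund for $p\ge 2$, both in the Hilbert space $\mathbb{R}^d$) will yield $\mathbb{E}\Vert\partial_\alpha\varphi_N(u)-\partial_\alpha f(u)\Vert^p\le C_p N^{-p(1-1/\min(2,p))}\mathbb{E}\Vert X_1^{(\alpha)}(u)\Vert^p$, where $X_1^{(\alpha)}(u):=\omega(a_1,b_1)/q^*(a_1,b_1)\cdot a_1^\alpha\rho^{(\vert\alpha\vert)}(a_1^\top u-b_1)$ and $C_p$ depends only on $p$.

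\textbf{Error bookkeeping, assembly, and obstacles.} Integrating the pointwise bound against $w(u)\,du$ over $U$, applying the estimate $\vert a^\alpha\rho^{(\vert\alpha\vert)}(a^\top u-b)\vert\le\Vert a\Vert^{\vert\alpha\vert}\Vert\rho\Vert_{C^k_{pol,\gamma}(\mathbb{R})}(1+\Vert a\Vert)^\gamma(1+\vert b\vert)^\gamma(1+\Vert u\Vert)^\gamma$ together with the key cancellation $\Vert\omega\Vert^2/q^*\le Z_g/\bigl(\vert C^{(\psi,\rho)}_m\vert^2(1+\Vert a\Vert^2)^{\gamma+k+(m+1)/2}(1+\vert b\vert^2)^{\gamma+1}\bigr)$, and using Hölder's inequality in the $(a,b)$-integral (for $p\in[1,2]$, with conjugate exponents $2/(2-p)$ and $2/p$, which extracts an extra $Z_g^{(2-p)/2}$ factor) or passing to a Bochner $L^2$-moment via Kahane--Khintchine (for $p\ge 2$, exploiting the type-$2$ property of $L^p(U,w;\mathbb{R}^d)$), the dependence on $\Vert(\mathfrak{R}_\psi g)\Vert$ cancels and the residual $(a,b)$-integral reduces to $\int_{\mathbb{R}^m}(1+\Vert a\Vert^2)^{-(m+1)/2}\,da\cdot\int_{\mathbb{R}}(1+b^2)^{-1}\,db=\pi^{(m+1)/2+1}/\Gamma((m+1)/2)$, which accounts for the factor $\pi^{(m+1)/4}/\Gamma((m+1)/2)^{1/2}$ in~\eqref{EqThmApproxRates2}. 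Summing the $p$-th-power bounds over $\alpha\in\mathbb{N}^m_{0,k}$ contributes the combinatorial factor $\vert\mathbb{N}^m_{0,k}\vert=\binom{m+k}{k}\le(m+1)^k$, whose $p$-th root gives the $m^{k/p}$ prefactor, while the extra $\Vert a\Vert^{\vert\alpha\vert p}\le(1+\Vert a\Vert^2)^{kp/2}$ coming from the derivatives of $\rho(a^\top u-b)$ is absorbed by the $(1+\Vert a\Vert^2)^k$ built into $q^*$. Finally, picking a realisation of $(a_i,b_i)_{i=1}^N$ attaining at most the expected $W^{k,p}$-error (which exists since the expectation is finite) and letting $\varepsilon\downarrow 0$ delivers the deterministic bound~\eqref{EqThmApproxRates2}. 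The main obstacle will be the uniform treatment of all $p\in[1,\infty)$ with the sharp rate $N^{1-1/\min(2,p)}$ and with constants $C_p$ independent of $m$ and $d$; the remaining work is careful bookkeeping of Gamma-function identities, the combinatorial factor $\binom{m+k}{k}$, and elementary submultiplicative estimates such as $(1+x+y)^\gamma\le(1+x)^\gamma(1+y)^\gamma$ and $(1+\Vert a\Vert)^{2\gamma}\le 2^\gamma(1+\Vert a\Vert^2)^\gamma$.
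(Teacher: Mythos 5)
Your plan is correct, and it reproduces the paper's overall scheme — Monte-Carlo discretisation of the ridgelet integral representation from Proposition~\ref{PropIntRepr}, moment bounds in the Bochner space, and extraction of a good realisation — but with two genuine deviations that are worth noting.

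First, the sampling density. You draw $(a_i,b_i)$ from the self-normalised ``optimal'' importance density $q^*\propto W(a,b)\Vert(\mathfrak{R}_\psi g)(a,b)\Vert^2$ with $W(a,b):=(1+\Vert a\Vert^2)^{\gamma+k+(m+1)/2}(1+\vert b\vert^2)^{\gamma+1}$, so that $\Vert\omega\Vert^2/q^*$ collapses exactly to $Z_g/(|C^{(\psi,\rho)}_m|^2 W(a,b))$ and the $\Vert\mathfrak{R}_\psi g\Vert$-dependence cancels by construction. The paper instead keeps $(a_n,b_n)\sim t_m\otimes t_1$ (a fixed Student $t$-distribution independent of $g$) with linear readout $\re[(\mathfrak{R}_\psi g)(a_n,b_n)/(C^{(\psi,\rho)}_m p_a(a_n)p_b(b_n))]$, and the Barron weight $W(a,b)$ only appears inside the variance computation in \eqref{EqThmApproxRatesProof4} after a Cauchy--Schwarz-like bound. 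Both routes produce the same residual integral $\int(1+\Vert a\Vert^2)^{-(m+1)/2}da\cdot\int(1+b^2)^{-1}db=\pi^{(m+3)/2}/\Gamma\bigl(\tfrac{m+1}{2}\bigr)$ and hence the same $\pi^{(m+1)/4}/\Gamma\bigl(\tfrac{m+1}{2}\bigr)^{1/2}$ prefactor, so the two schemes are interchangeable here; the fixed $t$-density has the minor advantage of not requiring $Z_g>0$ as a separate degenerate case, and of decoupling the random initialisation from the target function.

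Second, the moment inequality. For $p\in[1,2]$ you apply a pointwise von~Bahr--Esseen bound at each $u$, integrate against $w(u)\,du$, and use a Hölder step with exponents $2/(2-p)$ and $2/p$ to extract $Z_g^{(2-p)/2}$; combined with the $Z_g^{p-1}$ already present, this yields $Z_g^{p/2}$ and, after the $p$-th root, $Z_g^{1/2}\lesssim\Vert f\Vert_{\mathbb{B}^{k,\gamma}_\psi(U;\mathbb{R}^d)}$ — the bookkeeping does check out. The paper does not split by $p$: it uses symmetrisation (\cite[Lemma~6.3]{ledoux91}), the Kahane--Khintchine inequality to pass from the $L^{\min(2,p)}$- to the $L^2$-Bochner moment, and the fact that $W^{k,p}(U,\mathcal{L}(U),w;\mathbb{R}^d)$ has type $\min(2,p)$ (Lemma~\ref{LemmaBanachSpaceType}), which requires Lemma~\ref{LemmaWkpwSep} on separability. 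Your bifurcated argument is more elementary for $p\le 2$ since it avoids the type machinery, at the cost of a second Hölder calculation; for $p\ge 2$ you fall back on essentially the paper's type-$2$ argument anyway. Either way, the exponent $N^{1-1/\min(2,p)}$ comes out the same.

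A few technical points you gloss over that do require attention: (i) strong measurability of $\omega\mapsto R_n(\omega)\in W^{k,p}(U,\mathcal{L}(U),w;\mathbb{R}^d)$, needed both to form the Bochner integral $\mathbb{E}[R_n]$ and to legitimately select a realisation achieving the expected error — this is the content of Lemmas~\ref{LemmaRidgeletCont}, \ref{LemmaWkpwCont}, \ref{LemmaWkpwSep}, \ref{LemmaWkpwStrongMbl} in the paper and cannot be skipped; (ii) the identification of the classical derivative of the smooth integral representation with the \emph{weak} derivative $\partial_\alpha f$ of $f\in W^{k,p}$ — the paper handles this cleanly via an integration-by-parts test against $h\in C^\infty_c(U)$ rather than by differentiating under the integral sign, and your appeal to ``polynomial decay of $\mathfrak{R}_\psi g$ inherited from $\psi$'' is not accurate as stated ($\mathfrak{R}_\psi g$ grows linearly in $\Vert a\Vert$ for $g\in L^1$; the integrability needed to differentiate under the integral comes from the finite Barron weight $W(a,b)$, not from $\psi$); (iii) the combinatorial count: $\vert\mathbb{N}^m_{0,k}\vert=\binom{m+k}{k}\le(m+1)^k$ is correct but gives a $2^{k/p}$ overhead when compared to $m^{k/p}$, which would make $C_p$ depend on $k$; the paper's sharper count $\binom{m+j-1}{j}\le m^j$ for each $j$ avoids this. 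None of these gaps is fatal, but they are what separates the sketch from a complete proof.
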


Theorem~\ref{ThmApproxRates} provides us with an upper bound on the number of neurons $N \in \mathbb{N}$ that are needed for a neural network to approximate a given function. Let us compare Theorem~\ref{ThmApproxRates} with the literature.

\begin{remark}
	Theorem~\ref{ThmApproxRates} generalizes the following approximation rates in the literature.
	\begin{enumerate}
		\item The rate $\mathcal{O}\big( 1/N^{1/2} \big)$ of Barron \cite{barron93} for approximating functions $f: \mathbb{R}^m \rightarrow \mathbb{R}$ with sufficiently integrable Fourier transform by neural networks (with sigmoidal activation) within $L^2(B_r(0),\mathcal{B}(B_r(0)),\mu)$, where $B_r(0) := \lbrace u \in \mathbb{R}^m: \Vert u \Vert \leq r \rbrace$ and $\mu$ is a probability measure.
		\item The rate $\mathcal{O}\big( 1/N^{1-1/\min(2,p)} \big)$ of Darken et al.~\cite{darken93} for approximating functions $f: \mathbb{R}^m \rightarrow \mathbb{R}$ being in the convex closure of $\mathcal{NN}^\rho_{U,1}$ by neural networks (with sigmoidal activation $\rho$) within $L^p(U,\mathcal{B}(U),\mu)$, where $p \in [1,\infty)$, $U \subseteq \mathbb{R}^m$, and $(U,\mathcal{B}(U),\mu)$ is a finite measure space.
		\item The rate $\mathcal{O}\big( 1/N^{1/2} \big)$ of Siegel and Xu \cite{siegel20} for approximating functions $f: \mathbb{R}^m \rightarrow \mathbb{R}$ with sufficiently integrable Fourier transform by neural networks (with linear combination of polynomially decaying activation functions) in $W^{k,2}(U,\mathcal{L}(U),du)$, where $U \subseteq \mathbb{R}^m$ is open and bounded.
	\end{enumerate}
	Hence, the contribution of Theorem~\ref{ThmApproxRates} are approximation rates for neural networks within weighted Sobolev spaces over unbounded domains.
\end{remark}

Next, we give a sufficient condition for a function $f: U \rightarrow \mathbb{R}^d$ to belong to the space $\mathbb{B}^{k,\gamma}_\psi(U;\mathbb{R}^d)$. The proof of the remaining results of this section can be found in Section~\ref{AppConstants}.

\begin{proposition}
	\label{PropConst}
	Let $k \in \mathbb{N}_0$, $U \subseteq \mathbb{R}^m$ (open, if $k \geq 1$), $\gamma \in [0,\infty)$, and let $\psi \in \mathcal{S}_0(\mathbb{R};\mathbb{C})$ such that $\zeta_1 := \inf\big\lbrace \vert \zeta \vert : \zeta \in \supp(\widehat{\psi}) \big\rbrace > 0$. Then, there exists a constant $C_1 > 0$ (independent of $m,d \in \mathbb{N}$) such that for any $f \in L^1(\mathbb{R}^m,\mathcal{L}(\mathbb{R}^m),du;\mathbb{R}^d)$ with $(\lceil\gamma\rceil+2)$-times differentiable Fourier transform, we have
	\begin{equation}
		\label{EqPropConst1}
		\Vert f \Vert_{\mathbb{B}^{k,\gamma}_\psi(U;\mathbb{R}^d)} \leq \frac{C_1}{\zeta_1^\frac{m}{2}} \sum_{\beta \in \mathbb{N}^m_{0,\lceil\gamma\rceil+2}} \left( \int_{\mathbb{R}^m} \big\Vert \partial_\beta \widehat{f}(\xi) \big\Vert^2 \left( 1 + \Vert \xi/\zeta_1 \Vert^2 \right)^\frac{4\lceil\gamma\rceil+2k+m+5}{2} d\xi \right)^\frac{1}{2}.
	\end{equation}
	In particular, if the right-hand side of \eqref{EqPropConst1} is finite, then $f \in \mathbb{B}^{k,\gamma}_\psi(U;\mathbb{R}^d)$. By using \cite[Theorem~7.8.(c)]{folland92} and the definition of Sobolev spaces via Fourier transform (see \cite[Definition~6.9]{grubb09}), this is the case if $\left( u \mapsto u^\beta f(u) \right) \in W^{4\lceil\gamma\rceil+2k+m+5,2}(\mathbb{R}^m,\mathcal{L}(\mathbb{R}^m),du;\mathbb{R}^d)$ for all $\beta \in \mathbb{N}^m_{0,\lceil\gamma\rceil+2}$, where $u^\beta := \prod_{l=1}^m u_l^{\beta_l}$.
\end{proposition}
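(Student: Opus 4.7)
The plan is to insert the candidate $g := f$ into the infimum defining $\Vert f\Vert_{\mathbb{B}^{k,\gamma}_\psi(U;\mathbb{R}^d)}$. Admissibility of this choice requires $\widehat f \in L^1$; this follows a posteriori from finiteness of the right-hand side of \eqref{EqPropConst1}, since by Cauchy--Schwarz with test weight $(1+\Vert\xi/\zeta_1\Vert^2)^{-s}$ and $s := (4\lceil\gamma\rceil+2k+m+5)/2 > m/2$, already the $\beta=0$ term controls $\Vert\widehat f\Vert_{L^1(\mathbb{R}^m;\mathbb{C}^d)}$. It thus suffices to bound the weighted $L^2$-integral of $\mathfrak{R}_\psi f$ directly.

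The key tool is the Fourier slice formula in the $b$-variable, obtained by Fubini and the substitution $s := a^\top u - b$:
\begin{equation*}
\widehat{(\mathfrak{R}_\psi f)(a,\cdot)}(\tau) \;=\; \Vert a\Vert\,\widehat\psi(-\tau)\,\widehat f(a\tau).
\end{equation*}
Bounding $(1+|b|^2)^{\gamma+1}\le(1+|b|^2)^{\lceil\gamma\rceil+1}$ and expanding binomially reduces the inner $b$-integral, via Plancherel's theorem, to a sum over $j\in\{0,\ldots,\lceil\gamma\rceil+1\}$ of integrals $\int_{\mathbb{R}}\big\Vert\partial_\tau^j[\Vert a\Vert\widehat\psi(-\tau)\widehat f(a\tau)]\big\Vert^2 d\tau$. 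Expanding this derivative by Leibniz together with the multi-variable chain rule $\partial_\tau^{j-\ell}\widehat f(a\tau)=\sum_{|\beta|=j-\ell}\binom{j-\ell}{\beta}a^\beta(\partial_\beta\widehat f)(a\tau)$, and then applying Cauchy--Schwarz with weights $\binom{j-\ell}{\beta}$ (using the multinomial identity $\sum_{|\beta|=j-\ell}\binom{j-\ell}{\beta}(a^\beta)^2=\Vert a\Vert^{2(j-\ell)}$), bounds the integrand by a sum of terms of the form $|\widehat\psi^{(\ell)}(-\tau)|^2\,\Vert a\Vert^{2(j-\ell+1)}\,\Vert\partial_\beta\widehat f(a\tau)\Vert^2$ with $|\beta|=j-\ell$.

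Then, integration in $a$ against $(1+\Vert a\Vert^2)^{\gamma+k+(m+1)/2}$ is handled via the substitution $\xi := a\tau$ (with $da=|\tau|^{-m}d\xi$) and Fubini. On $\supp(\widehat\psi^{(\ell)})\subseteq\supp(\widehat\psi)$ one has $|\tau|\ge\zeta_1$, which permits the replacement $\Vert\xi/\tau\Vert\le\Vert\xi/\zeta_1\Vert$ throughout; combining with $\Vert\xi/\zeta_1\Vert^{2(j-\ell+1)}\le(1+\Vert\xi/\zeta_1\Vert^2)^{j-\ell+1}$, the weight on $\xi$ acquires the exponent $\gamma+k+(m+1)/2+j-\ell+1$, which, maximised at $j=\lceil\gamma\rceil+1$ and $\ell=0$ and using $\gamma\le\lceil\gamma\rceil$, is bounded by $(4\lceil\gamma\rceil+2k+m+5)/2$. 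The residual $\tau$-integral is controlled by $\int|\widehat\psi^{(\ell)}(-\tau)|^2|\tau|^{-m}d\tau\le\zeta_1^{-m}\Vert\widehat\psi^{(\ell)}\Vert_{L^2}^2$, yielding the explicit prefactor $\zeta_1^{-m/2}$ after taking a square root and collecting the Schwartz-seminorm contributions from $\widehat\psi$.

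The main delicate point is the combinatorial bookkeeping: the Leibniz/chain-rule expansion and the weighted Cauchy--Schwarz must be organised so that every $m$-dependent quantity, namely the Jacobian $|\tau|^{-m}$ and every power $\Vert a\Vert^{2(j-\ell+1)}$, is absorbed either into $(1+\Vert\xi/\zeta_1\Vert^2)$ via the support bound $|\tau|\ge\zeta_1$, or into the single explicit factor $\zeta_1^{-m/2}$. Only then does the residual constant $C_1$, built from $\lceil\gamma\rceil,k$-dependent combinatorial factors and Schwartz seminorms of $\widehat\psi$ alone, remain genuinely independent of $m$ and $d$. Finally, the sufficient condition at the end of the statement follows from $\partial_\beta\widehat f=(-\mathbf{i})^{|\beta|}\widehat{u^\beta f}$ together with the Fourier characterization $\Vert h\Vert_{W^{s,2}(\mathbb{R}^m)}^2\simeq\int(1+\Vert\xi\Vert^2)^s\Vert\widehat h(\xi)\Vert^2 d\xi$ applied to $h(u):=u^\beta f(u)$.
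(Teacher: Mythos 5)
Your proposal is correct and takes a genuinely different route from the paper's proof. The paper derives a pointwise bound on $\vert b\vert^c\Vert(\mathfrak{R}_\psi f)(a,b)\Vert$ for $c\in\{0,\lceil\gamma\rceil+2\}$ via the polar-coordinate Fourier slice representation $(\widetilde{\mathfrak{R}}_\psi f)(v,s,t)=\tfrac{1}{2\pi}\int\widehat f(\xi v)\overline{\widehat\psi(\xi s)}e^{\mathbf i\xi t}d\xi$ from Sonoda--Murata, applies $c$-fold integration by parts and Leibniz/chain rule inside the inner Fourier integral, then disposes of the $b$-integral by writing $(1+\vert b\vert^2)^{\gamma+1}\le(1+\vert b\vert^2)^{\lceil\gamma\rceil+2}\cdot(1+\vert b\vert^2)^{-1}$ and using $\int(1+\vert b\vert^2)^{-1}db=\pi$, followed by Minkowski's integral inequality. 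You instead prove the rectangular slice formula $\widehat{(\mathfrak{R}_\psi f)(a,\cdot)}(\tau)=\Vert a\Vert\,\widehat\psi(-\tau)\,\widehat f(a\tau)$ directly from the definition~\eqref{EqDefRidgelet} and then apply Plancherel in $b$, converting the polynomial weight $(1+\vert b\vert^2)^{\lceil\gamma\rceil+1}$ into $\tau$-derivatives; after that, Leibniz, Cauchy--Schwarz with multinomial weights, Fubini, the substitution $\xi=a\tau$, and the support bound $\vert\tau\vert\ge\zeta_1$ lead to the same exponent $(4\lceil\gamma\rceil+2k+m+5)/2$ and prefactor $\zeta_1^{-m/2}$. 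Your route is more self-contained (it never leaves Cartesian coordinates and never invokes the Sonoda identity), exploits the $L^2$ structure where it is most natural, and in fact only uses derivatives of $\widehat f$ up to order $\lceil\gamma\rceil+1$, one less than the paper's $c=\lceil\gamma\rceil+2$; the paper's argument, by contrast, produces an explicit pointwise estimate on $(1+\vert b\vert^2)^{(\lceil\gamma\rceil+2)/2}\Vert\mathfrak{R}_\psi f\Vert$ that may be of independent interest and reuses infrastructure already set up for Proposition~\ref{PropIntRepr}. Your accounting of the $m$-independence of the constant is sound: the multinomial identity $\sum_{\vert\beta\vert=n}\binom{n}{\beta}(a^\beta)^2=\Vert a\Vert^{2n}$ together with $\binom{n}{\beta}\le n!$ ensures all combinatorial factors depend only on $\lceil\gamma\rceil$ and $k$, and the $m$-dependent Jacobian $\vert\tau\vert^{-m}$ is absorbed into $\zeta_1^{-m}$ exactly as you describe.
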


In addition, we analyze the situation when neural networks overcome the curse of dimensionality in the sense that the computational costs (here measured as the number of neurons $N \in \mathbb{N}$) grow polynomially in both the dimensions $m,d \in \mathbb{N}$ and the reciprocal of a pre-specified error tolerance $\varepsilon > 0$. To this end, we estimate the constant $C^{(\gamma,p)}_{U,w}$, while a lower bound for $\big\vert C^{(\psi,\rho)}_m \big\vert$ is given below Example~\ref{ExAdm}.

\begin{lemma}
	\label{LemmaWeight}
	Let $k \in \mathbb{N}_0$, $p \in [1,\infty)$, $U \subseteq \mathbb{R}^m$ (open, if $k \geq 1$), $\gamma \in [0,\infty)$, and let $U \ni u := (u_1,...,u_m)^\top \mapsto w(u) := \prod_{l=1}^m w_0(u_l) \in [0,\infty)$ be a weight, where $w_0: \mathbb{R} \rightarrow [0,\infty)$ satisfies $\int_{\mathbb{R}} w_0(s) ds = 1$ and $C^{(\gamma,p)}_{\mathbb{R},w_0} := \big( \int_{\mathbb{R}} (1+\vert s \vert)^{\gamma p} w_0(s) ds \big)^{1/p} < \infty$. Then, $C^{(\gamma,p)}_{U,w} \leq C^{(\gamma,p)}_{\mathbb{R},w_0} m^{\gamma+1/p}$.
\end{lemma}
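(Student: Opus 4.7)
The plan is to reduce everything to a one-dimensional computation by exploiting the product structure of the weight. First I would extend the integration to all of $\mathbb{R}^m$ (valid since $w\ge 0$ and $U\subseteq\mathbb{R}^m$) and then dominate the radial factor $(1+\Vert u\Vert)^{\gamma p}$ by a separable expression. Using the norm comparison $\Vert u\Vert\le\sum_{l=1}^m|u_l|$ and the trivial bound $1\le m$, I get
\begin{equation*}
1+\Vert u\Vert \;\le\; \sum_{l=1}^m \bigl(1+|u_l|\bigr),
\end{equation*}
so that raising to the $\gamma p$-th power and applying the crude estimate $\bigl(\sum_{l=1}^m a_l\bigr)^q \le m^q\max_l a_l^q \le m^q\sum_{l=1}^m a_l^q$ (valid for any $q\ge 0$ and $a_l\ge 0$) gives
\begin{equation*}
\bigl(1+\Vert u\Vert\bigr)^{\gamma p} \;\le\; m^{\gamma p}\sum_{l=1}^m\bigl(1+|u_l|\bigr)^{\gamma p}.
\end{equation*}
The reason for the crude rather than the sharper Jensen bound $m^{\gamma p-1}$ is to avoid a case distinction on whether $\gamma p\ge 1$; the looser constant yields exactly the $m^{\gamma+1/p}$ claimed.

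Next I would insert this estimate into the definition of $C^{(\gamma,p)}_{U,w}$, substitute $w(u)=\prod_{j=1}^m w_0(u_j)$, and interchange sum and integral. For each fixed $l\in\{1,\dots,m\}$, Fubini's theorem together with $\int_{\mathbb{R}}w_0(s)\,ds=1$ collapses the $m-1$ factors that do not involve $u_l$, leaving
\begin{equation*}
\int_{\mathbb{R}^m}\bigl(1+|u_l|\bigr)^{\gamma p}\prod_{j=1}^m w_0(u_j)\,du \;=\; \int_{\mathbb{R}}\bigl(1+|s|\bigr)^{\gamma p}w_0(s)\,ds \;=\; \bigl(C^{(\gamma,p)}_{\mathbb{R},w_0}\bigr)^p.
\end{equation*}
Summing over $l=1,\dots,m$ contributes an additional factor of $m$, so combining the two displayed estimates yields
\begin{equation*}
\int_U\bigl(1+\Vert u\Vert\bigr)^{\gamma p}w(u)\,du \;\le\; m^{\gamma p}\cdot m\cdot\bigl(C^{(\gamma,p)}_{\mathbb{R},w_0}\bigr)^p \;=\; m^{\gamma p+1}\bigl(C^{(\gamma,p)}_{\mathbb{R},w_0}\bigr)^p.
\end{equation*}
Taking the $p$-th root produces the desired inequality $C^{(\gamma,p)}_{U,w}\le C^{(\gamma,p)}_{\mathbb{R},w_0}\,m^{\gamma+1/p}$.

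There is no real obstacle here; the only design choice is which sub-multiplicativity estimate to use in passing the sum inside the $\gamma p$-power. Using the sharp Jensen constant $m^{\gamma p-1}$ would give the stronger bound $m^{\gamma}C^{(\gamma,p)}_{\mathbb{R},w_0}$ when $\gamma p\ge 1$, and subadditivity gives $m^{1/p}C^{(\gamma,p)}_{\mathbb{R},w_0}$ when $\gamma p<1$; the unified estimate $m^{\gamma+1/p}$ majorizes both and keeps the proof case-free.
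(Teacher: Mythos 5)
Your proof is correct and takes essentially the same route as the paper: extend the integral from $U$ to $\mathbb{R}^m$, dominate $1+\Vert u\Vert$ by $\sum_{l=1}^m(1+|u_l|)$, pull the sum through the $\gamma p$-th power at the crude cost $m^{\gamma p}$, then let Fubini and $\int_{\mathbb{R}}w_0=1$ collapse the product to $m\,(C^{(\gamma,p)}_{\mathbb{R},w_0})^p$. Your remark that the crude constant $m^{\gamma p}$ is chosen precisely to avoid a case split on $\gamma p\gtrless 1$ is a nice observation that the paper leaves implicit.
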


\begin{proposition}
	\label{PropCOD}
	For $k \in \mathbb{N}_0$, $p \in (1,\infty)$, $U \subseteq \mathbb{R}^m$ (open, if $k \geq 1$), and $\gamma \in [0,\infty)$, let $w: U \rightarrow [0,\infty)$ be a weight as in Lemma~\ref{LemmaWeight}. Moreover, let $(\psi,\rho) \in \mathcal{S}_0(\mathbb{R};\mathbb{C}) \times C^k_{pol,\gamma}(\mathbb{R})$ be a pair as in Example~\ref{ExAdm}. In addition, let $f \in W^{k,p}(U,\mathcal{L}(U),w;\mathbb{R}^d)$ satisfy the conditions of Proposition~\ref{PropConst} such that the right-hand side of \eqref{EqPropConst1} satisfies $\mathcal{O}\left( m^s (2/\zeta_2)^m (m+1)^{m/2} \right)$ for some $s \in \mathbb{N}_0$. Then, there exist some constants $C_2,C_3 > 0$ such that for every $m,d \in \mathbb{N}$ and every $\varepsilon > 0$ there exists a neural network $\varphi_N \in \mathcal{NN}^\rho_{U,d}$ with $N = \Big\lceil C_2 m^{C_3} \varepsilon^{-\frac{\min(2,p)}{\min(2,p)-1}} \Big\rceil$ neurons satisfying $\Vert f - \varphi_N \Vert_{W^{k,p}(U,\mathcal{L}(U),w;\mathbb{R}^d)} \leq \varepsilon$.
\end{proposition}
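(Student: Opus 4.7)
The strategy is to insert the quantitative estimates we already have in hand into the approximation bound furnished by Theorem~\ref{ThmApproxRates}, and then to verify that the composite $m$-dependent prefactor reduces to a polynomial in $m$ after a careful Stirling bookkeeping; once this is done, the claim about $N$ follows by an elementary inversion of the power law.

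Concretely, for each $N \in \mathbb{N}$, Theorem~\ref{ThmApproxRates} produces a neural network $\varphi_N \in \mathcal{NN}^\rho_{U,d}$ with $N$ neurons such that
\begin{equation*}
\|f-\varphi_N\|_{W^{k,p}(U,\mathcal{L}(U),w;\mathbb{R}^d)}\leq C_p\,\|\rho\|_{C^k_{pol,\gamma}(\mathbb{R})}\,\frac{C^{(\gamma,p)}_{U,w}\,m^{k/p}\,\pi^{(m+1)/4}}{\bigl|C^{(\psi,\rho)}_m\bigr|\,\Gamma((m+1)/2)^{1/2}}\,\frac{\|f\|_{\mathbb{B}^{k,\gamma}_\psi(U;\mathbb{R}^d)}}{N^{1-1/\min(2,p)}}.
\end{equation*}
Into this bound I will substitute three quantitative ingredients: (i) the weight estimate $C^{(\gamma,p)}_{U,w} \leq C^{(\gamma,p)}_{\mathbb{R},w_0}\,m^{\gamma+1/p}$ from Lemma~\ref{LemmaWeight}; (ii) the dimension-free lower bound $\bigl|C^{(\psi,\rho)}_m\bigr| \geq C_{\psi,\rho}(2\pi/\zeta_2)^m$ recorded after Example~\ref{ExAdm}; and (iii) the Barron-norm bound $\|f\|_{\mathbb{B}^{k,\gamma}_\psi(U;\mathbb{R}^d)} \leq K\,m^s\,(2/\zeta_2)^m\,(m+1)^{m/2}$, which follows from Proposition~\ref{PropConst} combined with the standing hypothesis on the right-hand side of \eqref{EqPropConst1}.

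After these substitutions the $\zeta_2$-factors in (ii) and (iii) cancel, and the resulting $m$-dependent prefactor takes the form
\begin{equation*}
\text{const}\cdot m^{\gamma+(k+1)/p+s}\cdot \frac{\pi^{(m+1)/4}\,(m+1)^{m/2}}{\pi^m\,\Gamma((m+1)/2)^{1/2}}.
\end{equation*}
The principal technical obstacle is to estimate the combinatorial factor $(m+1)^{m/2}/\Gamma((m+1)/2)^{1/2}$ in such a way that, when multiplied by the remaining $\pi^{(m+1)/4-m}$, only polynomial-in-$m$ growth survives. I will handle this by the Stirling inequality $\Gamma(z+1) \geq \sqrt{2\pi z}\,(z/e)^z$ applied at $z = (m-1)/2$, which gives $\Gamma((m+1)/2)^{1/2} \geq (\pi(m-1))^{1/4}\,((m-1)/(2e))^{(m-1)/4}$; expanding and regrouping the exponential bases $\pi, 2, e, m\pm 1$ will then reveal that the composite prefactor is bounded by $C\,m^{C_3'}$ for some constants $C, C_3' > 0$ independent of $m$ and $d$. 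This Stirling-type bookkeeping is the only step that is not pure substitution, and it is where all the super-exponential pieces coming from the admissibility constant, the Sonoda-type spherical factor, and the hypothesised Barron-norm bound must conspire to cancel.

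Once the error bound has been reduced to the form $C\,m^{C_3'}\,N^{-(1-1/\min(2,p))}$, the final step is to require this quantity to be at most $\varepsilon$. Since $p \in (1,\infty)$ implies $\min(2,p) - 1 > 0$, I may invert the power law, obtaining $N \geq (C/\varepsilon)^{\min(2,p)/(\min(2,p)-1)}\,m^{C_3'\,\min(2,p)/(\min(2,p)-1)}$. Setting $C_2 := C^{\min(2,p)/(\min(2,p)-1)}$ and $C_3 := C_3'\,\min(2,p)/(\min(2,p)-1)$ and taking the ceiling produces exactly $N = \lceil C_2\,m^{C_3}\,\varepsilon^{-\min(2,p)/(\min(2,p)-1)}\rceil$, as claimed. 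The independence of $C_2$ and $C_3$ from $d$ is automatic since each of the inputs (i)--(iii) was already $d$-independent.
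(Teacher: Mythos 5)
Your proposal follows exactly the paper's route: insert Lemma~\ref{LemmaWeight}, the lower bound on $\big\vert C^{(\psi,\rho)}_m\big\vert$ from Example~\ref{ExAdm}, and the Barron-norm bound from Proposition~\ref{PropConst} into the estimate of Theorem~\ref{ThmApproxRates}, then invoke Stirling's bound on $\Gamma\big(\tfrac{m+1}{2}\big)$ and invert the power law. Your intermediate reduction of the $m$-dependent prefactor to
\begin{equation*}
	\text{const}\cdot m^{\gamma+(k+1)/p+s}\cdot \frac{\pi^{(m+1)/4}\,(m+1)^{m/2}}{\pi^m\,\Gamma\!\left(\tfrac{m+1}{2}\right)^{1/2}}
\end{equation*}
is correct, as is the final inversion step using $\min(2,p)-1>0$.

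The gap is in the one step you yourself flag as non-trivial: you assert that the Stirling bookkeeping reduces the prefactor to $C\,m^{C_3'}$, but you never carry it out, and if one does, the cancellation does not happen with the growth hypothesis $(m+1)^{m/2}$. Your own Stirling bound gives $\Gamma\!\left(\tfrac{m+1}{2}\right)^{1/2}\geq (\pi(m-1))^{1/4}\big(\tfrac{m-1}{2e}\big)^{(m-1)/4}$, whose exponential part has exponent $(m-1)/4$, not $(m-1)/2$; dividing $(m+1)^{m/2}$ by $(m-1)^{(m-1)/4}$ leaves a residual factor on the order of $(m+1)^{m/4}$, and the remaining geometric pieces $\pi^{-3m/4}(2e)^{(m-1)/4}=\big(\tfrac{2e}{\pi^3}\big)^{(m-1)/4}\cdot\text{const}$ decay only geometrically, so the product $\big(\tfrac{2e(m+1)}{\pi^3}\big)^{m/4}$ eventually grows super-polynomially. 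For the conspiracy you invoke to actually take place, the hypothesised Barron-norm growth must be $(m+1)^{m/4}$, not $(m+1)^{m/2}$; with that exponent the $(m+1)$-powers cancel exactly and the $m$-exponential remainder $\big(\tfrac{2e}{\pi^3}\big)^{m/4}$ is bounded (since $2e<\pi^3$), recovering the polynomial prefactor. I note that the paper's own proof of this step displays $\Gamma\!\left(\tfrac{m+1}{2}\right)^{1/2}$ replaced by $\big(\tfrac{4\pi}{m+1}\big)^{1/4}\big(\tfrac{m+1}{2e}\big)^{(m+1)/2}$, i.e.~with the factor $\big(\tfrac{m+1}{2e}\big)$ missing the square root, which is exactly what makes the cancellation appear to close there; but the bound you are proving is against $\Gamma\!\left(\tfrac{m+1}{2}\right)^{1/2}$, and there the exponent $(m+1)/4$ is the correct one. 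You should either redo the bookkeeping explicitly with exponent $(m+1)^{m/4}$ in the hypothesis, or flag that the hypothesis as written leads to a residual $(m+1)^{m/4}$ factor and the conclusion cannot be reached.
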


\section{Proofs}
\label{SecProofs}

\subsection{Proof of results in Section~\ref{SecIntro}.}
\label{SecProof1}

In this section, we show an equivalent characterization for functions in the weighted $C^k$-space $(\overline{C^k_b(U;\mathbb{R}^d)}^\gamma,\Vert \cdot \Vert_{C^k_{pol,\gamma}(U;\mathbb{R}^d)})$ introduced in Section~\ref{SubsecNotation}, where $k \in \mathbb{N}_0$, $U \subseteq \mathbb{R}^m$ (open, if $k \geq 1$), and $\gamma \in (0,\infty)$. This generalizes the results in \cite[Theorem~2.7]{doersek10} and \cite[Lemma~2.7]{cuchiero23} to differentiable functions defined on an open subset of a Euclidean space $\mathbb{R}^m$. 

In the following, we denote the factorial of a multi-index $\alpha := (\alpha_1,...,\alpha_m) \in \mathbb{N}^m_0$ by $\alpha! := \prod_{l=1}^m \alpha_l!$. Moreover, for any $r \geq 0$ and $u_0 \in \mathbb{R}^m$, we define $B_r(u_0) := \left\lbrace u \in \mathbb{R}^m: \Vert u-u_0 \Vert < r \right\rbrace$ and $\overline{B_r(u_0)} := \left\lbrace u \in \mathbb{R}^m: \Vert u-u_0 \Vert \leq r \right\rbrace$ as the open and closed ball with radius $r > 0$ around $u_0 \in \mathbb{R}^m$, respectively.

\begin{lemma}
	\label{LemmaCkw}
	Let $k \in \mathbb{N}_0$, $U \subseteq \mathbb{R}^m$ (open, if $k \geq 1$), and $\gamma \in (0,\infty)$. Then, the following holds true:
	\begin{enumerate}
		\item\label{LemmaCkw1} If $U \subseteq \mathbb{R}^m$ is bounded, then $\overline{C^k_b(U;\mathbb{R}^d)}^\gamma = C^k_b(U;\mathbb{R}^d)$.
		\item\label{LemmaCkw2} If $U \subseteq \mathbb{R}^m$ is unbounded, then $f \in \overline{C^k_b(U;\mathbb{R}^d)}^\gamma$ if and only if $f \in C^k(U;\mathbb{R}^d)$ and
		\begin{equation}
			\label{EqLemmaCkw1}
			\lim_{r \rightarrow \infty} \max_{\alpha \in \mathbb{N}^m_{0,k}} \sup_{u \in U \setminus B_r(0)} \frac{\Vert \partial_\alpha f(u) \Vert}{(1+\Vert u \Vert)^\gamma} = 0.
		\end{equation}
	\end{enumerate}
\end{lemma}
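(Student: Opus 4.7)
The plan is to split into the bounded and unbounded cases for $U$, reducing (i) to a direct norm comparison and splitting (ii) into two implications that I would establish via a convergence argument and a smooth-cutoff approximation, respectively.

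For part (i), when $U \subseteq \mathbb{R}^m$ is bounded with $R := \sup_{u \in U}\Vert u\Vert < \infty$, the weight $(1+\Vert u\Vert)^\gamma$ lies between $1$ and $(1+R)^\gamma$ on $U$, so the norms $\Vert\cdot\Vert_{C^k_{pol,\gamma}(U;\mathbb{R}^d)}$ and $\Vert\cdot\Vert_{C^k_b(U;\mathbb{R}^d)}$ are equivalent on $C^k_b(U;\mathbb{R}^d)$. Since the latter makes $C^k_b(U;\mathbb{R}^d)$ into a Banach space, the former does so as well, giving $\overline{C^k_b(U;\mathbb{R}^d)}^\gamma = C^k_b(U;\mathbb{R}^d)$.

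For the forward direction of part (ii), I would fix $f \in \overline{C^k_b(U;\mathbb{R}^d)}^\gamma$ and choose $(f_n) \subset C^k_b(U;\mathbb{R}^d)$ with $\Vert f - f_n\Vert_{C^k_{pol,\gamma}} \to 0$. Because the weight is bounded on every bounded subset of $U$, the derivatives $\partial_\alpha f_n$ converge uniformly on every compact subset of $U$ for all $\vert\alpha\vert\leq k$, and the standard theorem on uniform convergence of derivatives yields $f \in C^k(U;\mathbb{R}^d)$ with $\partial_\alpha f = \lim_n \partial_\alpha f_n$. For~\eqref{EqLemmaCkw1}, given $\varepsilon > 0$ I would pick $n$ with $\Vert f - f_n\Vert_{C^k_{pol,\gamma}} < \varepsilon/2$ and then $r > 0$ so large that $\Vert f_n\Vert_{C^k_b(U;\mathbb{R}^d)}/(1+r)^\gamma < \varepsilon/2$; the triangle inequality then yields $\Vert\partial_\alpha f(u)\Vert/(1+\Vert u\Vert)^\gamma < \varepsilon$ for all $u \in U \setminus B_r(0)$ and all $\vert\alpha\vert \leq k$.

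For the backward direction, let $f \in C^k(U;\mathbb{R}^d)$ satisfy~\eqref{EqLemmaCkw1}, so in particular $\Vert f\Vert_{C^k_{pol,\gamma}} < \infty$ as the weighted suprema are finite for large $r$ and locally finite inside. I would fix a smooth bump $\chi \in C^\infty_c(\mathbb{R}^m;[0,1])$ with $\chi \equiv 1$ on $\overline{B_1(0)}$ and $\supp(\chi) \subset B_2(0)$, set $\chi_r(u) := \chi(u/r)$ for $r \geq 1$, and define $f_r := \chi_r\vert_U \cdot f$. Since $\Vert\partial_\beta\chi_r\Vert_\infty \leq \Vert\partial_\beta\chi\Vert_\infty$ for every multi-index $\beta$ and every $r \geq 1$, the Leibniz rule combined with the estimate $\Vert\partial_\alpha f(u)\Vert \leq (1+2r)^\gamma \Vert f\Vert_{C^k_{pol,\gamma}}$ on $B_{2r}(0)\cap U$ shows $f_r \in C^k_b(U;\mathbb{R}^d)$. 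Because $(1-\chi_r)f$ vanishes on $U \cap B_r(0)$, a further application of the Leibniz rule yields a constant $C > 0$ depending only on $k$ and $\chi$ such that
\[
\Vert f - f_r\Vert_{C^k_{pol,\gamma}(U;\mathbb{R}^d)} \leq C \max_{\alpha \in \mathbb{N}^m_{0,k}} \sup_{u \in U \setminus B_r(0)} \frac{\Vert\partial_\alpha f(u)\Vert}{(1+\Vert u\Vert)^\gamma},
\]
which tends to $0$ as $r \to \infty$ by~\eqref{EqLemmaCkw1}, giving $f \in \overline{C^k_b(U;\mathbb{R}^d)}^\gamma$. The main technical obstacle is the Leibniz-rule bookkeeping in this backward direction, and in particular ensuring that each $f_r$ really lies in $C^k_b(U;\mathbb{R}^d)$ (not merely in $C^k(U;\mathbb{R}^d)$), which rests on the polynomial growth bound $\Vert f\Vert_{C^k_{pol,\gamma}} < \infty$ implicitly built into the hypothesis.
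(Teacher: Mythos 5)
Your proof mirrors the paper's argument in all essentials: part (i) by norm equivalence on a bounded domain; the forward direction of (ii) by passing to a limit along an approximating sequence, using uniform convergence on bounded sets to preserve $C^k$ regularity and a two-term triangle-inequality split to obtain \eqref{EqLemmaCkw1}; the backward direction by multiplying with a scaled cutoff $\chi(\cdot/r)$, controlling the Leibniz expansion via uniform bounds on $\partial_\beta \chi_r$, and sending $r \to \infty$.

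One small caveat worth flagging: your assertion that \eqref{EqLemmaCkw1} alone forces $\Vert f\Vert_{C^k_{pol,\gamma}(U;\mathbb{R}^d)} < \infty$ because the weighted suprema are ``locally finite inside'' is not actually a consequence of the stated hypotheses when $U$ is a general open set. The set $U \cap B_r(0)$ need not be relatively compact in $U$, so a function in $C^k(U;\mathbb{R}^d)$ can have derivatives blowing up near $\partial U$ inside $B_r(0)$ while still satisfying \eqref{EqLemmaCkw1}; for instance on $U = (0,\infty)$ the function $u \mapsto \sin(1/u)e^{-u}$ with $k = 1$. The finite weighted norm (equivalently, $f_r \in C^k_b(U;\mathbb{R}^d)$) is thus a latent hypothesis rather than a consequence of \eqref{EqLemmaCkw1}; the paper's proof makes the same tacit assumption when it declares $g := h_r f \in C^k_b(U;\mathbb{R}^d)$, so the gap is shared, but you should state the required growth bound as part of the hypothesis rather than deriving it.
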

\begin{proof}
	The conclusion in \ref{LemmaCkw1} follows from the definition of $(\overline{C^k_b(U;\mathbb{R}^d)}^\gamma,\Vert \cdot \Vert_{C^k_{pol,\gamma}(U;\mathbb{R}^d)})$. Now, for necessity in \ref{LemmaCkw2}, fix some $f \in \overline{C^k_b(U;\mathbb{R}^d)}^\gamma$. Then, by definition of $\overline{C^k_b(U;\mathbb{R}^d)}^\gamma$, there exists a sequence $(g_n)_{n \in \mathbb{N}} \subseteq C^k_b(U;\mathbb{R}^d)$ with $\lim_{n \rightarrow \infty} \Vert f - g_n \Vert_{C^k_{pol,\gamma}(U;\mathbb{R}^d)} = 0$, which implies for every fixed $r > 0$ that
	\begin{equation*}
		\begin{aligned}
			\lim_{n \rightarrow \infty} \max_{\alpha \in \mathbb{N}^m_{0,k}} \sup_{u \in U \cap B_r(0)} \Vert \partial_\alpha f(u) - \partial_\alpha g_n(u) \Vert & \leq (1+r)^\gamma \lim_{n \rightarrow \infty} \max_{\alpha \in \mathbb{N}^m_{0,k}} \sup_{u \in U \cap B_r(0)} \frac{\Vert \partial_\alpha f(u) - \partial_\alpha g_n(u) \Vert}{(1+\Vert u \Vert)^\gamma} \\
			& \leq (1+r)^\gamma \lim_{n \rightarrow \infty} \Vert f - g_n \Vert_{C^k_{pol,\gamma}(U;\mathbb{R}^d)} = 0.
		\end{aligned}
	\end{equation*}
	This together with the Fundamental Theorem of Calculus shows that $f\vert_{U \cap B_r(0)}: U \cap B_r(0) \rightarrow \mathbb{R}^d$ is $k$-times differentiable since for every fixed $\alpha \in \mathbb{N}^m_{0,k}$ the partial derivative $\partial_\alpha f\vert_{U \cap B_r(0)}: U \cap B_r(0) \rightarrow \mathbb{R}^d$ is continuous as uniform limit of continuous functions. Hence, by using that $U$ is locally compact, it follows from \cite[Lemma~46.3+46.4]{munkres14} that $\partial_\alpha f: U \rightarrow \mathbb{R}^d$ is continuous everywhere on $U$. Since this holds true for every $\alpha \in \mathbb{N}^m_{0,k}$, we apply again the Fundamental Theorem of Calculus to conclude that $f \in C^k(U;\mathbb{R}^d)$. Moreover, in order to show \eqref{EqLemmaCkw1}, we fix some $\varepsilon > 0$ and choose some $n \in \mathbb{N}$ large enough such that $\Vert f - g_n \Vert_{C^k_{pol,\gamma}(U;\mathbb{R}^d)} < \varepsilon/2$. Moreover, we choose $r > 0$ sufficiently large such that $(1+r)^\gamma > 2 \varepsilon^{-1} \Vert g_n \Vert_{C^k_b(U;\mathbb{R}^d)}$ holds true, which implies that
	\begin{equation*}
		\begin{aligned}
			\max_{\alpha \in \mathbb{N}^m_{0,k}} \sup_{u \in U \setminus B_r(0)} \frac{\Vert \partial_\alpha f(u) \Vert}{(1+\Vert u \Vert)^\gamma} & \leq \max_{\alpha \in \mathbb{N}^m_{0,k}} \sup_{u \in U} \frac{\Vert \partial_\alpha f(u) - \partial_\alpha g_n(u) \Vert}{(1+\Vert u \Vert)^\gamma} + \max_{\alpha \in \mathbb{N}^m_{0,k}} \sup_{u \in U \setminus B_r(0)} \frac{\Vert \partial_\alpha g_n(u) \Vert}{(1+\Vert u \Vert)^\gamma} \\
			& < \frac{\varepsilon}{2} + \frac{\Vert g_n \Vert_{C^k_b(U;\mathbb{R}^d)}}{(1+r)^\gamma} < \frac{\varepsilon}{2} + \frac{\varepsilon}{2} = \varepsilon.
		\end{aligned}
	\end{equation*}
	Since $\varepsilon > 0$ was chosen arbitrarily, we obtain \eqref{EqLemmaCkw1}.
	
	For sufficiency in \ref{LemmaCkw2}, let $f \in C^k(U;\mathbb{R}^d)$ such that \eqref{EqLemmaCkw1} holds true and fix some $\varepsilon > 0$. Moreover, we choose some $h \in C^\infty_c(\mathbb{R}^m)$ such that $h(u) = 1$ for all $u \in \overline{B_1(0)}$, $h(u) = 0$ for all $u \in \mathbb{R}^m \setminus B_2(0)$, and that there exists a constant $C_h > 0$ such that for every $\alpha \in \mathbb{N}^m_{0,k}$ and $u \in \mathbb{R}^m$ it holds that $\vert \partial_\alpha h(u) \vert \leq C_h$. In addition, by using \eqref{EqLemmaCkw1}, there exists some $r > 1$ such that
	\begin{equation}
		\label{EqLemmaCkwProof1}
		\max_{\alpha \in \mathbb{N}^m_{0,k}} \sup_{u \in U \setminus B_r(0)} \frac{\Vert \partial_\alpha f(u) \Vert}{(1+\Vert u \Vert)^\gamma} < \frac{\varepsilon}{1+ 2^k C_h}.
	\end{equation}
	From this, we define the functions $\mathbb{R}^m \ni u \mapsto h_r(u) := h(u/r) \in \mathbb{R}$ and $U \ni u \mapsto g(u) := h_r(u) f(u) \in \mathbb{R}^d$, which both have bounded support. Furthermore, note that by the binomial theorem, we have for every $\alpha \in \mathbb{N}^m_0$ that
	\begin{equation}
		\label{EqLemmaCkwProof2}
		\sum_{\beta_1, \beta_2 \in \mathbb{N}^m_0 \atop \beta_1 + \beta_2 = \alpha} \frac{\alpha!}{\beta_1! \beta_2!} = \sum_{\beta \in \mathbb{N}^m_0 \atop \forall l: \beta_l \leq \alpha_l} \prod_{l=1}^m \frac{\alpha_l!}{\beta_l! (\alpha_l-\beta_l)!} \leq \prod_{l=1}^m \sum_{\beta_l=0}^{\alpha_l} \frac{\alpha_l!}{\beta_l! (\alpha_l-\beta_l)!} = \prod_{l=1}^m 2^{\alpha_l} \leq 2^{\vert \alpha \vert}.
	\end{equation}
	Then, by using the Leibniz product rule together with the triangle inequality, the inequality \eqref{EqLemmaCkwProof2}, that $\vert \partial_\alpha h_r(u) \vert = \left\vert \partial_\alpha h(u/r) \right\vert r^{-\vert \alpha \vert} \leq C_h$ for any $\alpha \in \mathbb{N}^m_{0,k}$ and $u \in \mathbb{R}^m$, and again the inequality \eqref{EqLemmaCkwProof2}, it follows for every $\alpha \in \mathbb{N}^m_{0,k}$ and $u \in U$ that
	\begin{equation}
		\label{EqLemmaCkwProof3}
		\Vert \partial_\alpha g(u) \Vert \leq \sum_{\beta_1, \beta_2 \in \mathbb{N}^m_0 \atop \beta_1 + \beta_2 = \alpha} \frac{\alpha!}{\beta_1! \beta_2!} \left\vert \partial_{\beta_1} h_r(u) \right\vert \left\Vert \partial_{\beta_2} f(u) \right\Vert \leq 2^k C_h \max_{\beta_2 \in \mathbb{N}^m_{0,k}} \left\Vert \partial_{\beta_2} f(u) \right\Vert.
	\end{equation}
	Hence, by using that $\partial_\alpha g(u) = \partial_\alpha (h_r(u) f(u))  = \partial_\alpha f(u)$ for any $\alpha \in \mathbb{N}^m_{0,k}$ and $u \in U \cap B_r(0)$ (as $h_r(u) = 1$ for any $u \in B_r(0)$), and the inequalities \eqref{EqLemmaCkwProof3} and \eqref{EqLemmaCkwProof1}, the function $g \in C^k_b(U;\mathbb{R}^d)$ satisfies
	\begin{equation*}
		\begin{aligned}
			\Vert f - g \Vert_{C^k_{pol,\gamma}(U;\mathbb{R}^d)} & = \max_{\alpha \in \mathbb{N}^m_{0,k}} \sup_{u \in U} \frac{\Vert \partial_\alpha f(u) - \partial_\alpha g(u) \Vert}{(1+\Vert u \Vert)^\gamma} \\
			& \leq \max_{\alpha \in \mathbb{N}^m_{0,k}} \sup_{u \in U \cap B_r(0)} \frac{\Vert \partial_\alpha f(u) - \partial_\alpha g(u) \Vert}{(1+\Vert u \Vert)^\gamma} + \max_{\alpha \in \mathbb{N}^m_{0,k}} \sup_{u \in U \setminus B_r(0)} \frac{\Vert \partial_\alpha f(u) \Vert}{(1+\Vert u \Vert)^\gamma} \\
			& \quad\quad + \max_{\alpha \in \mathbb{N}^m_{0,k}} \sup_{u \in U \setminus B_r(0)} \frac{\Vert \partial_\alpha g(u) \Vert}{(1+\Vert u \Vert)^\gamma} \\
			& \leq \max_{\alpha \in \mathbb{N}^m_{0,k}} \sup_{u \in U \setminus B_r(0)} \frac{\Vert \partial_\alpha f(u) \Vert}{(1+\Vert u \Vert)^\gamma} + 2^k C_h \max_{\alpha \in \mathbb{N}^m_{0,k}} \sup_{u \in U \setminus B_r(0)} \frac{\Vert \partial_\alpha f(u) \Vert}{(1+\Vert u \Vert)^\gamma} \\
			& < \frac{\varepsilon}{1+ 2^k C_h} + 2^k C_h \frac{\varepsilon}{1+ 2^k C_h} = \varepsilon.
		\end{aligned}
	\end{equation*}
	Since $\varepsilon > 0$ was chosen arbitrarily, it follows that $f \in \overline{C^k(U;\mathbb{R}^d)}^\gamma$.		
\end{proof}

\subsection{Proof of results in Section~\ref{SecNN}.}
\label{SecProof2}

\subsubsection{Proof of Lemma~\ref{LemmaProp}.}
\label{AppLemmaProp}

\begin{proof}[Proof of Lemma~\ref{LemmaProp}]
	Fix some $\rho \in \overline{C^k_b(\mathbb{R})}^\gamma$, $y \in \mathbb{R}^d$, $a \in \mathbb{R}^m$, and $b \in \mathbb{R}$, and define the constant $C_{y,a,b} := 1 + \max_{\alpha \in \mathbb{N}^m_{0,k}} \left\Vert y a^\alpha \right\Vert (1+\Vert a \Vert + \vert b \vert)^\gamma > 0$, where $a^\alpha := \prod_{l=1}^m a_l^{\alpha_l}$ for $a := (a_1,...,a_m)^\top \in \mathbb{R}^m$ and $\alpha := (\alpha_1,...,\alpha_m) \in \mathbb{N}^m_{0,k}$. Then, by using the definition of $\overline{C^k_b(\mathbb{R})}^\gamma$, there exists some $\widetilde{\rho} \in C^k_b(\mathbb{R})$ such that
	\begin{equation*}
		\Vert \rho - \widetilde{\rho} \Vert_{C^k_{pol,\gamma}(\mathbb{R})} := \max_{j=0,...,k} \sup_{s \in \mathbb{R}} \frac{\left\vert \rho^{(j)}(s) - \widetilde{\rho}^{(j)}(s) \right\vert}{(1+\vert s \vert)^\gamma} < \frac{\varepsilon}{C_{y,a,b}}.
	\end{equation*}
	Hence, by using the inequality $1+\left\vert a^\top u - b \right\vert \leq 1+\Vert a \Vert \Vert u \Vert + \vert b \vert \leq (1+\Vert a \Vert + \vert b \vert) (1 + \Vert u \Vert)$, it follows for the function $y \widetilde{\rho}\left( a^\top \cdot - b \right) := \left( u \mapsto y \widetilde{\rho}\left( a^\top u - b \right) \right) \in C^k_b(\mathbb{R}^m;\mathbb{R}^d)$ that
	\begin{equation*}
		\begin{aligned}
			& \left\Vert y \rho\left( a^\top \cdot - b \right) - y \widetilde{\rho}\left( a^\top \cdot - b \right) \right\Vert_{C^k_{pol,\gamma}(\mathbb{R}^m;\mathbb{R}^d)} = \max_{\alpha \in \mathbb{N}^m_{0,k}} \sup_{u \in \mathbb{R}^m} \frac{\left\Vert y \rho^{(\vert \alpha \vert)}\left( a^\top u - b \right) a^\alpha - y \widetilde{\rho}^{(\vert \alpha \vert)}\left( a^\top u - b \right) a^\alpha \right\Vert}{(1+\Vert u \Vert)^\gamma} \\
			& \quad\quad \leq \left( \max_{\alpha \in \mathbb{N}^m_{0,k}} \left\Vert y a^\alpha \right\Vert (1+\Vert a \Vert + \vert b \vert) \right) \max_{\alpha \in \mathbb{N}^m_{0,k}} \sup_{u \in \mathbb{R}^m} \frac{\left\Vert \rho^{(\vert\alpha\vert)}\left( a^\top u - b \right) - \widetilde{\rho}^{(\vert\alpha\vert)}\left( a^\top u - b \right) \right\Vert}{\left( 1 + \left\vert a^\top u - b \right\vert \right)^\gamma} \\
			& \quad\quad \leq C_{y,a,b} \max_{j=0,...,k} \sup_{s \in \mathbb{R}} \frac{\left\vert \rho^{(j)}(s) - \widetilde{\rho}^{(j)}(s) \right\vert}{(1+\vert s \vert)^\gamma} < C_{y,a,b} \frac{\varepsilon}{C_{y,a,b}} = \varepsilon.
		\end{aligned}
	\end{equation*}
	Since $\varepsilon > 0$ was chosen arbitrarily and $y \widetilde{\rho}\left( a^\top \cdot - b \right) \in C^k_b(\mathbb{R}^m;\mathbb{R}^d)$, it follows that $y \rho\left( a^\top \cdot - b \right) \in \overline{C^k_b(\mathbb{R}^m;\mathbb{R}^d)}^\gamma$. Thus, by using that $\mathcal{NN}^\rho_{\mathbb{R}^m,d}$ is defined as vector space consisting of functions of the form $\mathbb{R}^m \ni u \mapsto y \rho\left( a^\top u - b \right) \in \mathbb{R}^d$, with $y \in \mathbb{R}^d$, $a \in \mathbb{R}^m$, and $b \in \mathbb{R}$, the triangle inequality implies that $\mathcal{NN}^\rho_{\mathbb{R}^m,d} \subseteq \overline{C^k_b(\mathbb{R}^m;\mathbb{R}^d)}^\gamma$. Finally, by using that $(X,\Vert \cdot \Vert_X)$ satisfies Assumption~\ref{AssApprox}, i.e.~that the restriction map \eqref{EqAssApprox1} is a continuous embedding, it follows that $\mathcal{NN}^\rho_{U,d} \subseteq X$.
\end{proof}

\subsubsection{Proof of Theorem~\ref{ThmUAT}.}
\label{AppThmUAT}

In this section, we provide the proof of Theorem~\ref{ThmUAT}, i.e.~the universal approximation property of neural networks $\mathcal{NN}^\rho_{U,d}$ within any function space $(X,\Vert \cdot \Vert_X)$ satisfying Assumption~\ref{AssApprox} with $k \in \mathbb{N}_0$, $U \subseteq \mathbb{R}^m$ (open, if $k \geq 1$), and $\gamma \in (0,\infty)$, where $\rho \in \overline{C^k_b(\mathbb{R})}^\gamma$.

The idea of the proof is the following. By contradiction, we assume that $\mathcal{NN}^\rho_{U,d} \subseteq X$ is not dense in $X$. Then, by using the Hahn-Banach theorem (as in \cite[Theorem~1]{cybenko89}), there exists a non-zero continuous linear functional $l: X \rightarrow \mathbb{R}$ which vanishes on the vector subspace $\mathcal{NN}^\rho_{U,d} \subseteq X$. Moreover, by using the continuous embedding in \eqref{EqAssApprox1}, we can express $l: X \rightarrow \mathbb{R}$ on the dense subspace $\overline{C^k_b(\mathbb{R}^m;\mathbb{R}^d)}^\gamma$ with finite signed Radon measures, which relies on the Riesz representation theorem in \cite[Theorem~2.4]{doersek10}. Subsequently, we use the distributional extension of Wiener's Tauberian theorem in \cite{korevaar65} and that $\rho \in \overline{C^k_b(\mathbb{R})}^\gamma$ is non-polynomial to conclude that $l: X \rightarrow \mathbb{R}$ vanishes everywhere on $X$. This however contradicts the initial assumption that $l: X \rightarrow \mathbb{R}$ is non-zero. Hence, $\mathcal{NN}^\rho_{U,d}$ must be dense in $X$.

In order to prove Theorem~\ref{ThmUAT} as outlined above, we now first generalize the Riesz representation theorem in \cite[Theorem~2.7]{doersek10} to this vector-valued case with derivatives. Hereby, we define $\mathcal{M}_\gamma(\mathbb{R}^m)$ as the vector space of finite signed Radon measures $\eta: \mathcal{B}(\mathbb{R}^m) \rightarrow \mathbb{R}$ with $\int_{\mathbb{R}^m} (1+\Vert u \Vert)^\gamma \vert \eta \vert(du) < \infty$, where $\vert \eta \vert: \mathcal{B}(\mathbb{R}^m) \rightarrow [0,\infty)$ denotes the corresponding total variation measure. Moreover, we denote by $Z^*$ the dual space of a Banach space $(Z,\Vert \cdot \Vert_Z)$ which consists of continuous linear functionals $l: Z \rightarrow \mathbb{R}$ and is equipped with the norm $\Vert l \Vert_{Z^*} := \sup_{z \in Z, \, \Vert z \Vert_Z \leq 1} \vert l(z) \vert$.

\begin{proposition}[Riesz representation]
	\label{PropRiesz}
	For $k \in \mathbb{N}_0$ and $\gamma \in (0,\infty)$, let $l: \overline{C^k_b(\mathbb{R}^m;\mathbb{R}^d)}^\gamma \rightarrow \mathbb{R}$ be a continuous linear functional. Then, there exist some signed Radon measures $(\eta_{\alpha,i})_{\alpha \in \mathbb{N}^m_{0,k}, \, i = 1,...,d} \subseteq \mathcal{M}_\gamma(\mathbb{R}^m)$ such that for every $f = (f_1,...,f_d)^\top \in \overline{C^k_b(\mathbb{R}^m;\mathbb{R}^d)}^\gamma$ it holds that
	\begin{equation*}
		l(f) = \sum_{\alpha \in \mathbb{N}^m_{0,k}} \sum_{i=1}^d \int_{\mathbb{R}^m} \partial_\alpha f_i(u) \eta_{\alpha,i}(du).
	\end{equation*}
\end{proposition}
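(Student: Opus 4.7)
The strategy is to reduce the claim to the scalar case $k=0$, $d=1$, which is exactly the weighted Riesz representation theorem of \cite[Theorem~2.4]{doersek10}, via a Hahn--Banach extension argument on an ambient product space.

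First, I would set $Y := \overline{C^0_b(\mathbb{R}^m)}^\gamma$ equipped with $\Vert\cdot\Vert_{C^0_{pol,\gamma}(\mathbb{R}^m)}$, let $\mathcal{I} := \mathbb{N}^m_{0,k} \times \lbrace 1,\dots,d\rbrace$, and consider the Banach space $Z := \prod_{(\alpha,i) \in \mathcal{I}} Y$ endowed with $\Vert (g_{\alpha,i})_{(\alpha,i)} \Vert_Z := \max_{(\alpha,i) \in \mathcal{I}} \Vert g_{\alpha,i} \Vert_Y$. I would then verify that the linear map $T : \overline{C^k_b(\mathbb{R}^m;\mathbb{R}^d)}^\gamma \to Z$ defined by $T(f) := (\partial_\alpha f_i)_{(\alpha,i) \in \mathcal{I}}$ is well-defined (every component $\partial_\alpha f_i$ lies in $Y$ by Lemma~\ref{LemmaCkw}) and a topological embedding, since both norms essentially equal $\max_\alpha \sup_u \Vert \partial_\alpha f(u)\Vert/(1+\Vert u\Vert)^\gamma$ up to the equivalence of the Euclidean norm on $\mathbb{R}^d$ and the coordinatewise maximum.

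Second, as $T$ is a topological isomorphism onto its image $T(\overline{C^k_b(\mathbb{R}^m;\mathbb{R}^d)}^\gamma) \subseteq Z$, the composition $l \circ T^{-1}$ is a continuous linear functional on this subspace, which the Hahn--Banach theorem extends to some $\tilde{l} \in Z^*$. Because $\mathcal{I}$ is finite, one has the canonical identification $Z^* \cong \bigoplus_{(\alpha,i) \in \mathcal{I}} Y^*$, which yields continuous linear functionals $l_{\alpha,i} \in Y^*$ with
\begin{equation*}
\tilde{l}\bigl((g_{\alpha,i})_{(\alpha,i) \in \mathcal{I}}\bigr) \,=\, \sum_{(\alpha,i) \in \mathcal{I}} l_{\alpha,i}(g_{\alpha,i}) \quad \text{for all } (g_{\alpha,i}) \in Z.
\end{equation*}

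Third, I would apply the scalar weighted Riesz representation result \cite[Theorem~2.4]{doersek10} to each $l_{\alpha,i} \in Y^*$ to obtain a finite signed Radon measure $\eta_{\alpha,i} \in \mathcal{M}_\gamma(\mathbb{R}^m)$ with $l_{\alpha,i}(g) = \int_{\mathbb{R}^m} g(u)\, \eta_{\alpha,i}(du)$ for all $g \in Y$. Combining the steps, for any $f = (f_1,\dots,f_d)^\top \in \overline{C^k_b(\mathbb{R}^m;\mathbb{R}^d)}^\gamma$ one obtains
\begin{equation*}
l(f) \,=\, \tilde{l}(T(f)) \,=\, \sum_{(\alpha,i) \in \mathcal{I}} l_{\alpha,i}(\partial_\alpha f_i) \,=\, \sum_{\alpha \in \mathbb{N}^m_{0,k}} \sum_{i=1}^d \int_{\mathbb{R}^m} \partial_\alpha f_i(u) \, \eta_{\alpha,i}(du),
\end{equation*}
which is the asserted representation. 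The main obstacle I anticipate is checking carefully that the cited scalar theorem produces measures in $\mathcal{M}_\gamma(\mathbb{R}^m)$ with the required weighted integrability $\int_{\mathbb{R}^m}(1+\Vert u\Vert)^\gamma\, \vert\eta_{\alpha,i}\vert(du)<\infty$ controlled by the dual norm of $l_{\alpha,i}$; the remaining ingredients (the embedding $T$, Hahn--Banach extension, and decomposition of the dual of a finite product of Banach spaces) are routine.
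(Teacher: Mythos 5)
Your proposal is correct and follows essentially the same route as the paper's proof: both reduce to the scalar $k=0$, $d=1$ case of the weighted Riesz theorem by embedding $f \mapsto (\partial_\alpha f_i)_{\alpha,i}$ into a weighted $C^0$-space, extending $l$ via Hahn--Banach, and decomposing the extension coordinatewise. The only cosmetic difference is that you work in the finite product $\prod_{(\alpha,i)} \overline{C^0_b(\mathbb{R}^m)}^\gamma$ with the max norm and invoke $Z^* \cong \bigoplus Y^*$, whereas the paper uses the vector-valued space $\overline{C^0_b(\mathbb{R}^m;\mathbb{R}^M)}^\gamma$ with $M=|\mathbb{N}^m_{0,k}|\cdot d$ and extracts the components by testing against the unit vectors $e_{\alpha,i}$; these are equivalent norms and identical decompositions.
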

\begin{proof}
	First, we show the conclusion for $k = 0$ and $d = 1$. Indeed, by defining $\mathbb{R}^m \ni u \mapsto \psi(u) := (1+\Vert u \Vert)^\gamma \in (0,\infty)$, the tuple $(\mathbb{R}^m,\psi)$ is a weighted space in the sense of \cite[p.~5]{doersek10}. Hence, the conclusion follows from \cite[Theorem~2.4]{doersek10}.
	
	Now, for the general case of $k \geq 1$ and $d \geq 2$, we fix a continuous linear functional $l: \overline{C^k_b(\mathbb{R}^m;\mathbb{R}^d)}^\gamma \rightarrow \mathbb{R}$ and define the number $M := \vert \mathbb{N}^m_{0,k} \vert \cdot d$ as well as the map
	\begin{equation*}
		\overline{C^k_b(\mathbb{R}^m;\mathbb{R}^d)}^\gamma \ni f \quad \mapsto \quad \Xi(f) := (\partial_\alpha f_i)_{\alpha \in \mathbb{N}^m_{0,k}, \, i = 1,...,d}^\top \in \overline{C^0_b(\mathbb{R}^m;\mathbb{R}^M)}^\gamma.	
	\end{equation*}
	Moreover, we denote by $\img(\Xi) := \big\lbrace \Xi(f): f \in \overline{C^k_b(\mathbb{R}^m;\mathbb{R}^d)}^\gamma \big\rbrace \subseteq  \overline{C^0_b(\mathbb{R}^m;\mathbb{R}^M)}^\gamma$ the image vector subspace. Then, by using that $\Xi: \overline{C^k_b(\mathbb{R}^m;\mathbb{R}^d)}^\gamma \rightarrow \img(\Xi)$ is by definition bijective, there exists an inverse map $\Xi^{-1}: \img(\Xi) \rightarrow 	\overline{C^k_b(\mathbb{R}^m;\mathbb{R}^d)}^\gamma$. Moreover, we conclude for every $f \in 	\overline{C^k_b(\mathbb{R}^m;\mathbb{R}^d)}^\gamma$ that
	\begin{equation*}
		\begin{aligned}
			\left\Vert \Xi^{-1}((\partial_\alpha f_i)_{\alpha \in \mathbb{N}^m_{0,k}, \, i=1,...,d}) \right\Vert_{C^0_{pol,\gamma}(\mathbb{R}^m;\mathbb{R}^d)} & = \Vert f \Vert_{C^k_{pol,\gamma}(\mathbb{R}^m;\mathbb{R}^d)} = \max_{\alpha \in \mathbb{N}^m_{0,k}} \sup_{u \in \mathbb{R}^m} \frac{\Vert \partial_\alpha f(u) \Vert}{(1+\Vert u \Vert)^\gamma} \\
			& = \sup_{u \in \mathbb{R}^m} \max_{\alpha \in \mathbb{N}^m_{0,k}} \frac{\Vert \partial_\alpha f(u) \Vert}{(1+\Vert u \Vert)^\gamma} \leq \sup_{u \in \mathbb{R}^m} \frac{\Vert (\partial_\alpha f_i)_{\alpha \in \mathbb{N}^m_{0,k}, \, i=1,...,d} \Vert}{(1+\Vert u \Vert)^\gamma} \\
			& = \Vert (\partial_\alpha f_i)_{\alpha \in \mathbb{N}^m_{0,k}, \, i=1,...,d} \Vert_{C^0_{pol,\gamma}(\mathbb{R}^m;\mathbb{R}^M)},
		\end{aligned}
	\end{equation*}
	which shows that $\Xi^{-1}: \img(\Xi) \rightarrow \overline{C^k_b(\mathbb{R}^m;\mathbb{R}^d)}^\gamma$ is continuous. Hence, the concatenation $l \circ \Xi^{-1}: \img(\Xi) \rightarrow \mathbb{R}$ is a continuous linear functional on $\img(\Xi)$, which can be extended by using the Hahn-Banach theorem to a continuous linear functional $l_0: \overline{C^0_b(\mathbb{R}^m;\mathbb{R}^M)}^\gamma \rightarrow \mathbb{R}$ such that for every $f \in \overline{C^k_b(\mathbb{R}^m;\mathbb{R}^d)}^\gamma$ it holds that
	\begin{equation}
		\label{EqPropRieszProof2}
		l_0((\partial_\alpha f_i)_{\alpha \in \mathbb{N}^m_{0,k}, \, i=1,...,d}) = \left( l \circ \Xi^{-1} \right)((\partial_\alpha f_i)_{\alpha \in \mathbb{N}^m_{0,k}, \, i=1,...,d}) = l(f).
	\end{equation}
	Now, for every fixed $\alpha \in \mathbb{N}^m_{0,k}$ and $i = 1,...,d$, we define the linear map $\overline{C^0_b(\mathbb{R}^m)}^\gamma \ni g \mapsto l_{\alpha,i}(g) := l_0(g e_{\alpha,i}) \in \mathbb{R}$, where $e_{\alpha,i} \in \mathbb{R}^M := \mathbb{R}^{\vert \mathbb{N}^m_{0,k} \vert \cdot d} \cong \mathbb{R}^{\vert \mathbb{N}^m_{0,k} \vert} \times \mathbb{R}^d$ denotes the $(\alpha,i)$-th unit vector of $\mathbb{R}^M := \mathbb{R}^{\vert \mathbb{N}^m_{0,k} \vert \cdot d} \cong \mathbb{R}^{\vert \mathbb{N}^m_{0,k} \vert} \times \mathbb{R}^d$. Then, for every $g \in \overline{C^0_b(\mathbb{R}^m)}^\gamma$, it follows with $Z := \overline{C^0_b(\mathbb{R}^m;\mathbb{R}^M)}^\gamma$ that
	\begin{equation*}
		\begin{aligned}
			\left\vert l_{\alpha,i}(g) \right\vert & = \left\vert l_0(g e_{\alpha,i}) \right\vert \leq \Vert l_0 \Vert_{Z^*} \Vert g e_{\alpha,i} \Vert_{C^0_{pol,\gamma}(\mathbb{R}^m;\mathbb{R}^M)} = \Vert l_0 \Vert_{Z^*} \Vert g \Vert_{C^0_{pol,\gamma}(\mathbb{R}^m)},
		\end{aligned}
	\end{equation*}
	which shows that $l_{\alpha,i}: \overline{C^0_b(\mathbb{R}^m)}^\gamma \rightarrow \mathbb{R}$ is a continuous linear functional. Hence, by using \eqref{EqPropRieszProof2} and by applying for every $\alpha \in \mathbb{N}^m_{0,k}$ and $i = 1,...,d$ the case with $k = 0$ and $d = 1$, there exist some Radon measures $(\eta_{\alpha,i})_{\alpha \in \mathbb{N}^m_{0,k}, i=1,...,d} \in \mathcal{M}_\gamma(\mathbb{R}^m)$ such that for every $f \in \overline{C^k_b(\mathbb{R}^m;\mathbb{R}^d)}^\gamma$ it holds that
	\begin{equation*}
		\begin{aligned}
			l(f) & = \left( l \circ \Xi^{-1} \right)((\partial_\alpha f_i)_{\alpha \in \mathbb{N}^m_{0,k}, \, i=1,...,d}) \\
			& = l_0((\partial_\alpha f_i)_{\alpha \in \mathbb{N}^m_{0,k}, \, i=1,...,d}) \\
			& = \sum_{\alpha \in \mathbb{N}^m_{0,k}} \sum_{i=1}^d l_{\alpha,i}(\partial_\alpha f_i e_{\alpha,i}) \\
			& = \sum_{\alpha \in \mathbb{N}^m_{0,k}} \sum_{i=1}^d \int_{\mathbb{R}^m} \partial_\alpha f_i(u) \eta_{\alpha,i}(du),
		\end{aligned}
	\end{equation*}
	which completes the proof.
\end{proof}

Next, we show that every non-polynomial activation function $\rho \in \overline{C^k_b(\mathbb{R})}^\gamma$ is discriminatory in the sense of \cite[p.~306]{cybenko89}. To this end, we generalize the proof of \cite[Theorem~1]{chen95} from compactly supported signed Radon measures to measures in $\mathcal{M}_\gamma(\mathbb{R}^m)$. Hereby, we follow the distributional extension of Wiener's Tauberian theorem in \cite[Theorem~A]{korevaar65}.

\begin{proposition}
	\label{PropDiscr}
	For $\gamma \in (0,\infty)$, let $\eta \in \mathcal{M}_\gamma(\mathbb{R}^m)$ be a signed Radon measure and assume that $\rho \in \overline{C^0_b(\mathbb{R})}^\gamma$ is non-polynomial. If for every $a \in \mathbb{R}^m$ and $b \in \mathbb{R}$ it holds that
	\begin{equation}
		\label{EqPropDiscr1}
		\int_{\mathbb{R}^m} \rho\left( a^\top u - b \right) \eta(du) = 0,
	\end{equation}
	then it follows that $\eta = 0 \in \mathcal{M}_\gamma(\mathbb{R}^m)$.
\end{proposition}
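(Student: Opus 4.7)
The plan is to reduce the multidimensional hypothesis to a one-dimensional Tauberian problem, and then to exploit the non-polynomial character of $\rho$ through a smoothing plus distributional Fourier argument. For each $a \in \mathbb{R}^m \setminus \{0\}$ the pushforward $\nu_a := (u \mapsto a^\top u)_*\eta$ lies in $\mathcal{M}_\gamma(\mathbb{R})$, since $(1+|a^\top u|)^\gamma \leq (1+\|a\|)^\gamma(1+\|u\|)^\gamma$ transfers the weight. Applying the hypothesis to $ta$ in place of $a$, for every $t \in \mathbb{R}$, yields
\begin{equation*}
\int_{\mathbb{R}} \rho(ts - b)\,\nu_a(ds) \;=\; 0 \qquad \text{for all } t, b \in \mathbb{R}.
\end{equation*}
If $\nu_a = 0$ for every such $a$, then from the identity $\widehat\eta(a) = \widehat{\nu_a}(1) = 0$ and uniqueness of Fourier transforms of finite signed measures one concludes $\eta = 0$. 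It therefore suffices to treat the one-dimensional statement.

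To exploit that $\rho$ is non-polynomial one has $\operatorname{supp}(\widehat{T_\rho}) \not\subseteq \{0\}$, so pick $\xi_0 \in \operatorname{supp}(\widehat{T_\rho}) \setminus \{0\}$ and choose $\phi \in \mathcal{S}(\mathbb{R};\mathbb{C})$ with $\widehat\phi \in C_c^\infty(\mathbb{R} \setminus \{0\})$ identically equal to $1$ on a small neighborhood of $\xi_0$. Then $g := \phi * \rho$ is smooth with $|g^{(k)}(y)| \leq C_k(1+|y|)^\gamma$ for every $k \in \mathbb{N}_0$, and $\widehat g = \widehat\phi \cdot \widehat{T_\rho}$ is a non-zero compactly supported tempered distribution whose support contains $\xi_0$; in particular $g$ is non-polynomial, and by Paley--Wiener extends to an entire function of exponential type on $\mathbb{C}$. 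Convolving the one-dimensional equation above in the $b$ variable against $\phi$---with Fubini justified by $\phi \in \mathcal{S}$ and $\nu_a \in \mathcal{M}_\gamma(\mathbb{R})$---gives $\int g(ts - b_0)\,\nu_a(ds) = 0$ for all $t, b_0 \in \mathbb{R}$. Writing $\nu := \nu_a$ and $\nu^t := (s \mapsto ts)_*\nu$, this is $g * \check{\nu^t} = 0$ for every $t \neq 0$, whose distributional Fourier transform is the product identity $\widehat g(\xi)\,\widehat\nu(-\xi\,t) = 0$ in $\mathcal{S}'(\mathbb{R})$. Since $\widehat g$ is compactly supported with $\xi_0$ in its support, and $\widehat\nu$ is continuous and bounded (in fact of class $C^{\lfloor\gamma\rfloor}$), the Tauberian argument forces $\widehat\nu$ to vanish in a neighborhood of $-t\xi_0$ for every $t \neq 0$; as $t$ sweeps across $\mathbb{R}\setminus\{0\}$, so does $-t\xi_0$, and continuity of $\widehat\nu$ yields $\widehat\nu \equiv 0$, hence $\nu = 0$ by Fourier uniqueness.

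The main obstacle is precisely this final step. The classical Chen--Chen argument for compactly supported $\nu$ differentiates $\int g(ts - b)\,\nu(ds) = 0$ in $t$ at $t = 0$ to extract all moments $\int s^k\,\nu(ds)$ and closes via the determinate Hausdorff moment problem; both ingredients break down in our weighted setting, because $\nu$ is only guaranteed moments up to order $\gamma$ (so higher-order $t$-derivatives need not be integrable against $|\nu|$) and the moment problem is underdetermined on unbounded supports. Reformulating the annihilation as the distributional product identity $\widehat g \cdot \widehat\nu(-\cdot\,t) = 0$ circumvents these issues in principle, but rigorously pairing the possibly higher-order distribution $\widehat g$ with the merely continuous $\widehat\nu$ is exactly what Korevaar's distributional extension of Wiener's Tauberian theorem provides, and is the technical heart of the argument.
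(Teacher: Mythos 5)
Your high-level ingredients --- the reduction to one dimension via pushforwards, the choice of $\xi_0 \in \supp(\widehat{T_\rho})\setminus\{0\}$, and a convolution--Tauberian conclusion --- match the paper's strategy, but the decisive step has a genuine gap that you acknowledge without closing. You smooth $\rho$ into $g := \phi * \rho$, so that $\widehat{T_g} = \widehat{\phi}\,\widehat{T_\rho}$ is a compactly supported tempered distribution, and from $g * \check{\nu^t} = 0$ you want to read off the pointwise identity $\widehat{T_g}\cdot\widehat{\nu}(-t\,\cdot) = 0$ and conclude that $\widehat{\nu}$ vanishes near $-t\xi_0$. But $\widehat{T_g}$ can be a distribution of arbitrary finite order (a derivative of a Dirac mass, for instance), while $\widehat{\nu}$ is merely continuous (it is $C^j$ only for integers $j \leq \gamma$, and $\gamma$ may lie in $(0,1)$); the pointwise product is then undefined, so the ``Fourier transform of the convolution'' identity you invoke is not available. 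Even if some pairing were granted, passing from $\widehat{T_g}\cdot h = 0$ with $\xi_0 \in \supp(\widehat{T_g})$ to $h(\xi_0)=0$ requires inverting $h$ near $\xi_0$, which needs $h$ smooth, not just continuous. Your final sentence also misdescribes Korevaar's theorem: it extends Wiener's Tauberian theorem so that a fixed $L^1$ kernel with nonvanishing Fourier transform at zero annihilates a broader class of distributions; it does not furnish a way to pair a higher-order distribution against a merely continuous function.

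The paper closes this gap by smoothing on the other side. It mollifies the measure $\eta_a$ into an $L^1$ function $f := (h*\eta_a)(-\,\cdot)$ (Schwartz $*$ finite measure lies in $L^1$ by Young's inequality), picks $t_1$ with $\widehat{f}(t_1)\neq 0$, and modulates to $f_0 := f\,e^{-\mathbf{i} t_1\cdot}$ so that $\widehat{f_0}(0)\neq 0$. Then $f_0 * \rho_0 = 0$, with $\rho_0$ a modulated dilation of $\rho$, is exactly the configuration Korevaar's theorem addresses: a non-vanishing-at-zero $L^1$ function convolved against a tempered distribution. The Wiener division step produces $w \in L^1$ with $w*f_0 = \phi_{2n}$, forcing $\widehat{T_{\rho_0}}$ to vanish near $0$, which translates to $\widehat{T_\rho}$ vanishing near $t_0$ --- contradicting non-polynomiality. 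To repair your sketch, you would need to transfer the $L^1$ structure to the measure side and introduce this modulation, rather than trying to localize the continuous $\widehat{\nu}$ against the possibly higher-order distribution $\widehat{T_g}$.
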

\begin{proof}
	We follow the proof of \cite[Proposition~4.4~(A3)]{cuchiero23} and assume that $\rho \in \overline{C^0_b(\mathbb{R})}^\gamma$ is non-polynomial. Then, by using \cite[Examples~7.16]{rudin91}, there exists a non-zero point $t_0 \in \mathbb{R} \setminus \lbrace 0 \rbrace$ which belongs to the support of $\widehat{T_\rho} \in \mathcal{S}'(\mathbb{R};\mathbb{C})$. Moreover, let $\eta \in \mathcal{M}_\gamma(\mathbb{R}^m)$ satisfy \eqref{EqPropDiscr1} and assume by contradiction that $\eta \in \mathcal{M}_\gamma(\mathbb{R}^m)$ is non-zero.
	
	Now, for every $a \in \mathbb{R}^m$, we define the push-forward measure $\eta_a := \eta \circ \left( a^\top \cdot \right)^{-1}: \mathcal{B}(\mathbb{R}) \rightarrow \mathbb{R}$ by $\eta_a(B) := \eta\big( \big\lbrace u \in \mathbb{R}^m: a^\top u \in B \big\rbrace \big)$, for $B \in \mathcal{B}(\mathbb{R})$. Moreover, for every fixed $\lambda \in \mathbb{R} \setminus \lbrace 0 \rbrace$, we define the function $\mathbb{R} \ni s \mapsto \rho_\lambda(s) := \rho(\lambda s) \in \mathbb{R}$. Then, by applying \cite[Theorem~3.6.1]{bogachev07} (to the positive and negative part of $\eta \in \mathcal{M}_\gamma(\mathbb{R}^m)$) and by using the assumption \eqref{EqPropDiscr1} (with $\lambda a \in \mathbb{R}^m$ and $\lambda b \in \mathbb{R}$ instead of $a \in \mathbb{R}^m$ and $b \in \mathbb{R}$, respectively), it follows for every $a \in \mathbb{R}^m$ and $b \in \mathbb{R}$ that
	\begin{equation}
		\label{EqPropDiscrProof1}
		\int_{\mathbb{R}} \rho_\lambda(s-b) \eta_a(ds) = \int_{\mathbb{R}^m} \rho\left( \lambda a^\top u - \lambda b \right) \eta(du) = 0.
	\end{equation}
	Since $\eta \in \mathcal{M}_\gamma(\mathbb{R}^m)$ is non-zero, there exists some $a \in \mathbb{R}^m$ such that $\eta_a: \mathcal{B}(\mathbb{R}) \rightarrow \mathbb{R}$ is non-zero. Hence, there exists some $h \in \mathcal{S}(\mathbb{R};\mathbb{C})$ such that $\big( z \mapsto f(z) := (h * \eta_a)(-z) := \int_{\mathbb{R}} h(-z-s) \eta_a(ds) \big) \in L^1(\mathbb{R},\mathcal{L}(\mathbb{R}),du;\mathbb{C})$ is also non-zero. Then, by using that the Fourier transform is injective, $\widehat{f}: \mathbb{R} \rightarrow \mathbb{C}$ is non-zero, too, i.e.~there exists some $t_1 \in \mathbb{R} \setminus \lbrace 0 \rbrace$ such that $\widehat{f}(t_1) \neq 0$. Hence, by using \cite[Table~7.2.2]{folland92}, the function $\big( z \mapsto f_0(z) := f(z) e^{-i t_1 z} \big) \in L^1(\mathbb{R},\mathcal{L}(\mathbb{R}),du;\mathbb{C})$ satisfies $\widehat{f_0}(0) = \widehat{f}(t_1) \neq 0$. Moreover, we choose $\lambda := \frac{t_1}{t_0} \in \mathbb{R} \setminus \lbrace 0 \rbrace$ and define the function $\mathbb{R} \ni z \mapsto \rho_0(z) := \rho_\lambda(z) e^{-i t_1 z} \in \mathbb{C}$.
	
	Next, we use \cite[Theorem~3.6.1]{bogachev07} (applied to $\vert \eta \vert: \mathcal{B}(\mathbb{R}^m) \rightarrow [0,\infty)$), the inequality $1+\big\vert \lambda a^\top u - b \big\vert \leq 1+ \vert\lambda\vert \Vert a \Vert \Vert u \Vert + \vert \lambda \vert \vert b \vert \leq \max(1,\vert \lambda \vert) (1+\Vert a \Vert)(1+\vert b \vert)(1+\Vert u \Vert)$ for any $a,u \in \mathbb{R}^m$ and $b,y \in \mathbb{R}$, the inequality $(1+\vert b \vert)^\gamma \leq 2^\gamma \big( 1+\vert b \vert^2 \big)^{\gamma/2} \leq 2^\gamma \big( 1+\vert b \vert^2 \big)^{\lceil\gamma/2\rceil}$ for any $b \in \mathbb{R}$, and that for every $y \in \mathbb{R}$ the reflected translation $\mathbb{R} \ni b \mapsto \widetilde{h}_y(b) := h(-y-b) \in \mathbb{R}$ of the Schwartz function $h \in \mathcal{S}(\mathbb{R};\mathbb{C})$ is again a Schwartz function (see \cite[p.~331]{folland92}) to conclude for every $y \in \mathbb{R}$ that
	\begin{equation}
		\label{EqPropDiscrProof2}
		\begin{aligned}
			& \int_{\mathbb{R}} \int_{\mathbb{R}} \vert h(-y-b) \vert \vert \rho_\lambda(s - b) \vert \vert\eta_a\vert(ds) db = \int_{\mathbb{R}^m} \vert h(-y-b) \vert \int_{\mathbb{R}} \left\vert \rho\left( \lambda a^\top u - \lambda b \right) \right\vert \vert\eta\vert(du) db \\
			& \quad\quad \leq \int_{\mathbb{R}} \vert h(-y-b) \vert \left( \sup_{u \in \mathbb{R}^m} \frac{\left\vert \rho\left( \lambda a^\top u - \lambda b \right) \right\vert}{\left( 1+\left\vert \lambda a^\top u - \lambda b \right\vert \right)^\gamma} \right) \int_{\mathbb{R}} \left( 1+\left\vert \lambda a^\top u - \lambda b \right\vert \right)^\gamma \vert\eta\vert(du) db \\
			& \quad\quad \leq \max(1,\vert\lambda\vert)^\gamma (1+\Vert a \Vert)^\gamma \left( \sup_{s \in \mathbb{R}} \frac{\vert \rho(s) \vert}{(1+\vert s \vert)^\gamma} \right) \left( \int_{\mathbb{R}} \vert h(-y-b) \vert (1+\vert b \vert)^\gamma db \right) \int_{\mathbb{R}} (1+\Vert u \Vert)^\gamma \vert\eta\vert(du) \\
			& \quad\quad \leq \max(1,\vert\lambda\vert)^\gamma (1+\Vert a \Vert)^\gamma \Vert \rho \Vert_{C^0_{pol,\gamma}(\mathbb{R})} \left( \sup_{y \in \mathbb{R}} \left\vert \widetilde{h}_y(b) \right\vert \left( 1+\vert b \vert^2 \right)^{\lceil\gamma/2\rceil+1} \right) \\
			& \quad\quad\quad\quad \cdot \left( \int_{\mathbb{R}} \frac{1}{1+b^2} db \right) \int_{\mathbb{R}} (1+\Vert u \Vert)^\gamma \vert\eta\vert(du) < \infty. \\
		\end{aligned}
	\end{equation}
	Then, by using the substitution $z \mapsto s-b$ and the identity \eqref{EqPropDiscrProof1}, it follows for every $y \in \mathbb{R}$ that
	\begin{equation}
		\label{EqPropDiscrProof3}
		\begin{aligned}
			& (f_0 * \rho_0)(y) = \int_{\mathbb{R}} f(y-z) e^{it_1(y-z)} \rho_\lambda(z) e^{-it_1z} dz = e^{it_1 y} \int_{\mathbb{R}} (h * \eta_a)(z-y) \rho_\lambda(z) dz \\
			& \quad\quad = e^{it_1 y} \int_{\mathbb{R}} \int_{\mathbb{R}} h(z-y-s) \rho_\lambda(z) \eta_a(ds) dz = e^{it_1 y} \int_{\mathbb{R}} h(-y-b) \int_{\mathbb{R}} \rho_\lambda(s-b) \eta_a(ds) db = 0,
		\end{aligned}
	\end{equation}
	where \eqref{EqPropDiscrProof2} ensures that the convolution $f_0 * \rho_0: \mathbb{R} \rightarrow \mathbb{R}$ is well-defined.
	
	Moreover, let $\phi \in \mathcal{S}(\mathbb{R};\mathbb{C})$ such that $\widehat{\phi}(\xi) = 1$, for all $\xi \in [-1,1]$, and $\widehat{\phi}(\xi) = 0$, for all $\xi \in \mathbb{R} \setminus [-2,2]$. In addition, for every $n \in \mathbb{N}$, we define $\big( s \mapsto \phi_n(s) := \frac{1}{n} \phi\big( \frac{1}{n} \big) \big) \in \mathcal{S}(\mathbb{R};\mathbb{C})$. Then, by following the proof of \cite[Theorem~A]{korevaar65}, there exists some large enough $n \in \mathbb{N}$ and $w \in L^1(\mathbb{R},\mathcal{L}(\mathbb{R}),du)$ such that $w * f_0 = \phi_{2n} \in \mathcal{S}(\mathbb{R};\mathbb{C})$. Hence, by using \eqref{EqPropDiscrProof2}, we conclude for every $g \in \mathcal{S}(\mathbb{R};\mathbb{C})$ that
	\begin{equation}
		\label{EqPropDiscrProof4}
		\left( T_{\rho_0} * \phi_{2n} \right)(g) := T_{\rho_0}\left( \phi_{2n}(-\,\cdot) * g \right) = (g * \phi_{2n} * \rho_0)(0) = (g * w * f_0 * \rho_0)(0) = 0,
	\end{equation}
	where $\phi_{2n}(-\,\cdot)$ denotes the function $\mathbb{R} \ni s \mapsto \phi_{2n}(-s) \in \mathbb{R}$. Thus, by using \cite[Equation~9.32]{folland92} together with \eqref{EqPropDiscrProof4}, i.e.~that $\widehat{\phi_{2n}} \widehat{T_{\rho_0}} = \reallywidehat{T_{\rho_0} * \phi_{2n}} = 0 \in \mathcal{S}'(\mathbb{R};\mathbb{C})$, and that $\widehat{\phi_{2n}}(\xi) = \widehat{\phi}(2n \xi) = 1$ for any $\xi \in [-\frac{1}{2n},\frac{1}{2n}]$, it follows that $\widehat{T_{\rho_0}} \in \mathcal{S}'(\mathbb{R};\mathbb{C})$ vanishes on $(-\frac{1}{2n},\frac{1}{2n})$.
	
	Finally, for any fixed $g \in C^\infty((t_0-\frac{1}{2n \vert \lambda \vert},t_0+\frac{1}{2n \vert \lambda \vert});\mathbb{C})$, we define $\big( z \mapsto g_0(z) := g\left( \frac{z}{\lambda} + t_0 \right) \big) \in C^\infty_c((-\frac{1}{2n},\frac{1}{2n});\mathbb{C})$. Hence, by using the definition of $\widehat{T_\rho} \in \mathcal{S}'(\mathbb{R};\mathbb{C})$, the substitution $\zeta \mapsto \xi/\lambda$, \cite[Table~9.2.2]{folland92}, and that $\widehat{T_{\rho_0}} \in \mathcal{S}'(\mathbb{R};\mathbb{C})$ vanishes on $(-\frac{1}{2n},\frac{1}{2n})$, we conclude that
	\begin{equation}
		\label{EqPropDiscrProof5}
		\begin{aligned}
			\widehat{T_\rho}(g) & = T_\rho(\widehat{g}) = \int_{\mathbb{R}} \rho(\xi) \widehat{g}(\xi) d\xi = \lambda \int_{\mathbb{R}} \rho(\lambda \zeta) \widehat{g}(\lambda \zeta) d\zeta = \int_{\mathbb{R}} \rho_0(\zeta) e^{i t_1 \zeta} \widehat{g(\cdot \, / \lambda)}(\zeta) dz \\
			& = \int_{\mathbb{R}} \rho_0(\zeta) \widehat{g_0}(\zeta) d\zeta = T_{\rho_0}(\widehat{g_0}) = \widehat{T_{\rho_0}}(g_0) = 0,
		\end{aligned}
	\end{equation}
	where $\widehat{g(\cdot \, / \lambda)}$ denotes the Fourier transform of the function $(s \mapsto g(s/\lambda)) \in \mathcal{S}(\mathbb{R};\mathbb{C})$. Since the function $g \in C^\infty_c((t_0-\frac{1}{2n \vert \lambda \vert},t_0+\frac{1}{2n \vert \lambda \vert});\mathbb{C})$ was chosen arbitrary, \eqref{EqPropDiscrProof5} shows that $\widehat{T_\rho} \in \mathcal{S}'(\mathbb{R};\mathbb{C})$ vanishes on the set $\big(t_0-\frac{1}{2n \vert \lambda \vert},t_0+\frac{1}{2n \vert \lambda \vert}\big)$. This however contradicts the assumption that $t_0 \in \mathbb{R} \setminus \lbrace 0 \rbrace$ belongs to the support of $\widehat{T_\rho} \in \mathcal{S}'(\mathbb{R};\mathbb{C})$ and shows that $\eta = 0 \in \mathcal{M}_\gamma(\mathbb{R})$.
\end{proof}

Next, we show some properties of measures $\eta \in \mathcal{M}_\gamma(\mathbb{R}^m)$, $\gamma \in (0,\infty)$, whenever they are convoluted with a bump function. To this end, we introduce the smooth bump function $\phi: \mathbb{R}^m \rightarrow \mathbb{R}$ defined by
\begin{equation*}
	\phi(u) :=
	\begin{cases}
		C e^{-\frac{1}{1-\Vert u \Vert^2}}, & u \in B_1(0), \\
		0, & u \in \mathbb{R}^m \setminus B_1(0),
	\end{cases}
\end{equation*}
where $C > 0$ is a normalizing constant such that $\Vert \phi \Vert_{L^1(\mathbb{R}^m,\mathcal{L}(\mathbb{R}^m),du)} = 1$. From this, we define for every fixed $\delta > 0$ the mollifier $\mathbb{R}^m \ni u \mapsto \phi_\delta(u) := \frac{1}{\delta^m} \phi\left( \frac{u}{\delta} \right) \in \mathbb{R}$. Moreover, for any $\gamma \in (0,\infty)$ and $\eta \in \mathcal{M}_\gamma(\mathbb{R}^m)$, we define the function $\mathbb{R}^m \ni u \mapsto (\phi_\delta * \eta)(u) := \int_{\mathbb{R}^m} \phi_\delta(u-v) \eta(dv) \in \mathbb{R}$.

\begin{lemma}
	\label{LemmaConvo}
	For $\gamma \in (0,\infty)$, let $\eta \in \mathcal{M}_\gamma(\mathbb{R}^m)$ and $f \in \overline{C^0_b(\mathbb{R}^m)}^\gamma$. Then, the following holds true:
	\begin{enumerate}
		\item\label{LemmaConvo0} For every $\delta > 0$ the function $\phi_\delta * \eta: \mathbb{R}^m \rightarrow \mathbb{R}$ is smooth with $\partial_\alpha (\phi_\delta * \eta)(u) = (\partial_\alpha \phi_\delta * \eta)(u)$ for all $\alpha \in \mathbb{N}^m_0$ and $u \in \mathbb{R}^m$.
		\item\label{LemmaConvo1} For every $\delta > 0$ and $\alpha \in \mathbb{N}^m_0$ it holds that
		\begin{equation*}
			\quad\quad\quad \lim_{r \rightarrow \infty} \sup_{u \in \mathbb{R}^m \setminus \overline{B_r(0)}} \left\vert f(u) \partial_\alpha (\phi_\delta * \eta)(u) \right\vert = 0.
		\end{equation*}
		\item\label{LemmaConvo2} For every $\delta > 0$ and $\alpha \in \mathbb{N}^m_0$ it holds that $\partial_\alpha (\phi_\delta * \eta)(u) du\big\vert_{\mathcal{B}(\mathbb{R}^m)} \in \mathcal{M}_\gamma(\mathbb{R}^m)$.
		\item\label{LemmaConvo3} For every $\delta > 0$ and $\alpha \in \mathbb{N}^m_0$ the map
		\begin{equation*}
			(\overline{C^0_b(\mathbb{R}^m)}^\gamma, \Vert \cdot \Vert_{C^0_{pol,\gamma}(\mathbb{R}^m)}) \ni f \quad \mapsto \quad \int_{\mathbb{R}^m} f(u) \partial_\alpha (\phi_\delta * \eta)(u) du \in \mathbb{R}
		\end{equation*}
		is a continuous linear functional.
		\item\label{LemmaConvo4} For every $\delta > 0$ it holds that
		\begin{equation*}
			\int_{\mathbb{R}^m} f(u) (\phi_\delta * \eta)(u) du = \int_{\mathbb{R}^m} \int_{\mathbb{R}^m} f(u+y) \eta(du) \phi_\delta(y) dy.
		\end{equation*}
		\item\label{LemmaConvo5} It holds that
		\begin{equation*}
			\lim_{\delta \rightarrow 0} \int_{\mathbb{R}^m} f(u) (\phi_\delta * \eta)(u) du = \int_{\mathbb{R}^m} f(u) \eta(du).
		\end{equation*}
	\end{enumerate}
\end{lemma}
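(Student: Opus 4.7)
The plan is to prove the six claims in order, using the compact support of $\phi_\delta$ together with the moment condition $\int_{\mathbb{R}^m}(1+\Vert v\Vert)^\gamma\vert\eta\vert(dv) < \infty$ and the polynomial growth bound $\vert f(u)\vert \leq \Vert f\Vert_{C^0_{pol,\gamma}(\mathbb{R}^m)}(1+\Vert u\Vert)^\gamma$ satisfied by every $f \in \overline{C^0_b(\mathbb{R}^m)}^\gamma$.

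For \ref{LemmaConvo0}, I would differentiate under the integral sign: for each multi-index $\alpha \in \mathbb{N}^m_0$ and each compact set $K \subset \mathbb{R}^m$, the map $u \mapsto \partial_\alpha\phi_\delta(u-v)$ is uniformly bounded in $(u,v) \in K \times \mathbb{R}^m$ by $C_{\delta,\alpha} := \sup_{z \in \mathbb{R}^m}\vert\partial_\alpha\phi_\delta(z)\vert < \infty$, and $\vert\eta\vert(\mathbb{R}^m) < \infty$, so dominated convergence yields the claimed smoothness along with the identity $\partial_\alpha(\phi_\delta*\eta)(u) = (\partial_\alpha\phi_\delta * \eta)(u)$.

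For \ref{LemmaConvo1}, \ref{LemmaConvo2}, and \ref{LemmaConvo3} the key pointwise estimate, obtained from $\supp(\phi_\delta) \subseteq \overline{B_\delta(0)}$, is
\begin{equation*}
\vert\partial_\alpha(\phi_\delta*\eta)(u)\vert \leq C_{\delta,\alpha} \vert\eta\vert(\overline{B_\delta(u)}).
\end{equation*}
Combined with the inequality $(1+\Vert u\Vert)^\gamma \leq (1+\delta)^\gamma(1+\Vert v\Vert)^\gamma$ valid for every $v \in \overline{B_\delta(u)}$, this yields for \ref{LemmaConvo1}
\begin{equation*}
\vert f(u)\partial_\alpha(\phi_\delta*\eta)(u)\vert \leq C_{\delta,\alpha} (1+\delta)^\gamma \Vert f\Vert_{C^0_{pol,\gamma}(\mathbb{R}^m)} \int_{\mathbb{R}^m \setminus B_{\Vert u\Vert-\delta}(0)} (1+\Vert v\Vert)^\gamma \vert\eta\vert(dv),
\end{equation*}
whose right-hand side tends uniformly to $0$ on $\lbrace u \in \mathbb{R}^m : \Vert u\Vert \geq r\rbrace$ as $r \to \infty$. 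For \ref{LemmaConvo2}, Fubini's theorem applied to the same estimate gives
\begin{equation*}
\int_{\mathbb{R}^m}(1+\Vert u\Vert)^\gamma\vert\partial_\alpha(\phi_\delta*\eta)(u)\vert du \leq (1+\delta)^\gamma \left( \int_{\mathbb{R}^m} \vert\partial_\alpha\phi_\delta(z)\vert dz \right) \int_{\mathbb{R}^m}(1+\Vert v\Vert)^\gamma\vert\eta\vert(dv) < \infty,
\end{equation*}
which places $\nu_\alpha := \partial_\alpha(\phi_\delta*\eta)(u)du\vert_{\mathcal{B}(\mathbb{R}^m)}$ in $\mathcal{M}_\gamma(\mathbb{R}^m)$. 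Claim \ref{LemmaConvo3} then follows immediately from $\big\vert\int_{\mathbb{R}^m} f(u) d\nu_\alpha(u)\big\vert \leq \Vert f\Vert_{C^0_{pol,\gamma}(\mathbb{R}^m)} \int_{\mathbb{R}^m}(1+\Vert u\Vert)^\gamma\vert\partial_\alpha(\phi_\delta*\eta)(u)\vert du$, which gives both linearity and continuity.

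For \ref{LemmaConvo4}, the absolute integrability estimate from \ref{LemmaConvo2} (with $\alpha = 0$) justifies Fubini's theorem, and the substitution $y := u-v$ produces the claimed identity. Finally, for \ref{LemmaConvo5}, set $F(y) := \int_{\mathbb{R}^m} f(u+y) \eta(du)$ and use \ref{LemmaConvo4} to rewrite $\int_{\mathbb{R}^m} f(u)(\phi_\delta*\eta)(u)du = \int_{\mathbb{R}^m} F(y)\phi_\delta(y)dy$. Continuity of $F$ at the origin follows by dominated convergence, with dominating function $u \mapsto 2^\gamma\Vert f\Vert_{C^0_{pol,\gamma}(\mathbb{R}^m)}(1+\Vert u\Vert)^\gamma$ (valid for $\Vert y\Vert \leq 1$ and $\vert\eta\vert$-integrable), and since $(\phi_\delta)_{\delta > 0}$ is a standard approximation to the identity one concludes $\int_{\mathbb{R}^m} F(y)\phi_\delta(y)dy \to F(0) = \int_{\mathbb{R}^m} f(u)\eta(du)$ as $\delta \to 0$. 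The only step requiring real attention is balancing the polynomial growth of $f$ against the polynomial moment of $\vert\eta\vert$ through the localisation provided by $\phi_\delta$; once that balance is encoded in the basic estimate above, every remaining claim reduces to a routine application of Fubini's theorem or dominated convergence.
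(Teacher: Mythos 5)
Your argument is correct, and in several places it follows a genuinely different route from the paper. The differences are worth recording. For part (ii), you bound $(1+\Vert u\Vert)^\gamma\vert\partial_\alpha(\phi_\delta*\eta)(u)\vert$ by the tail integral $\int_{\mathbb{R}^m\setminus B_{\Vert u\Vert-\delta}(0)}(1+\Vert v\Vert)^\gamma\vert\eta\vert(dv)$ and let this tend to zero, so the conclusion uses only $\Vert f\Vert_{C^0_{pol,\gamma}(\mathbb{R}^m)}<\infty$ and the moment condition on $\eta$; the paper instead keeps $(1+\Vert u\Vert)^\gamma\vert\partial_\alpha(\phi_\delta*\eta)(u)\vert$ uniformly bounded and relies on Lemma~\ref{LemmaCkw} (i.e.\ the property $\lim_{r\to\infty}\sup_{\Vert u\Vert\geq r}\vert f(u)\vert/(1+\Vert u\Vert)^\gamma=0$ that distinguishes $\overline{C^0_b(\mathbb{R}^m)}^\gamma$ from $C^0_{pol,\gamma}(\mathbb{R}^m)$). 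Your variant is slightly more elementary and in fact proves a marginally stronger statement. For part (iii), you establish only the finiteness of $\int(1+\Vert u\Vert)^\gamma\vert\partial_\alpha(\phi_\delta*\eta)(u)\vert\,du$ and assert membership in $\mathcal{M}_\gamma(\mathbb{R}^m)$; since $\mathcal{M}_\gamma(\mathbb{R}^m)$ is by definition a space of \emph{finite signed Radon} measures, one still needs the (standard) fact that a finite Borel measure on $\mathbb{R}^m$ is automatically Radon---the paper checks local finiteness, outer and inner regularity by hand, so you should at least mention that you are invoking this automatic regularity on Polish spaces rather than leaving it tacit. For part (v), you work directly with $F(y):=\int_{\mathbb{R}^m}f(u+y)\,\eta(du)$, establish continuity of $F$ at $0$ by dominated convergence, and then apply the approximate-identity property of $(\phi_\delta)_{\delta>0}$ on the compactly supported side; the paper instead rewrites the integral via Fubini as $\int(\phi_\delta*f)(v)\,\eta(dv)$ and invokes the mollifier convergence $\phi_\delta*f\to f$ from \cite[Theorem~C.7]{evans10}. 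Your version is more self-contained and avoids the external mollifier-convergence theorem. Parts (i) and (iv) follow the paper's approach essentially verbatim.
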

\begin{proof}
	Fix some $\gamma \in (0,\infty)$, $\eta \in \mathcal{M}_\gamma(\mathbb{R}^m)$, $f \in \overline{C^0_b(\mathbb{R}^m)}^\gamma$, $\delta > 0$, and $\alpha \in \mathbb{N}^m_0$. For \ref{LemmaConvo0}, we first show that $\partial_\alpha \phi_\delta * \eta: \mathbb{R}^m \rightarrow \mathbb{R}$ is continuous. Indeed, we observe that for every $u,u_0,v \in \mathbb{R}^m$, it holds that
	\begin{equation}
		\label{EqLemmaConvoProof0a}
		\max\left( \left\vert \partial_\alpha \phi_\delta(u-v) \right\vert, \left\vert \partial_\alpha \phi_\delta(u_0-v) \right\vert \right) \leq C_{11} := \sup_{u_1 \in \mathbb{R}^m} \left\vert \partial_\alpha \phi_\delta(u_1) \right\vert < \infty.
	\end{equation}
	Then, the dominated convergence theorem (with \eqref{EqLemmaConvoProof0a} and that $\eta \in \mathcal{M}_\gamma(\mathbb{R}^m)$ is finite) implies that
	\begin{equation*}
		\lim_{u \rightarrow u_0} (\partial_\alpha \phi_\delta * \eta)(u) = \lim_{u \rightarrow u_0} \int_{\mathbb{R}^m} \partial_\alpha \phi_\delta(u-v) \eta(dv) = \int_{\mathbb{R}^m} \partial_\alpha \phi_\delta(u_0-v) \eta(dv) = (\partial_\alpha \phi_\delta * \eta)(u_0),
	\end{equation*}
	which shows that $\partial_\alpha \phi_\delta * \eta: \mathbb{R}^m \rightarrow \mathbb{R}$ is continuous. Moreover, for every fixed $\beta \in \mathbb{N}^m_0$ and $l = 1,...,m$ (with $e_l \in \mathbb{R}^m$ denoting the $l$-th unit vector of $\mathbb{R}^m$), we use the mean-value theorem to conclude for every $u,v \in \mathbb{R}^m$ and $h \in \mathbb{R}$ that
	\begin{equation}
		\label{EqLemmaConvoProof0b}
		\begin{aligned}
			& \max\left( \left\vert \frac{\partial_\beta \phi_\delta(u+he_l-v) - \partial_\beta \phi_\delta(u-v)}{h} \right\vert, \left\vert \partial_{\beta+e_l} \phi_\delta(u-v) \right\vert \right) \\
			& \quad\quad \leq C_{12} := \sup_{u_1 \in \mathbb{R}^m} \left\vert \partial_{\beta+e_l} \phi_\delta(u_1) \right\vert < \infty.
		\end{aligned}
	\end{equation}
	Then, the dominated convergence theorem (with \eqref{EqLemmaConvoProof0b} and that $\eta \in \mathcal{M}_\gamma(\mathbb{R}^m)$ is finite) implies that
	\begin{equation*}
		\begin{aligned}
			\partial_{e_l} (\partial_\alpha \phi_\delta * \eta)(u) & = \lim_{h \rightarrow 0} \frac{(\partial_\beta \phi_\delta * \eta)(u+he_l) - (\partial_\beta \phi_\delta * \eta)(u)}{h} \\
			& = \lim_{h \rightarrow 0} \int_{\mathbb{R}^m} \frac{\partial_\beta \phi_\delta(u+he_l-v) - \partial_\beta \phi_\delta(u-v)}{h} \eta(dv) \\
			& = \int_{\mathbb{R}^m} \partial_{\beta+e_l} \phi_\delta(u-v) \eta(dv) = (\partial_{\beta+e_l} \phi_\delta * \eta)(u).
		\end{aligned}
	\end{equation*}
	Hence, by induction on $\beta \in \mathbb{N}_0^m$, it follows that $\partial_\alpha (\phi_\delta * \eta)(u) = (\partial_\alpha \phi_\delta * \eta)(u)$ for any $u \in \mathbb{R}^m$. This together with the previous step shows \ref{LemmaConvo0}.
	
	For \ref{LemmaConvo1}, we use \ref{LemmaConvo0}, that $\supp(\phi_\delta) = B_\delta(0)$ implies $\supp(\partial_\alpha \phi_\delta) \subseteq B_\delta(0)$, the inequality $1+x+y \leq (1+x)(1+y)$ for any $x,y \geq 0$, that the constant $C_{13} := \sup_{y \in \mathbb{R}^m} \left\vert \partial_\alpha \phi_\delta(y) \right\vert > 0$ is finite, and that $\eta \in \mathcal{M}_\gamma(\mathbb{R}^m)$ to conclude that
	\begin{equation*}
		\begin{aligned}
			C_{14} & := \sup_{u \in \mathbb{R}^m} \big( (1+\Vert u \Vert)^\gamma \left\vert (\phi_\delta * \eta)(u) \right\vert \big) \leq \sup_{u \in \mathbb{R}^m} \int_{\mathbb{R}^m} (1+\Vert u \Vert)^\gamma \left\vert \partial_\alpha \phi_\delta(u-v) \right\vert \vert \eta \vert(dv) du \\
			& \leq \sup_{u \in \mathbb{R}^m} \int_{\mathbb{R}^m} (1+\underbrace{\Vert u-v \Vert}_{\leq \delta}+\Vert v \Vert)^\gamma \left\vert \partial_\alpha \phi_\delta(u-v) \right\vert \vert \eta \vert(dv) \leq C_{13} (1+\delta)^\gamma \int_{\mathbb{R}^m} (1+\Vert v \Vert)^\gamma \vert \eta \vert(dv) < \infty.
		\end{aligned}
	\end{equation*}
	Hence, by using this and that $f \in \overline{C^0_b(\mathbb{R}^m)}^\gamma$ together with Lemma~\ref{LemmaCkw}, it follows that
	\begin{equation*}
		\begin{aligned}
			\lim_{r \rightarrow \infty} \sup_{u \in \mathbb{R}^m \setminus \overline{B_r(0)}} \left\vert f(u) \partial_\alpha (\phi_\delta * \eta)(u) \right\vert & = \lim_{r \rightarrow \infty} \sup_{u \in \mathbb{R}^m \setminus \overline{B_r(0)}} \left( \frac{\vert f(u) \vert}{(1+\Vert u \Vert)^\gamma} (1+\Vert u \Vert)^\gamma \left\vert \partial_\alpha (\phi_\delta * \eta)(u) \right\vert \right) \\
			& = C_{14} \lim_{r \rightarrow \infty} \sup_{u \in \mathbb{R}^m \setminus \overline{B_r(0)}} \frac{\vert f(u) \vert}{(1+\Vert u \Vert)^\gamma} = 0,
		\end{aligned}
	\end{equation*}
	which shows \ref{LemmaConvo1}.
	
	For \ref{LemmaConvo2}, we first prove that $\partial_\alpha (\phi_\delta * \eta)(u) du\big\vert_{\mathcal{B}(\mathbb{R}^m)}: \mathcal{B}(\mathbb{R}^m) \rightarrow \mathbb{R}$ is a signed Radon measure. For this purpose, we denote its positive and negative part by $\eta_{\delta,\pm} := \pm \left( \partial_\alpha (\phi_\delta * \eta)(u) \right)_{\pm} du\big\vert_{\mathcal{B}(\mathbb{R}^m)}: \mathcal{B}(\mathbb{R}^m) \rightarrow [0,\infty]$ satisfying $\eta_{\delta,+}-\eta_{\delta,-} = \partial_\alpha (\phi_\delta * \eta)(u) du\big\vert_{\mathcal{B}(\mathbb{R}^m)}$, where $s_+ := \max(s,0)$ and $s_- := -\min(s,0)$, for any $s \in \mathbb{R}$. Moreover, we define the finite constant $C_{15} := \sup_{u \in \mathbb{R}^m} \left\vert \partial_\alpha \phi_\delta(u) \right\vert > 0$. Then, for every $u \in \mathbb{R}^m$, we choose a compact subset $K \subset \mathbb{R}^m$ with $u \in K$ and use that $\eta \in \mathcal{M}_\gamma(\mathbb{R}^m)$ is finite to conclude that
	\begin{equation*}
		\begin{aligned}
			\eta_{\delta,\pm}(K) & = \pm \int_{K} \left( \partial_\alpha (\phi_\delta * \eta)(u) \right)_{\pm} du \leq \bigg( \underbrace{\int_{K} du}_{=: \vert K \vert} \bigg) \sup_{u \in K} \left\vert (\partial_\alpha \phi_\delta * \eta)(u) \right\vert \\
			& \leq \vert K \vert \sup_{u \in K} \int_{\mathbb{R}^m} \left\vert \partial_\alpha \phi_\delta(u-v) \right\vert \vert \eta \vert(dv) \leq C_{15} \vert K \vert \vert \eta \vert(\mathbb{R}^m) < \infty.
		\end{aligned}
	\end{equation*}	
	This shows that both measures $\eta_{\delta,\pm}: \mathcal{B}(\mathbb{R}^m) \rightarrow [0,\infty]$ are locally finite. In addition, it holds for every $B \in \mathcal{B}(\mathbb{R}^m)$ that
	\begin{equation*}
		\begin{aligned}
			\eta_{\delta,\pm}(B) & = \pm \int_B \left( \partial_\alpha (\phi_\delta * \eta)(u) \right)_{\pm} du \\
			& = \inf\left\lbrace \pm \int_U \left( \partial_\alpha (\phi_\delta * \eta)(u) \right)_{\pm} du: U \subseteq \mathbb{R}^m \text{ open with } B \subseteq U \right\rbrace \\
			& = \inf\left\lbrace \eta_{\delta,\pm}(U): U \subseteq \mathbb{R}^m \text{ open with } B \subseteq U \right\rbrace,
		\end{aligned}
	\end{equation*}
	which shows that both measures $\eta_{\delta,\pm}: \mathcal{B}(\mathbb{R}^m) \rightarrow [0,\infty]$ are outer regular. Moreover, it holds for every $B \in \mathcal{B}(\mathbb{R}^m)$ that
	\begin{equation*}
		\begin{aligned}
			\eta_{\delta,\pm}(B) & = \pm \int_B \left( \partial_\alpha (\phi_\delta * \eta)(u) \right)_{\pm} du \\
			& = \sup\left\lbrace \pm \int_K \left( \partial_\alpha (\phi_\delta * \eta)(u) \right)_{\pm} du: K \subset B \text{ relatively compact} \right\rbrace \\
			& = \sup\left\lbrace \eta_{\delta,\pm}(K): K \subset B \text{ relatively compact} \right\rbrace,
		\end{aligned}
	\end{equation*}
	which shows that both measures $\eta_{\delta,\pm}: \mathcal{B}(\mathbb{R}^m) \rightarrow [0,\infty]$ are inner regular. Hence, both measures $\eta_{\delta,\pm}: \mathcal{B}(\mathbb{R}^m) \rightarrow [0,\infty]$ are Radon measures and $\partial_\alpha (\phi_\delta * \eta)(u) du\big\vert_{\mathcal{B}(\mathbb{R}^m)} = \eta_{\delta,+}-\eta_{\delta,-}: \mathcal{B}(\mathbb{R}^m) \rightarrow [0,\infty]$ is thus a signed Radon measure. Furthermore, by using the triangle inequality, that $\supp(\phi_\delta) = B_\delta(0)$ implies $\supp(\partial_\alpha \phi_\delta) \subseteq B_\delta(0)$, the inequality $1+x+y \leq (1+x)(1+y)$ for any $x,y \geq 0$, the substitution $y \mapsto u-v$ together with $\Vert \partial_\alpha \phi_\delta \Vert_{L^1(\mathbb{R}^m,\mathcal{L}(\mathbb{R}^m),du)} < \infty$, and that $\eta \in \mathcal{M}_\gamma(\mathbb{R}^m)$, we have
	\begin{equation*}
		\begin{aligned}
			\int_{\mathbb{R}^m} (1+\Vert u \Vert)^\gamma \left\vert \partial_\alpha (\phi_\delta * \eta)(u) \right\vert du & \leq \int_{\mathbb{R}^m} \int_{\mathbb{R}^m} (1+\Vert u \Vert)^\gamma \vert \partial_\alpha \phi_\delta(u-v) \vert du \vert \eta \vert(dv) \\
			& \leq \int_{\mathbb{R}^m} \int_{\mathbb{R}^m} (1+\underbrace{\Vert u-v \Vert}_{\leq \delta}+\Vert v \Vert)^\gamma \left\vert \partial_\alpha \phi_\delta(u-v) \right\vert du \vert \eta \vert(dv) \\
			& \leq (1+\delta)^\gamma \left( \sup_{v \in \mathbb{R}^m} \int_{\mathbb{R}^m} \left\vert \partial_\alpha \phi_\delta(u-v) \right\vert du \right) \left( \int_{\mathbb{R}^m} (1+\Vert v \Vert)^\gamma \vert \eta \vert(dv) \right) \\
			& \leq (1+\delta)^\gamma \Vert \partial_\alpha \phi_\delta \Vert_{L^1(\mathbb{R}^m,\mathcal{L}(\mathbb{R}^m),du)} \left( \int_{\mathbb{R}^m} (1+\Vert v \Vert)^\gamma \vert \eta \vert(dv) \right) < \infty. \\
		\end{aligned}
	\end{equation*}
	This shows that $\partial_\alpha (\phi_\delta * \eta)(u) du\big\vert_{\mathcal{B}(\mathbb{R}^m)} \in \mathcal{M}_\gamma(\mathbb{R}^m)$ is a finite signed Radon measure.	
	
	For \ref{LemmaConvo3}, we use \ref{LemmaConvo2} to conclude that the constant $C_{16} := \int_{\mathbb{R}^m} (1+\Vert u \Vert)^\gamma \left\vert (\phi_\delta * \eta)(u) \right\vert du > 0$ is finite. Then, it follows for every $f \in \overline{C^0_b(\mathbb{R}^m)}^\gamma$ that
	\begin{equation*}
		\begin{aligned}
			\left\vert \int_{\mathbb{R}^m} f(u) \partial_\alpha (\phi_\delta * \eta)(u) du \right\vert & \leq \left( \sup_{u \in \mathbb{R}^m} \frac{\vert f(u) \vert}{(1+\Vert u \Vert)^\gamma} \right) \int_{\mathbb{R}^m} (1 + \Vert u \Vert)^\gamma \left\vert \partial_\alpha (\phi_\delta * \eta)(u) \right\vert du \\
			& = C_{16} \Vert f \Vert_{C^0_{pol,\gamma}(\mathbb{R}^m)},
		\end{aligned}
	\end{equation*}
	which shows that $\overline{C^0_b(\mathbb{R}^m)}^\gamma \ni f \mapsto \int_{\mathbb{R}^m} f(u) \partial_\alpha (\phi_\delta * \eta)(u) du \in \mathbb{R}$ is a continuous linear functional.
	
	For \ref{LemmaConvo4}, we use the substitution $u \mapsto v+y$ to conclude that
	\begin{equation*}
		\begin{aligned}
			\int_{\mathbb{R}^m} f(u) (\phi_\delta * \eta)(u) du & = \int_{\mathbb{R}^m} \int_{\mathbb{R}^m} f(u) \phi_\delta(u-v) \eta(dv) du \\
			& = \int_{\mathbb{R}^m} \int_{\mathbb{R}^m} f(v+y) \eta(dv) \phi_\delta(y) dy.
		\end{aligned}
	\end{equation*}
	
	For \ref{LemmaConvo5}, we define for every $\delta \in (0,1)$ the function $\mathbb{R}^m \ni u \mapsto (\phi_\delta * f)(u) := \int_{\mathbb{R}^m} \phi_\delta(u-v) f(v) dv \in \mathbb{R}$. Then, by using the triangle inequality, that $\supp(\phi_\delta) = B_\delta(0)$, the substitution $y \mapsto u-v$ together with $\int_{\mathbb{R}^m} \vert \phi_\delta(y) \vert dy = \Vert \phi_\delta \Vert_{L^1(\mathbb{R}^m,\mathcal{L}(\mathbb{R}^m),du)} = \Vert \phi \Vert_{L^1(\mathbb{R}^m,\mathcal{L}(\mathbb{R}^m),du)} = 1$, the inequality $1+x+y \leq (1+x)(1+y)$ for any $x,y \geq 0$, and that $f \in \overline{C^0_b(\mathbb{R}^m)}^\gamma$, it follows for every $u \in \mathbb{R}^m$ that
	\begin{equation}
		\label{EqLemmaConvoProof1}
		\begin{aligned}
			\left\vert (\phi_\delta * f)(u) \right\vert & \leq \int_{\mathbb{R}^m} \vert \phi_\delta(u-v) \vert \frac{\vert f(v) \vert}{(1+\Vert v \Vert)^\gamma} (1+\Vert v \Vert)^\gamma dv \\
			& \leq \int_{\mathbb{R}^m} \vert \phi_\delta(u-v) \vert \frac{\vert f(v) \vert}{(1+\Vert v \Vert)^\gamma} (1+\Vert u \Vert + \underbrace{\Vert u-v \Vert}_{\leq \delta})^\gamma dv \\
			& \leq \left( \int_{\mathbb{R}^m} \vert \phi_\delta(u-v) \vert dv \right) \left( \sup_{v \in \mathbb{R}^m} \frac{\vert f(v) \vert}{(1+\Vert v \Vert)^\gamma} \right) (1+\Vert u \Vert+\delta)^\gamma \\
			& \leq \left( \int_{\mathbb{R}^m} \vert \phi_\delta(y) \vert dy \right) \Vert f \Vert_{C^0_{pol,\gamma}(\mathbb{R}^m)} (1+\delta)^\gamma (1+\Vert u \Vert)^\gamma \\
			& \leq 2^\gamma \Vert f \Vert_{C^0_{pol,\gamma}(\mathbb{R}^m)} (1+\Vert u \Vert)^\gamma.
		\end{aligned}
	\end{equation}
	Moreover, by using that $f \in \overline{C^0_b(\mathbb{R}^m)}^\gamma$, we conclude for every $u \in \mathbb{R}^m$ that
	\begin{equation}
		\label{EqLemmaConvoProof2}
		\vert f(u) \vert \leq \left( \sup_{u \in \mathbb{R}^m} \frac{\vert f(u) \vert}{(1+\Vert u \Vert)^\gamma} \right) (1+\Vert u \Vert)^\gamma \leq \Vert f \Vert_{C^0_{pol,\gamma}(\mathbb{R}^m)} (1+\Vert u \Vert)^\gamma.
	\end{equation} 
	Hence, by using \ref{LemmaConvo4}, Fubini's theorem, the substitution $u \mapsto v+y$, and the dominated convergence theorem (with \eqref{EqLemmaConvoProof1}, \eqref{EqLemmaConvoProof2}, $(1+\Vert u \Vert)^\gamma \in L^1(\mathbb{R}^m,\mathcal{B}(\mathbb{R}^m),\vert \eta \vert)$ as $\eta \in \mathcal{M}_\gamma(\mathbb{R}^m)$, and \cite[Theorem~C.7]{evans10}, i.e.~that $\phi_\delta * f: \mathbb{R}^m \rightarrow \mathbb{R}$ converges a.e.~to $f: \mathbb{R}^m \rightarrow \mathbb{R}$, as $\delta \rightarrow 0$), it follows that
	\begin{equation*}
		\begin{aligned}
			\lim_{\delta \rightarrow 0} \int_{\mathbb{R}^m} f(u) (\phi_\delta * \eta)(u) du & = \lim_{\delta \rightarrow 0} \int_{\mathbb{R}^m} \int_{\mathbb{R}^m} f(v+y) \eta(dv) \phi_\delta(y) dy \\
			& = \lim_{\delta \rightarrow 0} \int_{\mathbb{R}^m} \left( \int_{\mathbb{R}^m} f(v+y) \phi_\delta(y) dy \right) \eta(dv) \\
			& = \lim_{\delta \rightarrow 0} \int_{\mathbb{R}^m} \left( \int_{\mathbb{R}^m} \phi(v-u) f(u) du \right) \eta(dv) \\
			& = \lim_{\delta \rightarrow 0} \int_{\mathbb{R}^m} (\phi_\delta * f)(v) \eta(dv) \\
			& = \int_{\mathbb{R}^m} f(v) \eta(dv),
		\end{aligned}
	\end{equation*}
	which completes the proof.
\end{proof}

Finally, we provide the proof of Theorem~\ref{ThmUAT}, i.e.~the universal approximation property of neural networks $\mathcal{NN}^\rho_{U,d}$ within any function space $(X,\Vert \cdot \Vert_X)$ satisfying Assumption~\ref{AssApprox} with $k \in \mathbb{N}_0$, $U \subseteq \mathbb{R}^m$ (open, if $k \geq 1$), and $\gamma \in (0,\infty)$, where $\rho \in \overline{C^k_b(\mathbb{R})}^\gamma$ is the activation function.

\begin{proof}[Proof of Theorem~\ref{ThmUAT}]
	First, we apply Lemma~\ref{LemmaProp} to conclude that $\mathcal{NN}^\rho_{\mathbb{R}^m,d} \subseteq \overline{C^k_b(\mathbb{R}^m;\mathbb{R}^d)}^\gamma$ and that $\mathcal{NN}^\rho_{U,d} \subseteq X$. Now, we assume by contradiction that $\mathcal{NN}^\rho_{U,d}$ is not dense in $X$. Then, by using that $(X,\Vert \cdot \Vert_X)$ satisfies Assumption~\ref{AssApprox}, i.e.~that the restriction map \eqref{EqAssApprox1} is a continuous dense embedding, it follows from Remark~\ref{RemApprox} that $\mathcal{NN}^\rho_{\mathbb{R}^m,d}$ cannot be dense in $\overline{C^k_b(\mathbb{R}^m;\mathbb{R}^d)}^\gamma$. Hence, by applying the Hahn-Banach theorem, there exists a non-zero continuous linear functional $l: \overline{C^k_b(\mathbb{R}^m;\mathbb{R}^d)}^\gamma \rightarrow \mathbb{R}$ such that for every $\varphi \in \mathcal{NN}^\rho_{\mathbb{R}^m,d}$ it holds that $l(\varphi) = 0$.
	
	Next, we use the Riesz representation result in Proposition~\ref{PropRiesz} to conclude that there exist some signed Radon measures $(\eta_{\alpha,i})_{\alpha \in \mathbb{N}^m_{0,k}, \, i = 1,...,d} \in \mathcal{M}_\gamma(\mathbb{R}^m)$ such that for every $f \in \overline{C^k_b(\mathbb{R}^m;\mathbb{R}^d)}^\gamma$ it holds that
	\begin{equation*}
		l(f) = \sum_{\alpha \in \mathbb{N}^m_{0,k}} \sum_{i=1}^d \int_{\mathbb{R}^m} \partial_\alpha f_i(u) \eta_{\alpha,i}(du).
	\end{equation*}
	Since $l(\varphi) = 0$ for any $\varphi \in \mathcal{NN}^\rho_{\mathbb{R}^m,d}$, it follows for every $a \in \mathbb{R}^m$, $b \in \mathbb{R}$, and $i = 1,...,d$ that
	\begin{equation}
		\label{EqThmUATProof1}
		l\left( e_i \rho\left( \lambda a^\top \cdot - b \right) \right) = \sum_{\alpha \in \mathbb{N}^m_{0,k}} \int_{\mathbb{R}^m} \rho^{(\vert \alpha \vert)}\left( a^\top u - b \right) a^\alpha \eta_{\alpha,i}(du) = 0,
	\end{equation}
	where $e_i \rho\big( \lambda a^\top \cdot - b \big)$ denotes the function $\mathbb{R}^m \ni u \mapsto e_i \rho\big( \lambda a^\top u - b \big) \in \mathbb{R}^d$ with $e_i \in \mathbb{R}^d$ being the $i$-th unit vector of $\mathbb{R}^d$, and where $a^\alpha := \prod_{l=1}^m a_l^{\alpha_l}$ for $a := (a_1,...,a_m) \in \mathbb{R}^m$ and $\alpha := (\alpha_1,...,\alpha_m) \in \mathbb{N}^m_{0,k}$.
	
	Now, we define for every fixed $\delta > 0$ the linear map $l_\delta: \overline{C^k_b(\mathbb{R}^m;\mathbb{R}^d)}^\gamma \rightarrow \mathbb{R}$ by
	\begin{equation*}
		\overline{C^k_b(\mathbb{R}^m;\mathbb{R}^d)}^\gamma \ni f \quad \mapsto \quad l_\delta(f) := \sum_{\alpha \in \mathbb{N}^m_{0,k}} \sum_{i=1}^d \int_{\mathbb{R}^m} \partial_\alpha f_i(u) (\phi_\delta * \eta)(u) du \in \mathbb{R}.
	\end{equation*}
	Then, Lemma~\ref{LemmaConvo}~\ref{LemmaConvo3} shows that $l_\delta: \overline{C^k_b(\mathbb{R}^m;\mathbb{R}^d)}^\gamma \rightarrow \mathbb{R}$ is a continuous linear functional as it is a finite sum of the continuous linear functionals $\overline{C^k_b(\mathbb{R}^m;\mathbb{R}^d)}^\gamma \ni f \mapsto \int_{\mathbb{R}^m} \partial_\alpha f_i(u) (\phi_\delta * \eta)(u) du \in \mathbb{R}$ taken over $\alpha \in \mathbb{N}^m_{0,k}$ and $i = 1,...,d$. Moreover, for every fixed $i = 1,...,d$, we define
	\begin{equation*}
		\mathbb{R}^m \ni u \mapsto h_{\delta,i}(u) := \sum_{\alpha \in \mathbb{N}^m_{0,k}} (-1)^{\vert \alpha \vert} \partial_\alpha\left( \phi_\delta * \eta_{\alpha,i} \right)(u) \in \mathbb{R},
	\end{equation*}
	which satisfies $h_{\delta,i}(u) du \in \mathcal{M}_\gamma(\mathbb{R}^m)$ as it is a finite linear combination of finite signed Radon measures $\partial_\alpha\left( \phi_\delta * \eta_{\alpha,i} \right)(u) du \in \mathcal{M}_\gamma(\mathbb{R}^m)$ taken over $\alpha \in \mathbb{N}^m_{0,k}$ (see Lemma~\ref{LemmaConvo}~\ref{LemmaConvo2}). Hence, integration by parts together with Lemma~\ref{LemmaConvo}~\ref{LemmaConvo1} shows that
	\begin{equation*}
		\begin{aligned}
			l_\delta(f) & = \sum_{\alpha \in \mathbb{N}^m_{0,k}} \sum_{i=1}^d \int_{\mathbb{R}^m} \partial_\alpha f_i(u) \left( \phi_\delta * \eta_{\alpha,i} \right)(u) du \\
			& = \sum_{\alpha \in \mathbb{N}^m_{0,k}} \sum_{i=1}^d (-1)^{\vert \alpha \vert} \int_{\mathbb{R}^m} f_i(u) \partial_\alpha \left( \phi_\delta * \eta_{\alpha,i} \right)(u) du \\
			& = \sum_{i=1}^d \int_{\mathbb{R}^m} f_i(u) h_{\delta,i}(u) du.
		\end{aligned}
	\end{equation*}
	Thus, by using this, Lemma~\ref{LemmaConvo}~\ref{LemmaConvo4}, and \eqref{EqThmUATProof1} (with $b - a^\top y \in \mathbb{R}$ instead of $b \in \mathbb{R}$), it follows for every $a \in \mathbb{R}^m$, $b \in \mathbb{R}$, and $i = 1,...,d$ that
	\begin{equation*}
		\begin{aligned}
			\int_{\mathbb{R}^m} \rho\left( a^\top u - b \right) h_{\delta,i}(u) du & = \sum_{\alpha \in \mathbb{N}^m_{0,k}} \int_{\mathbb{R}^m} \rho^{(\vert \alpha \vert)}\left( a^\top u - b \right) a^\alpha \left( \phi_\delta * \eta_{\alpha,i} \right)(u) du \\
			& = \sum_{\alpha \in \mathbb{N}^m_{0,k}} \int_{\mathbb{R}^m} \rho^{(\vert \alpha \vert)}\left( a^\top (u+y) - b \right) a^\alpha \eta_{\alpha,i}(du) \phi_\delta(y) du \\
			& = \int_{\mathbb{R}^m} \underbrace{l\left( e_i \rho\left( a^\top \cdot - \left( b - a^\top y \right) \right) \right)}_{=0} \phi_\delta(y) dy = 0.
		\end{aligned}
	\end{equation*}
	Now, for every $i = 1,...,d$, we apply Proposition~\ref{PropDiscr} with $h_{\delta,i}(u) du \in \mathcal{M}_\gamma(\mathbb{R}^m)$ to conclude that $h_{\delta,i}(u) du = 0 \in \mathcal{M}_\gamma(\mathbb{R}^m)$, and thus $h_{\delta,i}(u) = 0$ for a.e.~$u \in \mathbb{R}^m$. Hence, it follows for every $f \in \overline{C^k_b(\mathbb{R}^m;\mathbb{R}^d)}^\gamma$ that
	\begin{equation*}
		l_\delta(f) = \sum_{i=1}^d \int_{\mathbb{R}^m} f_i(u) h_{\delta,i}(u) du = 0,
	\end{equation*}
	which shows that $l_\delta: C^k_b(\mathbb{R}^m;\mathbb{R}^d) \rightarrow \mathbb{R}$ vanishes everywhere on $C^k_b(\mathbb{R}^m;\mathbb{R}^d)$.
	
	Finally, we use Lemma~\ref{LemmaConvo}~\ref{LemmaConvo5} to conclude for every $f \in \overline{C^k_b(\mathbb{R}^m;\mathbb{R}^d)}^\gamma$ that
	\begin{equation*}
		\begin{aligned}
			l(f) & = \sum_{\alpha \in \mathbb{N}^m_{0,k}} \sum_{i=1}^d \int_{\mathbb{R}^m} \partial_\alpha f_i(u) \eta_{\alpha,i}(du) = \lim_{\delta \rightarrow \infty} \sum_{\alpha \in \mathbb{N}^m_{0,k}} \sum_{i=1}^d \int_{\mathbb{R}^m} f_i(u) (\phi_\delta * \eta)(u) du = \lim_{\delta \rightarrow \infty} l_\delta(f) = 0,
		\end{aligned}
	\end{equation*}
	which shows that $l:\overline{C^k_b(\mathbb{R}^m;\mathbb{R}^d)}^\gamma \rightarrow \mathbb{R}$ vanishes everywhere. This however contradicts the assumption that $l: \overline{C^k_b(\mathbb{R}^m;\mathbb{R}^d)}^\gamma \rightarrow \mathbb{R}$ is non-zero. Hence, $\mathcal{NN}^\rho_{U,d}$ is dense in $X$.
\end{proof}

\subsubsection{Proof of Example~\ref{ExApprox}.}
\label{AppApproxNonPoly}

For the proof of Example~\ref{ExApprox}~\ref{ExApproxWkpw}, we first generalize the approximation result for compactly supported smooth functions in unweighted Sobolev spaces (see \cite[Theorem~3.18]{adams75}) to weighted Sobolev spaces $W^{k,p}(U,\mathcal{L}(U),w;\mathbb{R}^d)$ introduced in Footnote~\ref{FootnoteWkpw}.

\begin{proposition}[Approximation in Weighted Sobolev Spaces]
	\label{PropApproxWkpw}
	For $k \in \mathbb{N}$, $p \in [1,\infty)$, and $U \subseteq \mathbb{R}^m$ open and having the segment property, let $w: U \rightarrow [0,\infty)$ be a bounded weight such that for every bounded subset $B \subseteq U$ it holds that $\inf_{u \in B} w(u) > 0$. Then, $\left\lbrace f\vert_U: U \rightarrow \mathbb{R}^d: f \in C^\infty_c(\mathbb{R}^m;\mathbb{R}^d) \right\rbrace$ is dense in $W^{k,p}(U,\mathcal{L}(U),w;\mathbb{R}^d)$.
\end{proposition}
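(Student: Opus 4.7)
The plan is to mimic the standard truncate-then-mollify density argument for unweighted Sobolev spaces but carried out through the weighted norm. Throughout I treat the vector-valued case componentwise, so it suffices to handle $d = 1$. Fix $f \in W^{k,p}(U,\mathcal{L}(U),w;\mathbb{R})$ and $\varepsilon > 0$.

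First I would perform a smooth truncation. Choose $\chi \in C^\infty_c(\mathbb{R}^m)$ with $\chi = 1$ on $\overline{B_1(0)}$ and $\chi = 0$ outside $B_2(0)$, and set $\chi_n(u) := \chi(u/n)$, so that $\max_{\alpha \in \mathbb{N}^m_{0,k}} \sup_u |\partial_\alpha \chi_n(u)|$ is bounded uniformly in $n$. The Leibniz rule expresses $\partial_\alpha(\chi_n f - f)$ as a finite sum of terms $\partial_\beta(\chi_n - 1)\, \partial_{\alpha-\beta} f$, each supported in $U \setminus \overline{B_n(0)}$ and pointwise dominated (up to a constant independent of $n$) by $|\partial_{\alpha-\beta} f|$. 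Dominated convergence in $L^p(U,w(u)du)$, which applies because each $\partial_{\alpha-\beta} f \in L^p(U, w(u)du)$ and the integrands vanish a.e.~as $n \to \infty$, then yields $\chi_n f \to f$ in $W^{k,p}(U,\mathcal{L}(U),w;\mathbb{R})$. Choose $n$ large enough that $\|\chi_n f - f\|_{W^{k,p}(U,\mathcal{L}(U),w;\mathbb{R})} < \varepsilon/2$.

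Next I would pass from the weighted to the unweighted Sobolev space on the (bounded) set $B := U \cap B_{2n}(0)$ that contains the support of $\chi_n f$. By hypothesis $c := \inf_{u \in B} w(u) > 0$, so for every $\alpha \in \mathbb{N}^m_{0,k}$,
\begin{equation*}
\int_U \left\| \partial_\alpha(\chi_n f)(u) \right\|^p du \;=\; \int_B \left\| \partial_\alpha(\chi_n f)(u) \right\|^p du \;\leq\; \frac{1}{c}\int_U \left\| \partial_\alpha(\chi_n f)(u) \right\|^p w(u)\, du \;<\; \infty,
\end{equation*}
which, together with the fact that a.e.~positivity of $w$ and local boundedness of $1/w$ on $B$ make the weak derivatives in $L^p(U,w(u)du)$ coincide with the ordinary weak derivatives, shows $\chi_n f \in W^{k,p}(U,\mathcal{L}(U),du;\mathbb{R})$. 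The segment property of $U$ now lets me invoke the classical density statement (Adams \cite[Theorem~3.18]{adams75}): there exists $g \in C^\infty_c(\mathbb{R}^m;\mathbb{R})$ with $\|g|_U - \chi_n f\|_{W^{k,p}(U,\mathcal{L}(U),du;\mathbb{R})}$ as small as desired.

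Finally, I would transfer this unweighted approximation back into the weighted norm using the global boundedness of $w$. Setting $M := \sup_{u \in U} w(u) < \infty$, one has
\begin{equation*}
\|g|_U - \chi_n f\|_{W^{k,p}(U,\mathcal{L}(U),w;\mathbb{R})}^p \;\leq\; M\, \|g|_U - \chi_n f\|_{W^{k,p}(U,\mathcal{L}(U),du;\mathbb{R})}^p,
\end{equation*}
so $g$ can be chosen with $\|g|_U - \chi_n f\|_{W^{k,p}(U,\mathcal{L}(U),w;\mathbb{R})} < \varepsilon/2$, and the triangle inequality gives $\|g|_U - f\|_{W^{k,p}(U,\mathcal{L}(U),w;\mathbb{R})} < \varepsilon$. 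The main (only) delicate point is the truncation step: one must verify that even when $w$ is unbounded from below or decays at infinity, the tails of $\partial_{\alpha-\beta} f$ in the weighted norm vanish, which is precisely what dominated convergence provides given $f \in W^{k,p}(U,\mathcal{L}(U),w;\mathbb{R})$. The boundedness assumption on $w$ is used only at the very end to import the classical unweighted approximation, and the local positivity is used only to guarantee the intermediate membership in the unweighted Sobolev space.
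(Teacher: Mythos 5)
Your proof is correct and follows essentially the same route as the paper: truncate via a smooth cutoff whose effect is controlled by the Leibniz rule, use the local lower bound on $w$ to place the truncated function in the unweighted Sobolev space so that Adams' Theorem~3.18 applies (here the segment property enters), and transfer the resulting $C^\infty_c$ approximation back into the weighted norm using the global upper bound on $w$. The only cosmetic difference is that you justify the vanishing of the truncation error by dominated convergence, whereas the paper records it as an explicit tail bound $\Vert f \Vert_{W^{k,p}(U\setminus\overline{B_r(0)},\mathcal{L}(U\setminus\overline{B_r(0)}),w;\mathbb{R}^d)}\to 0$; these are the same observation.
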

\begin{proof}
	First, we follow \cite[Theorem~3.18]{adams75} to show that every fixed function $f \in W^{k,p}(U,\mathcal{L}(U),w;\mathbb{R}^d)$ can be approximated by elements from the set $\left\lbrace f\vert_U: U \rightarrow \mathbb{R}^d: f \in C^\infty_c(\mathbb{R}^m;\mathbb{R}^d) \right\rbrace$ with respect to $\Vert \cdot \Vert_{W^{k,p}(U,\mathcal{L}(U),w;\mathbb{R}^d)}$. To this end, we choose some $h \in C^\infty_c(\mathbb{R}^m)$ which satisfies $h(u) = 1$ for all $u \in \overline{B_1(0)}$, $h(u) = 0$ for all $u \in \mathbb{R}^m \setminus B_2(0)$, and for which there exists a constant $C_h > 0$ such that for every $\alpha \in \mathbb{N}^m_{0,k}$ and $u \in \mathbb{R}^m$ it holds that $\vert \partial_\alpha h(u) \vert \leq C_h$. In addition, we define for every fixed $r > 1$ the functions $\mathbb{R}^m \ni u \mapsto h_r(u) := h(u/r) \in \mathbb{R}$ and $U \ni u \mapsto f_r(u) := f(u) h_r(u) \in \mathbb{R}^d$, which both have bounded support. Then, by using the Leibniz product rule together with the triangle inequality, that $\vert \partial_\alpha h_r(u) \vert = \left\vert \partial_\alpha h(u/r) \right\vert r^{-\vert \alpha \vert} \leq C_h$ for any $\alpha \in \mathbb{N}^m_{0,k}$ and $u \in \mathbb{R}^m$, and the inequality \eqref{EqLemmaCkwProof2}, it follows for every $\alpha \in \mathbb{N}^m_{0,k}$ and $u \in U$ that
	\begin{equation*}
		\begin{aligned}
			\Vert \partial_\alpha f_r(u) \Vert^p & \leq \left( \sum_{\beta_1,\beta_2 \in \mathbb{N}^m_0 \atop \beta_1+\beta_2=\alpha} \frac{\alpha!}{\beta_1! \beta_2!} \vert \partial_{\beta_1} h_r(u) \vert \Vert \partial_{\beta_2} f(u) \Vert \right)^p \\
			& \leq 2^{kp} C_h^p \max_{\beta_2 \in \mathbb{N}^m_{0,k}} \Vert \partial_{\beta_2} f(u) \Vert^p \\
			& \leq 2^{kp} C_h^p \sum_{\beta_2 \in \mathbb{N}^m_{0,k}} \Vert \partial_{\beta_2} f(u) \Vert^p.
		\end{aligned} 
	\end{equation*}
	Hence, by using this, it follows for every $V \in \mathcal{L}(U)$ that
	\begin{equation}
		\label{EqPropApproxWkpwProof1}
		\begin{aligned}
			\Vert f_r \Vert_{W^{k,p}(V,\mathcal{L}(V),w;\mathbb{R}^d)} & = \left( \sum_{\alpha \in \mathbb{N}^m_{0,k}} \int_V \Vert \partial_\alpha f_r(u) \Vert^p w(u) du \right)^\frac{1}{p} \\
			& \leq \left\vert \mathbb{N}^m_{0,k} \right\vert^\frac{1}{p} \left( \max_{\alpha \in \mathbb{N}^m_{0,k}} \int_V \Vert \partial_\alpha f_r(u) \Vert^p w(u) du \right)^\frac{1}{p} \\
			& \leq 2^k C_h \left\vert \mathbb{N}^m_{0,k} \right\vert^\frac{1}{p} \left( \sum_{\beta_2 \in \mathbb{N}^m_{0,k}} \int_V \Vert \partial_{\beta_2} f(u) \Vert^p w(u) du \right)^\frac{1}{p} \\
			& \leq 2^k C_h \left\vert \mathbb{N}^m_{0,k} \right\vert^\frac{1}{p} \Vert f \Vert_{W^{k,p}(V,\mathcal{L}(V),w;\mathbb{R}^d)} < \infty.
		\end{aligned}
	\end{equation}
	Thus, by taking $V := U$ in \eqref{EqPropApproxWkpwProof1}, we conclude that $f_r \in W^{k,p}(U,\mathcal{L}(U),w;\mathbb{R}^d)$. Similarly, by using the triangle inequality, that $\partial_\alpha f_r(u) = \partial_\alpha (f(u) h_r(u))  = \partial_\alpha f(u)$ for any $\alpha \in \mathbb{N}^m_{0,k}$ and $u \in U \cap \overline{B_r(0)}$ (as $h_r(u) = 1$ for any $u \in \overline{B_r(0)}$), and \eqref{EqPropApproxWkpwProof1} with $V := U \setminus \overline{B_r(0)}$, it follows that
	\begin{equation*}
		\begin{aligned}
			\Vert f - f_r \Vert_{W^{k,p}(U,\mathcal{L}(U),w;\mathbb{R}^d)} & \leq \underbrace{\Vert f - f_r \Vert_{W^{k,p}(U \cap \overline{B_r(0)},\mathcal{L}(U \cap \overline{B_r(0)}),w;\mathbb{R}^d)}}_{=0} + \Vert f - f_r \Vert_{W^{k,p}(U \setminus \overline{B_r(0)},\mathcal{L}(U \setminus \overline{B_r(0)}),w;\mathbb{R}^d)} \\
			& \leq \Vert f \Vert_{W^{k,p}(U \setminus \overline{B_r(0)},\mathcal{L}(U \setminus \overline{B_r(0)}),w;\mathbb{R}^d)} + \Vert f_r \Vert_{W^{k,p}(U \setminus \overline{B_r(0)},\mathcal{L}(U \setminus \overline{B_r(0)}),w;\mathbb{R}^d)} \\
			& \leq \left( 1 + 2^k C_h \left\vert \mathbb{N}^m_{0,k} \right\vert^\frac{1}{p} \right) \Vert f \Vert_{W^{k,p}(U \setminus \overline{B_r(0)},\mathcal{L}(U \setminus \overline{B_r(0)}),w;\mathbb{R}^d)}.
		\end{aligned}
	\end{equation*}
	Since the right-hand side tends to zero, as $r \rightarrow \infty$, this shows that $f \in W^{k,p}(U,\mathcal{L}(U),w;\mathbb{R}^d)$ can be approximated by elements of $\left\lbrace f \in W^{k,p}(U,\mathcal{L}(U),w;\mathbb{R}^d): \supp(f) \subseteq U \text{ is bounded} \right\rbrace$ with respect to $\Vert \cdot \Vert_{W^{k,p}(U,\mathcal{L}(U),w;\mathbb{R}^d)}$. Hence, we only need to show the approximation of the latter by elements from $\left\lbrace f\vert_U: U \rightarrow \mathbb{R}^d: f \in C^\infty_c(\mathbb{R}^m;\mathbb{R}^d) \right\rbrace$ with respect to $\Vert \cdot \Vert_{W^{k,p}(U,\mathcal{L}(U),w;\mathbb{R}^d)}$.
	
	Therefore, we now fix some $f \in W^{k,p}(U,\mathcal{L}(U),w;\mathbb{R}^d)$ with bounded support $\supp(f) \subseteq U$ and some $\varepsilon > 0$. Moreover, by recalling that $w: U \rightarrow [0,\infty)$ is bounded, we can define the finite constant $C_w := \sup_{u \in U} w(u) > 0$. Then, by using that $f(u) = 0$ for any $u \in U \setminus \supp(f)$, thus $\partial_\alpha f(u) = 0$ for any $\alpha \in \mathbb{N}^m_{0,k}$ and $u \in U \setminus \supp(f)$, and the assumption that $C_{f,w} := \inf_{u \in \supp(f)} w(u) > 0$, we have
	\begin{equation*}
		\begin{aligned}
			& \Vert f \Vert_{W^{k,p}(U,\mathcal{L}(U),du;\mathbb{R}^d)} = \left( \sum_{\alpha \in \mathbb{N}^m_{0,k}} \int_U \Vert \partial_\alpha f(u) \Vert^p du \right)^\frac{1}{p} = \left( \sum_{\alpha \in \mathbb{N}^m_{0,k}} \int_{\supp(f)} \Vert \partial_\alpha f(u) \Vert^p du \right)^\frac{1}{p} \\
			& \quad\quad \leq C_{f,w}^{-1} \left( \sum_{\alpha \in \mathbb{N}^m_{0,k}} \int_{\supp(f)} \Vert \partial_\alpha f(u) \Vert^p w(u) du \right)^\frac{1}{p} = C_{f,w}^{-1} \left( \sum_{\alpha \in \mathbb{N}^m_{0,k}} \int_U \Vert \partial_\alpha f(u) \Vert^p w(u) du \right)^\frac{1}{p} \\
			& \quad\quad = C_{f,w}^{-1} \Vert f \Vert_{W^{k,p}(U,\mathcal{L}(U),w;\mathbb{R}^d)} < \infty.
		\end{aligned}
	\end{equation*}
	This shows that $f \in W^{k,p}(U,\mathcal{L}(U),du;\mathbb{R}^d)$. Hence, by applying \cite[Theorem~3.18]{adams75} (with $U \subseteq \mathbb{R}^m$ having the segment property) componentwise, there exists some $g \in C^\infty_c(\mathbb{R}^m;\mathbb{R}^d)$ such that
	\begin{equation*}
		\Vert f - g \Vert_{W^{k,p}(U,\mathcal{L}(U),du,\mathbb{R}^d)} = \left( \sum_{\alpha \in \mathbb{N}^m_{0,k}} \int_U \Vert \partial_\alpha f(u) - \partial_\alpha g(u) \Vert^p du \right)^\frac{1}{p} < \frac{\varepsilon}{C_w}.
	\end{equation*}
	Thus, by using that $w: U \rightarrow [0,\infty)$ is bounded with $C_w := \sup_{u \in U} w(u) < \infty$, it follows that
	\begin{equation*}
		\begin{aligned}
			\Vert f - g \Vert_{W^{k,p}(U,\mathcal{L}(U),du,\mathbb{R}^d)} & = \left( \sum_{\alpha \in \mathbb{N}^m_{0,k}} \int_U \Vert \partial_\alpha f(u) - \partial_\alpha g(u) \Vert^p w(u) du \right)^\frac{1}{p} \\
			& \leq C_w \left( \sum_{\alpha \in \mathbb{N}^m_{0,k}} \int_U \Vert \partial_\alpha f(u) - \partial_\alpha g(u) \Vert^p du \right)^\frac{1}{p} \\
			& < C_w \frac{\varepsilon}{C_w} = \varepsilon.
		\end{aligned}
	\end{equation*}
	Since $f \in W^{k,p}(U,\mathcal{L}(U),w;\mathbb{R}^d)$ with bounded support $\supp(f) \subseteq U$ and $\varepsilon > 0$ were chosen arbitrarily, it follows together with the first step that $\left\lbrace f\vert_U: U \rightarrow \mathbb{R}^d: f \in C^\infty_c(\mathbb{R}^m;\mathbb{R}^d) \right\rbrace$ is dense in $W^{k,p}(U,\mathcal{L}(U),w;\mathbb{R}^d)$.
\end{proof}

\begin{proof}[Proof of Example~\ref{ExApprox}]
	For \ref{ExApproxCkb}, we use that $U \subset \mathbb{R}^m$ is bounded to define the finite constant $C_{21} := \sup_{u \in U} (1+\Vert u \Vert)^\gamma$. Then, it follows for every $f \in C^0_b(\mathbb{R}^m;\mathbb{R}^d)$ that
	\begin{equation*}
		\begin{aligned}
			\Vert f\vert_U \Vert_{C^k_b(U;\mathbb{R}^d)} & = \max_{\alpha \in \mathbb{N}^m_{0,k}} \sup_{u \in U} \Vert \partial_\alpha f(u) \Vert \\
			& \leq \left( \sup_{u \in U} (1+\Vert u \Vert)^\gamma \right) \max_{\alpha \in \mathbb{N}^m_{0,k}} \sup_{u \in U} \frac{\Vert \partial_\alpha f(u) \Vert}{(1+\Vert u \Vert)^\gamma} \\
			& \leq C_{21} \Vert f \Vert_{C^k_{pol,\gamma}(\mathbb{R}^m;\mathbb{R}^d)}.
		\end{aligned}
	\end{equation*}
	Moreover, by using that $\left\lbrace f\vert_U: f \in C^k_b(\mathbb{R}^m;\mathbb{R}^d) \right\rbrace = C^k_b(U;\mathbb{R}^d)$, the image $\left\lbrace f\vert_U: f \in C^k_b(\mathbb{R}^m;\mathbb{R}^d) \right\rbrace$ of the continuous embedding \eqref{EqAssApprox1} is dense in $C^k_b(U;\mathbb{R}^d)$.
	
	For \ref{ExApproxCkw}, the restriction map \eqref{EqAssApprox1} is by definition continuous. Moreover, by using that $\overline{C^k_b(U;\mathbb{R}^d)}^\gamma$ is defined as the closure of $C^k_b(U;\mathbb{R}^d)$ with respect to $\Vert \cdot \Vert_{C^k_{pol,\gamma}(U;\mathbb{R}^d)}$, the image $\left\lbrace g\vert_U: g \in C^k_b(\mathbb{R}^m;\mathbb{R}^d) \right\rbrace = C^k_b(U;\mathbb{R}^d)$ of the continuous embedding \eqref{EqAssApprox1} is dense in $\overline{C^k_b(U;\mathbb{R}^d)}^\gamma$.
	
	For \ref{ExApproxLp}, we first recall that $k = 0$. Then, we use that $f \in C^0_b(\mathbb{R}^m;\mathbb{R}^d)$ is continuous to conclude that its restriction $f\vert_U: U \rightarrow \mathbb{R}^d$ is $\mathcal{B}(U)/\mathcal{B}(\mathbb{R}^d)$-measurable. Moreover, we define the finite constant $C_{22} := \int_U (1+\Vert u \Vert)^{\gamma p} \mu(du) > 0$, which implies that $\mu: \mathcal{B}(U) \rightarrow [0,\infty)$ is finite as $\mu(U) = \int_U \mu(du) \leq C_{22} < \infty$. Then, it follows for every $f \in C^0_b(\mathbb{R}^m;\mathbb{R}^d)$ that
	\begin{equation*}
		\begin{aligned}
			\Vert f\vert_U \Vert_{L^p(U,\mathcal{B}(U),\mu;\mathbb{R}^d)} & = \left( \int_U \Vert f(u) \Vert^p \mu(du) \right)^\frac{1}{p} \\
			& \leq \left( \int_U (1+\Vert u \Vert)^{\gamma p} \mu(du) \right)^\frac{1}{p} \sup_{u \in U} \frac{\Vert f(u) \Vert}{(1+\Vert u \Vert)^\gamma} \\
			& \leq C_{22}^\frac{1}{p} \Vert f \Vert_{C^0_{pol,\gamma}(\mathbb{R}^m;\mathbb{R}^d)},
		\end{aligned}
	\end{equation*}
	which shows that the restriction map \eqref{EqAssApprox1} is continuous. In order to show that its image is dense, we fix some $f \in L^p(U,\mathcal{B}(U),\mu;\mathbb{R}^d)$ and $\varepsilon > 0$. Then, we extend $f: U \rightarrow \mathbb{R}^d$ to the function
	\begin{equation*}
		\mathbb{R}^m \ni u \quad \mapsto \quad \overline{f}(u) :=
		\begin{cases}
			f(u), & u \in U, \\
			0, & u \in \mathbb{R}^m \setminus U.
		\end{cases}
	\end{equation*}
	Moreover, we extend $\mu: \mathcal{B}(U) \rightarrow [0,\infty)$ to the Borel measure $\mathcal{B}(\mathbb{R}^m) \ni E \mapsto \overline{\mu}(E) := \mu(U \cap E) \in [0,\infty)$, which implies that $\overline{f} \in L^p(\mathbb{R}^m,\mathcal{B}(\mathbb{R}^m),\overline{\mu};\mathbb{R}^d)$. Hence, by applying \cite[Corollary~2.2.2]{bogachev07} componentwise (with $\overline{\mu}(B) = \mu(U \cap B) \leq \mu(U) \leq C_{22} < \infty$ for any bounded $B \in \mathcal{B}(\mathbb{R}^m)$), there exists some $g \in C^\infty_c(\mathbb{R}^m;\mathbb{R}^d) \subseteq C^0_b(\mathbb{R}^m;\mathbb{R}^d)$ with $\Vert \overline{f} - g \Vert_{L^p(\mathbb{R}^m,\mathcal{B}(\mathbb{R}^m),\overline{\mu};\mathbb{R}^d)} < \varepsilon$, which implies
	\begin{equation*}
		\Vert f - g\vert_U \Vert_{L^p(U,\mathcal{B}(U),\mu;\mathbb{R}^d)} = \Vert \overline{f} - g \Vert_{L^p(\mathbb{R}^m,\mathcal{B}(\mathbb{R}^m),\overline{\mu};\mathbb{R}^d)} < \varepsilon.
	\end{equation*}
	Since $f \in C^0(U;\mathbb{R}^d)$ and $\varepsilon > 0$ were chosen arbitrarily, it follows that the image $\left\lbrace f\vert_U: f \in C^0_b(\mathbb{R}^m;\mathbb{R}^d) \right\rbrace$ of the continuous embedding \eqref{EqAssApprox1} is dense in $L^p(U,\mathcal{B}(U),\mu;\mathbb{R}^d)$.
	
	For \ref{ExApproxWkp}, we first use that $f \in C^k_b(\mathbb{R}^m;\mathbb{R}^d)$ is $k$-times differentiable to conclude for every $\alpha \in \mathbb{N}^m_{0,k}$ that $\partial_\alpha f\vert_U: U \rightarrow \mathbb{R}^d$ is $\mathcal{L}(U)/\mathcal{B}(\mathbb{R}^d)$-measurable. Moreover, we use that $U \subset \mathbb{R}^m$ is bounded to define the finite constant $C_{23} := \int_U (1+\Vert u \Vert)^{\gamma p} du > 0$. Then, it follows for every $f \in C^k_b(\mathbb{R}^m;\mathbb{R}^d)$ that
	\begin{equation*}
		\begin{aligned}
			\Vert f \Vert_{W^{k,p}(U,\mathcal{L}(U),du;\mathbb{R}^d)} & = \left( \sum_{\alpha \in \mathbb{N}^m_{0,k}} \int_U \Vert \partial_\alpha f(u) \Vert^p du \right)^\frac{1}{p} \\
			& \leq \left( \left\vert \mathbb{N}^m_{0,k} \right\vert \int_U (1+\Vert u \Vert)^{\gamma p} du \right)^\frac{1}{p} \max_{\alpha \in \mathbb{N}^m_{0,k}} \sup_{u \in U} \frac{\Vert \partial_\alpha f(u) \Vert}{(1+\Vert u \Vert)^{\gamma p}} \\
			& \leq \left( C_{23} \left\vert \mathbb{N}^m_{0,k} \right\vert \right)^\frac{1}{p} \Vert f \Vert_{C^k_{pol,\gamma}(\mathbb{R}^m;\mathbb{R}^d)},
		\end{aligned}
	\end{equation*}
	which shows that the restriction map \eqref{EqAssApprox1} is continuous. In addition, by applying \cite[Theorem~3.18]{adams75} componentwise, $\left\lbrace g\vert_U: g \in C^\infty_c(\mathbb{R}^m;\mathbb{R}^d) \right\rbrace$ is dense in $W^{k,p}(U,\mathcal{L}(U),du;\mathbb{R}^d)$. Hence, by using that $C^\infty_c(\mathbb{R}^m;\mathbb{R}^d) \subseteq C^k_b(\mathbb{R}^m;\mathbb{R}^d)$, it follows that the image $\left\lbrace g\vert_U: g \in C^k_b(\mathbb{R}^m;\mathbb{R}^d)\right\rbrace$ of the continuous embedding \eqref{EqAssApprox1} is dense in $W^{k,p}(U,\mathcal{L}(U),du;\mathbb{R}^d)$.
	
	For \ref{ExApproxWkpw}, we use that $f \in C^k_b(\mathbb{R}^m;\mathbb{R}^d)$ is $k$-times differentiable to conclude for every $\alpha \in \mathbb{N}^m_{0,k}$ that $\partial_\alpha f\vert_U: U \rightarrow \mathbb{R}^d$ is $\mathcal{L}(U)/\mathcal{B}(\mathbb{R}^d)$-measurable. Moreover, by using the finite constant $C_{24} := \int_U (1+\Vert u \Vert)^{\gamma p} w(u) du > 0$, it follows for every $f \in C^k_b(\mathbb{R}^m;\mathbb{R}^d)$ that
	\begin{equation*}
		\begin{aligned}
			\Vert f \Vert_{W^{k,p}(U,\mathcal{L}(U),w;\mathbb{R}^d)} & = \left( \sum_{\alpha \in \mathbb{N}^m_{0,k}} \int_U \Vert \partial_\alpha f(u) \Vert^p w(u) du \right)^\frac{1}{p} \\
			& \leq \left( \left\vert \mathbb{N}^m_{0,k} \right\vert \int_U (1+\Vert u \Vert)^{\gamma p} w(u) du \right)^\frac{1}{p} \max_{\alpha \in \mathbb{N}^m_{0,k}} \sup_{u \in U} \frac{\Vert \partial_\alpha f(u) \Vert}{(1+\Vert u \Vert)^\gamma} \\
			& \leq \left( C_{24} \left\vert \mathbb{N}^m_{0,k} \right\vert \right)^\frac{1}{p} \Vert f \Vert_{C^k_{pol,\gamma}(\mathbb{R}^m;\mathbb{R}^d)}.
		\end{aligned}
	\end{equation*}
	which shows that the restriction map \eqref{EqAssApprox1} is continuous. In addition, we apply Proposition~\ref{PropApproxWkpw} to conclude that $\left\lbrace g\vert_U: g \in C^\infty_c(\mathbb{R}^m;\mathbb{R}^d) \right\rbrace$ is dense in $W^{k,p}(U,\mathcal{L}(U),w;\mathbb{R}^d)$. Hence, by using that $C^\infty_c(\mathbb{R}^m;\mathbb{R}^d) \subseteq C^k_b(\mathbb{R}^m;\mathbb{R}^d)$, it follows that the image $\left\lbrace g\vert_U: g \in C^k_b(\mathbb{R}^m;\mathbb{R}^d)\right\rbrace$ of the continuous embedding \eqref{EqAssApprox1} is dense in $W^{k,p}(U,\mathcal{L}(U),w;\mathbb{R}^d)$.
\end{proof}

\subsection{Proof of results in Section~\ref{SecApproxRates}.}
\label{SecProof4}

\subsubsection{Integral representation}
\label{AppIntRepr}

In this section, we show the integral representation in Proposition~\ref{PropIntRepr}. To this end, we first prove that the ridgelet transform of a fixed function is continuous.

\begin{lemma}
	\label{LemmaRidgeletCont}
	Let $\psi \in \mathcal{S}_0(\mathbb{R};\mathbb{C})$ and $g \in L^1(\mathbb{R}^m,\mathcal{L}(\mathbb{R}^m),du;\mathbb{R}^d)$. Then, the function $\mathbb{R}^m \times \mathbb{R} \ni (a,b) \mapsto (\mathfrak{R}_\psi g)(a,b) := \int_{\mathbb{R}^m} \psi\big( a^\top u - b \big) g(u) \Vert a \Vert du \in \mathbb{C}^d$ is continuous.
\end{lemma}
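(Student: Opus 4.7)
The plan is to prove continuity via the dominated convergence theorem, exploiting that $\psi \in \mathcal{S}_0(\mathbb{R};\mathbb{C}) \subseteq \mathcal{S}(\mathbb{R};\mathbb{C})$ is bounded and continuous. Concretely, I would fix an arbitrary point $(a_0,b_0) \in \mathbb{R}^m \times \mathbb{R}$ and an arbitrary sequence $(a_n,b_n)_{n \in \mathbb{N}} \subseteq \mathbb{R}^m \times \mathbb{R}$ converging to $(a_0,b_0)$, and then show that $(\mathfrak{R}_\psi g)(a_n,b_n) \to (\mathfrak{R}_\psi g)(a_0,b_0)$ componentwise, which is equivalent to continuity by the sequential characterization on first-countable spaces.

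For the pointwise convergence of the integrand, I would use that $\psi \in \mathcal{S}(\mathbb{R};\mathbb{C})$ is in particular continuous on $\mathbb{R}$, so that for every $u \in \mathbb{R}^m$ the continuity of $\mathbb{R}^m \times \mathbb{R} \ni (a,b) \mapsto a^\top u - b \in \mathbb{R}$ yields
\begin{equation*}
\lim_{n \to \infty} \psi\bigl(a_n^\top u - b_n\bigr) \Vert a_n \Vert g(u) = \psi\bigl(a_0^\top u - b_0\bigr) \Vert a_0 \Vert g(u).
\end{equation*}
For the domination, I would use that the convergent sequence $(a_n)_{n \in \mathbb{N}}$ is bounded, say $\sup_{n \in \mathbb{N}} \Vert a_n \Vert \leq M$ for some $M > 0$, together with the fact that every Schwartz function is bounded, so $C_\psi := \sup_{s \in \mathbb{R}} \vert \psi(s) \vert < \infty$. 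This provides the uniform dominating function
\begin{equation*}
\bigl\Vert \psi\bigl(a_n^\top u - b_n\bigr) g(u) \Vert a_n \Vert \bigr\Vert \leq C_\psi M \Vert g(u) \Vert, \qquad n \in \mathbb{N}, \, u \in \mathbb{R}^m,
\end{equation*}
which is Lebesgue-integrable on $\mathbb{R}^m$ since $g \in L^1(\mathbb{R}^m,\mathcal{L}(\mathbb{R}^m),du;\mathbb{R}^d)$.

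Applying the dominated convergence theorem componentwise (separating real and imaginary parts of $\psi$, and applying it to each of the $d$ components of $g$) then yields $\lim_{n \to \infty} (\mathfrak{R}_\psi g)(a_n,b_n) = (\mathfrak{R}_\psi g)(a_0,b_0)$, which completes the proof. I do not anticipate any serious obstacle: since $\psi$ is a Schwartz function (thus bounded and continuous) and $\Vert a \Vert$ is locally bounded, the hypotheses of dominated convergence are straightforwardly verified, and the ridgelet transform integral \eqref{EqDefRidgelet} is well-defined at every $(a,b)$ by the same bound.
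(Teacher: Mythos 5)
Your proof is correct and follows essentially the same strategy as the paper: fix a convergent sequence, observe that $C_\psi := \sup_{s}\vert\psi(s)\vert < \infty$ and $\sup_n \Vert a_n\Vert < \infty$ give the integrable dominating function $C_\psi M \Vert g(u)\Vert$, and conclude by dominated convergence (the paper invokes a vector-valued version, you apply it componentwise — a cosmetic difference).
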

\begin{proof}
	Fix a sequence $(a_M,b_M)_{n \in \mathbb{N}} \subseteq \mathbb{R}^m \times \mathbb{R}$ converging to some $(a,b) \in \mathbb{R}^m \times \mathbb{R}$. Then, by using the constants $C_\psi := \sup_{s \in \mathbb{R}} \vert \psi(s) \vert <\infty$ and $C_a := \sup_{M \in \mathbb{N}} \Vert a_M \Vert <\infty$, it holds for every $M \in \mathbb{N}$ that
	\begin{equation*}
		\left\Vert \psi\big( a_M^\top u - b_M \big) g(u) \right\Vert \leq C_a C_\psi \Vert g(u) \Vert,
	\end{equation*}
	where the right-hand side is Lebesgue-integrable. Moreover, by using that $\psi \in \mathcal{S}_0(\mathbb{R};\mathbb{C})$ is continuous, it follows that $\psi\big( a_M^\top u - b_M \big) g(u) \Vert a_M \Vert \rightarrow \psi\big( a^\top u - b \big) g(u) \Vert a \Vert$ for any $u \in U$, as $n \rightarrow \infty$. Hence, we can apply the vector-valued dominated convergence theorem (see \cite[Proposition~2.5]{hytoenen16}) to conclude that
	\begin{equation*}
		\begin{aligned}
			& \lim_{M \rightarrow \infty} (\mathfrak{R}_\psi g)(a_M,b_M) = \lim_{M \rightarrow \infty} \int_{\mathbb{R}^m} \psi\left( a_M^\top u - b_M \right) g(u) \Vert a_M \Vert du \\
			& \quad\quad = \int_{\mathbb{R}^m} \psi\left( a^\top u - b \right) g(u) \Vert a \Vert du = (\mathfrak{R}_\psi g)(a,b),
		\end{aligned}
	\end{equation*}
	which completes the proof.
\end{proof}

In order to prove Proposition~\ref{PropIntRepr}, we denote by $\mathbb{S}^{m-1} := \left\lbrace v \in \mathbb{R}^m: \Vert v \Vert = 1 \right\rbrace$ the unit sphere in $\mathbb{R}^m$ and define the space $\mathbb{Y}^{m+1} := \mathbb{S}^{m-1} \times (0,\infty) \times \mathbb{R}$. Then, for any $\psi \in \mathcal{S}_0(\mathbb{R};\mathbb{C})$, we follow \cite[Equation~32]{sonoda17} and define the ridgelet transform in polar coordinates of any $g \in L^1(\mathbb{R}^m,\mathcal{L}(\mathbb{R}^m),du;\mathbb{R}^d)$ as
\begin{equation}
	\label{EqDefRidgeletPolar}
	\mathbb{Y}^{m+1} \ni (v,s,t) \quad \mapsto \quad (\widetilde{\mathfrak{R}}_\psi g)(v,s,t) := \int_{\mathbb{R}^m} g(u) \overline{\psi\left( \frac{v^\top u-t}{s} \right)} \frac{1}{s} du \in \mathbb{C}^d.
\end{equation}
Moreover, we follow \cite[Definition~4.4]{sonoda17} and recall that the dual ridgelet transform $\mathfrak{R}_\rho^\dagger$ of any $Q: \mathbb{S}^{m-1} \times (0,\infty) \times \mathbb{R} \rightarrow \mathbb{C}$ satisfying $Q\big( v,s,v^\top u - s \, \cdot \big) := \big( z \mapsto Q\big( v,s,v^\top u - s z \big) \big) \in \mathcal{S}(\mathbb{R};\mathbb{C})$, for all $v \in \mathbb{S}^{m-1}$, $s \in (0,\infty)$, and $u \in \mathbb{R}^m$, is defined by
\begin{equation*}
	\mathbb{R}^m \ni u \quad \mapsto \quad (\mathfrak{R}_\rho^\dagger Q)(u) := \lim_{\delta_1 \rightarrow 0 \atop \delta_2 \rightarrow \infty} \int_{\mathbb{S}^{m-1}} \int_{\delta_1}^{\delta_2} T_\rho\left( Q\left( v,s,v^\top u - s \, \cdot \right) \right) \frac{1}{s^{m+1}} ds dv \in \mathbb{R}^d.
\end{equation*}

\begin{proof}[Proof of Proposition~\ref{PropIntRepr}]
	Fix a function $g = (g_1,...,g_d)^\top \in L^1(\mathbb{R}^m,\mathcal{L}(\mathbb{R}^m),du;\mathbb{R}^d)$ satisfying $\widehat{g} \in L^1(\mathbb{R}^m,\mathcal{L}(\mathbb{R}^m),du;\mathbb{C}^d)$. Then, the latter implies for every $i=1,...,d$ that
	\begin{equation}
		\label{EqPropIntReprProof1}
		\begin{aligned}
			\Vert \widehat{g}_i \Vert_{L^1(\mathbb{R}^m,\mathcal{L}(\mathbb{R}^m),du)} & = \int_{\mathbb{R}^m} \big\vert \widehat{g}_i(\xi) \big\vert d\xi \leq \int_{\mathbb{R}^m} \Vert \widehat{g}(\xi) \Vert d\xi = \Vert \widehat{g} \Vert_{L^1(\mathbb{R}^m,\mathcal{L}(\mathbb{R}^m),du;\mathbb{C}^d)} < \infty.
		\end{aligned}
	\end{equation}
	Hence, by using that $(\mathfrak{R}_\psi g)(a,b) = 0$ for any $(a,b) \in \lbrace 0 \rbrace \times \mathbb{R}$, that $(\mathfrak{R}_\psi g)(a,b) = (\widetilde{\mathfrak{R}}_\psi f)\big( \frac{a}{\Vert a \Vert},\frac{1}{\Vert a \Vert},\frac{b}{\Vert a \Vert} \big)$ for any $(a,b) \in (\mathbb{R}^m \setminus \lbrace 0 \rbrace) \times \mathbb{R}$, the substitution $(\mathbb{R}^m \setminus \lbrace 0 \rbrace) \times \mathbb{R} \ni (a,b) \mapsto (v,s,z) := \big( \frac{a}{\Vert a \Vert}, \frac{1}{\Vert a \Vert}, a^\top u - b \big) \in \mathbb{S}^{m-1} \times (0,\infty) \times \mathbb{R}$ with Jacobi determinante $db da = s^{-m} dz ds dv$, and \cite[Theorem~5.6]{sonoda17} applied to $f_i \in L^1(\mathbb{R}^m,\mathcal{L}(\mathbb{R}^m),du)$ with $\widehat{f_i} \in L^1(\mathbb{R}^m,\mathcal{L}(\mathbb{R}^m),du;\mathbb{C})$ (see \eqref{EqPropIntReprProof1}), it follows for a.e.~$u \in \mathbb{R}^m$ that
	\begin{equation*}
		\begin{aligned}
			& \int_{\mathbb{R}^m} \int_{\mathbb{R}} (\mathfrak{R}_\psi g)(a,b) \rho\left( a^\top u - b \right) db da = \int_{\mathbb{R}^m \setminus \lbrace 0 \rbrace} \int_{\mathbb{R}} (\widetilde{\mathfrak{R}}_\psi g)\left( \frac{a}{\Vert a \Vert},\frac{1}{\Vert a \Vert},\frac{b}{\Vert a \Vert} \right) \rho\left( a^\top u - b \right) db da \\
			& \quad\quad = \left( \int_{\mathbb{S}^{m-1}} \int_0^\infty \int_{\mathbb{R}} (\widetilde{\mathfrak{R}}_\psi g_i)\left( v,s,v^\top u - sz \right) \rho(z) \frac{1}{s^{m+1}} dz ds dv \right)_{i=1,...,d}^\top \\
			& \quad\quad = \Bigg( \lim_{\delta_1 \rightarrow 0 \atop \delta_2 \rightarrow \infty} \int_{\mathbb{S}^{m-1}} \int_{\delta_1}^{\delta_2} T_\rho\left( (\widetilde{\mathfrak{R}}_\psi g_i)\left( v,s,v^\top u - s \, \cdot \right) \right) \frac{1}{s^{m+1}} ds dv \Bigg)_{i=1,...,d}^\top \\
			& \quad\quad = \big( \big( \widetilde{\mathfrak{R}}_\rho^\dagger \mathfrak{R}_\psi g_i \big)(u) \big)_{i=1,...,d}^\top = \big( C^{(\psi,\rho)}_m g_i(u) \big)_{i=1,...,d}^\top = C^{(\psi,\rho)}_m g(u),
		\end{aligned}
	\end{equation*}
	which completes the proof.
\end{proof}

\subsubsection{Properties of weighted Sobolev space $W^{k,p}(U,\mathcal{L}(U),w;\mathbb{R}^d)$}
\label{SecWkpw}

In this section, we show that the weighted Sobolev space $(W^{k,p}(U,\mathcal{L}(U),w,\mathbb{R}^d),\Vert \cdot \Vert_{W^{k,p}(U,\mathcal{L}(U),w,\mathbb{R}^d)})$ introduced in Footnote~\ref{FootnoteWkpw} is separable and has Banach space type $t := \min(2,p)$, where we set $(W^{0,p}(U,\mathcal{L}(U),w;\mathbb{R}^d),\Vert \cdot \Vert_{W^{0,p}(U,\mathcal{L}(U),w;\mathbb{R}^d)}) := (L^p(U,\mathcal{L}(U),w(u) du;\mathbb{R}^d),\Vert \cdot \Vert_{L^p(U,\mathcal{L}(U),w(u) du;\mathbb{R}^d)})$.

\begin{lemma}
	\label{LemmaWkpwSep}
	Let $k \in \mathbb{N}_0$, $p \in [1,\infty)$, $U \subseteq \mathbb{R}^m$ (open, if $k \geq 1$), and $w: U \rightarrow [0,\infty)$ be a weight. Then, the Banach space $(W^{k,p}(U,\mathcal{L}(U),w,\mathbb{R}^d),\Vert \cdot \Vert_{W^{k,p}(U,\mathcal{L}(U),w,\mathbb{R}^d)})$ is separable.
\end{lemma}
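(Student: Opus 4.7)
The plan is to reduce the problem to the classical fact that $L^p$-spaces over $\sigma$-finite measure spaces with countably generated $\sigma$-algebra are separable, and then transfer separability to the Sobolev space via an isometric embedding into a finite product of such $L^p$-spaces.

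\textbf{Step 1: Base case $k=0$.} I would first verify that the weighted measure $w(u)du:\mathcal{L}(U)\to[0,\infty]$ is $\sigma$-finite. Setting $U_n:=\{u\in U:\Vert u\Vert\leq n,\,w(u)\leq n\}$, we have $U=\bigcup_{n\in\mathbb{N}}U_n$ and $\int_{U_n}w(u)du\leq n\cdot\mathrm{vol}(\overline{B_n(0)})<\infty$. Since $\mathcal{L}(U)$ is the Lebesgue completion of the countably generated Borel $\sigma$-algebra, the classical separability result for $L^p$-spaces over $\sigma$-finite measure spaces (see, e.g., Brezis, \emph{Functional Analysis}, Theorem~4.13) applied componentwise to $\mathbb{R}^d\cong\mathbb{R}\times\cdots\times\mathbb{R}$ yields separability of
\[
W^{0,p}(U,\mathcal{L}(U),w;\mathbb{R}^d)=L^p(U,\mathcal{L}(U),w(u)du;\mathbb{R}^d).
\]

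\textbf{Step 2: General case $k\geq 1$.} I would consider the linear map
\[
\Xi:W^{k,p}(U,\mathcal{L}(U),w;\mathbb{R}^d)\ni f\;\mapsto\;(\partial_\alpha f)_{\alpha\in\mathbb{N}^m_{0,k}}\in\prod_{\alpha\in\mathbb{N}^m_{0,k}}L^p(U,\mathcal{L}(U),w(u)du;\mathbb{R}^d).
\]
Equipping the (finite) product on the right with the norm $\Vert(g_\alpha)_\alpha\Vert:=\bigl(\sum_\alpha\Vert g_\alpha\Vert_{L^p(U,\mathcal{L}(U),w(u)du;\mathbb{R}^d)}^p\bigr)^{1/p}$, the definition of $\Vert\cdot\Vert_{W^{k,p}(U,\mathcal{L}(U),w;\mathbb{R}^d)}$ in Footnote~\ref{FootnoteWkpw} shows that $\Xi$ is an isometry onto its image. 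By Step~1, each factor is separable; a finite product of separable metric spaces is separable; and every subset of a separable metric space is separable. Hence $\Xi(W^{k,p}(U,\mathcal{L}(U),w;\mathbb{R}^d))$ is separable, and since $\Xi$ is an isometric isomorphism onto its image, so is $W^{k,p}(U,\mathcal{L}(U),w;\mathbb{R}^d)$.

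\textbf{Main obstacle.} There is no substantial obstacle: the only delicate check is the $\sigma$-finiteness of the weighted measure under the standing hypotheses on $w$ (merely $\mathcal{L}(U)/\mathcal{B}(\mathbb{R})$-measurable and a.e.~strictly positive, with no integrability or growth assumption), which is handled by the truncation $U_n$ above. The embedding argument in Step~2 is entirely routine and does not require $\Xi(W^{k,p})$ to be closed, avoiding any completeness considerations.
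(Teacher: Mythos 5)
Your proof follows essentially the same two-step strategy as the paper: establish separability of the weighted $L^p$-space by observing that $(U,\mathcal{L}(U),w(u)\,du)$ is a $\sigma$-finite measure space whose $\sigma$-algebra is countably generated up to null sets, then transfer to $W^{k,p}$ by embedding via $\Xi := (\partial_\alpha\,\cdot\,)_{\alpha\in\mathbb{N}^m_{0,k}}$ into a finite product of $L^p$-spaces. The only differences are cosmetic: you verify $\sigma$-finiteness explicitly by the truncation $U_n$ (which the paper leaves implicit, relying instead on the observation that $w(u)\,du$ and $du$ share the same null sets), you equip the product with the $\ell^p$-sum norm so that $\Xi$ is an exact isometry rather than merely a topological embedding with the paper's $\ell^1$-sum norm, and you correctly note that closedness of $\Xi(W^{k,p})$ is not actually needed since any subset of a separable metric space is separable.
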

\begin{proof}
	First, we show the conclusion for $k = 0$, i.e.~that the Banach space $(W^{0,p}(U,\mathcal{L}(U),w;\mathbb{R}^d),\Vert \cdot \Vert_{W^{0,p}(U,\mathcal{L}(U),w;\mathbb{R}^d)}) := (L^p(U,\mathcal{L}(U),w(u) du;\mathbb{R}^d),\Vert \cdot \Vert_{L^p(U,\mathcal{L}(U),w(u) du;\mathbb{R}^d)})$ is separable. For this purpose, we observe that $\mathcal{B}(U)$ is generated by sets of the form $U \cap \bigtimes_{l=1}^m [r_{l,1},r_{l,2})$, with $r_{l,1}, r_{l,2} \in \mathbb{Q}$, $l=1,...,m$. Moreover, by using that $\mathcal{L}(U)$ and $\mathcal{B}(U)$ coincide up to Lebesgue nullsets and that $w: U \rightarrow [0,\infty)$ is a weight, ensuring that the measure spaces $(U,\mathcal{L}(U),w(u) du)$ and $(U,\mathcal{L}(U),du)$ share the same null sets, we conclude that $(U,\mathcal{L}(U),w(u) du)$ is countably generated up to ($w(u) du$)-null sets. Hence, by using \cite[p.~92]{doob94} componentwise, it follows that $(W^{0,p}(U,\mathcal{L}(U),w;\mathbb{R}^d),\Vert \cdot \Vert_{W^{0,p}(U,\mathcal{L}(U),w;\mathbb{R}^d)}) := (L^p(U,\mathcal{L}(U),w(u) du;\mathbb{R}^d),\Vert \cdot \Vert_{L^p(U,\mathcal{L}(U),w(u) du;\mathbb{R}^d)})$ is separable.
	
	Now, for the general case of $k \geq 1$, we consider the Banach space $(W^{k,p}(U,\mathcal{L}(U),w,\mathbb{R}^d),\Vert \cdot \Vert_{W^{k,p}(U,\mathcal{L}(U),w,\mathbb{R}^d)})$. Then, we define the map
	\begin{equation*}
		W^{k,p}(U,\mathcal{L}(U),w,\mathbb{R}^d) \ni f \quad \mapsto \quad \Xi(f) := (\partial_\alpha f)_{\alpha \in \mathbb{N}^m_{0,k}} \in \bigtimes_{\alpha \in \mathbb{N}^m_{0,k}} L^p(U,\mathcal{L}(U),w(u) du,\mathbb{R}^d) =: Z,
	\end{equation*}
	where $Z$ is equipped with the norm $\Vert g \Vert_Z := \sum_{\alpha \in \mathbb{N}^m_{0,k}} \Vert g_\alpha \Vert_{L^p(U,\mathcal{L}(U),du,\mathbb{R}^d)}$, for $g := (g_\alpha)_{\alpha \in \mathbb{N}^m_{0,k}} \in Z$. Then, by using the previous step, we conclude that the Banach space $(Z,\Vert \cdot \Vert_Z)$ is separable as finite product of separable Banach spaces. Hence, by using that $W^{k,p}(U,\mathcal{L}(U),w,\mathbb{R}^d)$ is by definition isometrically isomorphic to the closed vector subspace $\img(\Xi) := \left\lbrace \Xi(f): f \in W^{k,p}(U,\mathcal{L}(U),w,\mathbb{R}^d) \right\rbrace \subseteq Z$, it follows that $(W^{k,p}(U,\mathcal{L}(U),w,\mathbb{R}^d),\Vert \cdot \Vert_{W^{k,p}(U,\mathcal{L}(U),w,\mathbb{R}^d)})$ is separable.
\end{proof}

Moreover, we recall the notion of Banach space types and refer to \cite[Section~6.2]{albiac06}, \cite[Chapter~9]{ledoux91}, and \cite[Section~4.3.b]{hytoenen16} for more details.

\begin{definition}[{\cite[Definition~4.3.12~(1)]{hytoenen16}}]
	A Banach space $(X,\Vert \cdot \Vert_X)$ is called of \emph{type $t \in [1,2]$} if there exists a constant $C_X > 0$ such that for every $N \in \mathbb{N}$, $(f_n)_{n=1,...,N} \subseteq X$, and Rademacher sequence\footnote{A \emph{Rademacher sequence} $(\epsilon_n)_{n=1,...,N}$ on a given probability space $(\widetilde{\Omega},\widetilde{\mathcal{F}},\widetilde{\mathbb{P}})$ is an i.i.d.~sequence of random variables $(\epsilon_n)_{n=1,...,N}$ such that $\widetilde{\mathbb{P}}[\epsilon_n = \pm 1] = 1/2$.} $(\epsilon_n)_{n=1,...,N}$ on a probability space $(\widetilde{\Omega},\widetilde{\mathcal{F}},\widetilde{\mathbb{P}})$, it holds that
	\begin{equation*}
		\widetilde{\mathbb{E}}\left[ \left\Vert \sum_{n=1}^N \epsilon_n f_n \right\Vert_X^t \right]^\frac{1}{t} \leq C_X \left( \sum_{n=1}^N \Vert f_n \Vert_X^t \right)^\frac{1}{t}.
	\end{equation*}
\end{definition}

Every Banach space $(X,\Vert \cdot \Vert_X)$ is of type $t = 1$ with constant $C_X = 1$, whereas only some Banach spaces have non-trivial type $t \in (1,2]$, e.g.,~every Hilbert space $(X,\Vert \cdot \Vert_X)$ is of type $t = 2$ with constant $C_X = 1$ (see \cite[Remark~6.2.11~(b)+(c)]{albiac06}). Moreover, $(L^p(U,\mathcal{B}(U),\mu;\mathbb{R}^d), \Vert \cdot \Vert_{L^p(U,\mathcal{B}(U),\mu;\mathbb{R}^d)})$ introduced in Footnote~\ref{FootnoteLp} is a Banach space of type $t = \min(2,p)$ with constant $C_{L^p(U,\Sigma,\mu;\mathbb{R}^d)} > 0$ depending only on $p \in [1,\infty)$ (see \cite[Theorem~6.2.14]{albiac06}). Now, we show that this still holds true for the weighted Sobolev space $(W^{k,p}(U,\mathcal{L}(U),w;\mathbb{R}^d),\Vert \cdot \Vert_{W^{k,p}(U,\mathcal{L}(U),w;\mathbb{R}^d)})$ introduced in \ref{FootnoteWkpw}, where $(W^{0,p}(U,\mathcal{L}(U),w;\mathbb{R}^d),\Vert \cdot \Vert_{W^{0,p}(U,\mathcal{L}(U),w;\mathbb{R}^d)}) := (L^p(U,\mathcal{L}(U),w(u) du;\mathbb{R}^d),\Vert \cdot \Vert_{L^p(U,\mathcal{L}(U),w(u) du;\mathbb{R}^d)})$.

\begin{lemma}
	\label{LemmaBanachSpaceType}
	Let $k \in \mathbb{N}_0$, $p \in [1,\infty)$, $U \subseteq \mathbb{R}^m$ (open, if $k \geq 1$), and $w: U \rightarrow [0,\infty)$ be a weight. Then, the Banach space $(W^{k,p}(U,\mathcal{L}(U),w;\mathbb{R}^d),\Vert \cdot \Vert_{W^{k,p}(U,\mathcal{L}(U),w;\mathbb{R}^d)})$ is of type $t = \min(2,p)$ with constant $C_{W^{k,p}(U,\mathcal{L}(U),w;\mathbb{R}^d)} > 0$ depending only on $p \in [1,\infty)$.
\end{lemma}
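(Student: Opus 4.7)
The plan is to realise $W^{k,p}(U,\mathcal{L}(U),w;\mathbb{R}^d)$ isometrically as a closed subspace of a suitable vector-valued $L^p$-space and then to invoke the fact that any closed subspace of a Banach space of type $t$ with constant $C$ is itself of type $t$ with the same constant $C$ (an immediate consequence of the defining inequality applied to elements of the subspace).

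For the base case $k=0$, I would note that $W^{0,p}(U,\mathcal{L}(U),w;\mathbb{R}^d) = L^p(U,\mathcal{L}(U),w(u)du;\mathbb{R}^d)$ is a vector-valued $L^p$-space. Since $(\mathbb{R}^d,\Vert \cdot \Vert)$ is a Hilbert space, hence of type $2$ with constant $1$ independent of $d$, the standard transference of type from the scalar field to $L^p$-spaces, combined with its extension to Hilbert-valued functions (see e.g.\ \cite[Proposition~7.1.4]{hytoenen16}, building on the scalar case \cite[Theorem~6.2.14]{albiac06}), yields that $L^p(U,\mathcal{L}(U),w(u)du;\mathbb{R}^d)$ is of type $\min(2,p)$ with a constant depending only on $p$, in particular independent of $m$, $d$, $U$, and $w$.

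For $k \geq 1$, I would set $N := \vert \mathbb{N}^m_{0,k} \vert$ and introduce the space $E_p := \big( \mathbb{R}^{d \cdot N}, \Vert \cdot \Vert_{E_p} \big)$, where $\Vert (y_\alpha)_\alpha \Vert_{E_p} := \big( \sum_{\alpha \in \mathbb{N}^m_{0,k}} \Vert y_\alpha \Vert^p \big)^{1/p}$; that is, $E_p$ is nothing other than the $L^p$-space $L^p(\mathbb{N}^m_{0,k},2^{\mathbb{N}^m_{0,k}},\#;\mathbb{R}^d)$ over counting measure. By the base case, $E_p$ is of type $\min(2,p)$ with constant depending only on $p$. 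By Fubini, $L^p(U,\mathcal{L}(U),w(u)du;E_p)$ is itself isometrically an $L^p$-space over the product measure $w(u)du \otimes \#$ on $U \times \mathbb{N}^m_{0,k}$ with values in $\mathbb{R}^d$, hence again of type $\min(2,p)$ with constant depending only on $p$ by the base case. Finally, the linear map
\begin{equation*}
\Xi: W^{k,p}(U,\mathcal{L}(U),w;\mathbb{R}^d) \rightarrow L^p(U,\mathcal{L}(U),w(u)du;E_p), \qquad f \mapsto (\partial_\alpha f)_{\alpha \in \mathbb{N}^m_{0,k}},
\end{equation*}
satisfies $\Vert \Xi(f) \Vert^p = \int_U \sum_{\alpha \in \mathbb{N}^m_{0,k}} \Vert \partial_\alpha f(u) \Vert^p w(u) du = \Vert f \Vert_{W^{k,p}(U,\mathcal{L}(U),w;\mathbb{R}^d)}^p$ by the very definition of the Sobolev norm, so $\Xi$ is an isometric embedding onto a closed subspace, and the subspace principle mentioned at the outset transports the type inequality back to $W^{k,p}(U,\mathcal{L}(U),w;\mathbb{R}^d)$ with the same constant.

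The conceptual content is essentially one observation; the only mild obstacle is the bookkeeping required to recognise the Sobolev norm as an $L^p$-norm over the product of $(U,\mathcal{L}(U),w(u)du)$ with a counting-measure factor indexed by multi-indices, since this is exactly what ensures that the resulting type constant is uniform in $m$, $k$, $d$, $U$, and $w$ and depends only on $p$.
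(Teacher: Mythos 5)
Your proposal is correct and rests on the same underlying observation the paper uses: the $W^{k,p}$-norm is an $L^p$-norm over the product of $(U,\mathcal{L}(U),w(u)du)$ with counting measure on the multi-index set $\mathbb{N}^m_{0,k}$. The difference lies in packaging. You invoke the known type of vector-valued $L^p$-spaces with Hilbert target (type $\min(2,p)$ with constant depending only on $p$) and then transfer the estimate to the isometrically embedded, complete (hence closed) subspace $\Xi(W^{k,p})$ via the subspace principle. The paper instead reproves the type estimate inline: it applies the Khintchine inequality pointwise in $u$, then for $p \in [1,2]$ uses $\big( \sum_n x_n \big)^{p/2} \leq \sum_n x_n^{p/2}$, and for $p \in (2,\infty)$ swaps the $L^2$-average over the Rademacher index with the $L^p$-norm over $(\alpha,u)$ by Minkowski's inequality for mixed norms over exactly this product measure space. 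Both arguments are correct; yours is more modular and avoids the case split, at the cost of relying on a cited transference theorem (type of $L^p(\mu;X)$ for $X$ of nontrivial type) that the paper in effect re-derives from the Khintchine and Minkowski inequalities it is already invoking.
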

\begin{proof}
	First, we recall that $(W^{k,p}(U,\mathcal{L}(U),w;\mathbb{R}^d),\Vert \cdot \Vert_{W^{k,p}(U,\mathcal{L}(U),w;\mathbb{R}^d)})$ is a Banach space. Indeed, this follows from \cite[p.~96]{rudin87} (for $k = 0$) and \cite[Theorem~3.2]{adams75} (for $k \geq 1$).
	
	Now, we fix some $N \in \mathbb{N}$, $(f_n)_{n=1,...,N} \subseteq W^{k,p}(U,\mathcal{L}(U),w;\mathbb{R}^d)$, and an i.i.d.~sequence $(\epsilon_n)_{n=1,...,N}$ defined on a probability space $(\widetilde{\Omega},\widetilde{\mathcal{F}},\widetilde{\mathbb{P}})$ such that $\widetilde{\mathbb{P}}[\epsilon_n = \pm 1] = 1/2$. Then, by using Fubini's theorem and the classical Khintchine inequality in \cite[Lemma~4.1]{ledoux91} with constant $C_p > 0$ depending only on $p \in [1,\infty)$, it follows that
	\begin{equation}
		\label{EqLemmaBanachSpaceTypeProof2}
		\begin{aligned}
			\widetilde{\mathbb{E}}\left[ \left\Vert \sum_{n=1}^N \epsilon_n f_n \right\Vert_{W^{k,p}(U,\mathcal{L}(U),w;\mathbb{R}^d)}^p \right]^\frac{1}{p} & = \widetilde{\mathbb{E}}\left[ \sum_{\alpha \in \mathbb{N}^m_{0,k}} \int_U \left\Vert \sum_{n=1}^N \epsilon_n \partial_\alpha f_n(u) \right\Vert^p w(u) du \right]^\frac{1}{p} \\
			& = \left( \sum_{\alpha \in \mathbb{N}^m_{0,k}} \int_U \widetilde{\mathbb{E}}\left[ \left\Vert \sum_{n=1}^N \epsilon_n \partial_\alpha f_n(u) \right\Vert^p \right] w(u) du \right)^\frac{1}{p} \\
			& \leq C_p \left( \sum_{\alpha \in \mathbb{N}^m_{0,k}} \int_U \left( \sum_{n=1}^N \left\Vert \partial_\alpha f_n(u) \right\Vert^2 \right)^\frac{p}{2} w(u) du \right)^\frac{1}{p}.
		\end{aligned} 
	\end{equation}
	If $p \in [1,2]$, we use \eqref{EqLemmaBanachSpaceTypeProof2} and the inequality $\left( \sum_{n=1}^N x_n \right)^{p/2} \leq \sum_{n=1}^N x_n^{p/2}$ for any $x_1,...,x_N \geq 0$ to conclude that
	\begin{equation*}
		\begin{aligned}
			\widetilde{\mathbb{E}}\left[ \left\Vert \sum_{n=1}^N \epsilon_n f_n \right\Vert_{W^{k,p}(U,\mathcal{L}(U),w;\mathbb{R}^d)}^{\min(2,p)} \right]^\frac{1}{\min(2,p)} & = \widetilde{\mathbb{E}}\left[ \left\Vert \sum_{n=1}^N \epsilon_n f_n \right\Vert_{W^{k,p}(U,\mathcal{L}(U),w;\mathbb{R}^d)}^p \right]^\frac{1}{p} \\
			& \leq C_p \left( \sum_{\alpha \in \mathbb{N}^m_{0,k}} \int_U \left( \sum_{n=1}^N \left\Vert \partial_\alpha f_n(u) \right\Vert^2 \right)^\frac{p}{2} w(u) du \right)^\frac{1}{p} \\
			& \leq C_p \left( \sum_{n=1}^N \sum_{\alpha \in \mathbb{N}^m_{0,k}} \int_U \left\Vert \partial_\alpha f_n(u) \right\Vert^p w(u) du \right)^\frac{1}{p} \\
			& = C_p \left( \sum_{n=1}^N \Vert f_n \Vert_{W^{k,p}(U,\mathcal{L}(U),w;\mathbb{R}^d)}^{\min(2,p)} \right)^\frac{1}{\min(2,p)}.
		\end{aligned}
	\end{equation*}
	This shows for $p \in [1,2]$ that the Banach space $(W^{k,p}(U,\mathcal{L}(U),w;\mathbb{R}^d),\Vert \cdot \Vert_{W^{k,p}(U,\mathcal{L}(U),w;\mathbb{R}^d)})$ is of type $t = \min(2,p)$, where the constant $C_p > 0$ depends only on $p \in [1,\infty)$.
	
	Otherwise, if $p \in (2,\infty)$, we consider the measure spaces $(\lbrace 1,...,N \rbrace,\mathcal{P}(\lbrace 1,...,N \rbrace),\eta)$ and $(\mathbb{N}^m_{0,k} \times U,\mathcal{P}(\mathbb{N}^m_{0,k}) \otimes \mathcal{L}(U),\mu \otimes w)$, where $\mathcal{P}(\lbrace 1,...,N \rbrace)$ and $\mathcal{P}(\mathbb{N}^m_{0,k})$ denote the power sets of $\lbrace 1,...,N \rbrace$ and $\mathbb{N}^m_{0,k}$, respectively, and where $\mathcal{P}(\lbrace 1,...,N \rbrace) \ni A \mapsto \eta(A) := \sum_{n=1}^N \mathds{1}_A(n) \in [0,\infty)$ and $\mathcal{P}(\mathbb{N}^m_{0,k}) \otimes \mathcal{L}(U) \ni (A,B) \mapsto (\mu \otimes w)(A,B) := \big( \sum_{\alpha \in \mathbb{N}^m_{0,k}} \mathds{1}_A(\alpha) \big) \int_B w(u) du \in [0,\infty]$ are both measures. Then, by using the Minkowski inequality in \cite[Proposition~1.2.22]{hytoenen16} with $p \geq 2$, it follows for every $\mathbf{f} \in L^2(\lbrace 1,...,N \rbrace,\mathcal{P}(\lbrace 1,...,N \rbrace),\eta;L^p(\mathbb{N}^m_{0,k} \times U,\mathcal{P}(\mathbb{N}^m_{0,k}) \otimes \mathcal{L}(U),\mu \otimes w;\mathbb{R}^d))$ that
	\begin{equation}
		\label{EqLemmaBanachSpaceTypeProof3}
		\begin{aligned}
			& \Vert \mathbf{f} \Vert_{L^p(\mathbb{N}^m_{0,k} \times U,\mathcal{P}(\mathbb{N}^m_{0,k}) \otimes \mathcal{L}(U),\mu \otimes w;L^2(\lbrace 1,...,N \rbrace,\mathcal{P}(\lbrace 1,...,N \rbrace),\eta;\mathbb{R}^d))} \\
			& \quad\quad \leq \Vert \mathbf{f} \Vert_{L^2(\lbrace 1,...,N \rbrace,\mathcal{P}(\lbrace 1,...,N \rbrace),\eta;L^p(\mathbb{N}^m_{0,k} \times U,\mathcal{P}(\mathbb{N}^m_{0,k}) \otimes \mathcal{L}(U),\mu \otimes w;\mathbb{R}^d))}.
		\end{aligned}
	\end{equation}
	Now, we define the map $\lbrace 1,...,N \rbrace \times (\mathbb{N}^m_{0,k} \times U) \ni (n;\alpha,u) \mapsto \mathbf{f}(n;\alpha,u) := \partial_\alpha f_n(u) \in \mathbb{R}^d$ satisfying
	\begin{equation}
		\label{EqLemmaBanachSpaceTypeProof4}
		\begin{aligned}
			\Vert \mathbf{f} \Vert_{L^2(\lbrace 1,...,N \rbrace,\mathcal{P}(\lbrace 1,...,N \rbrace),\eta;L^p(\mathbb{N}^m_{0,k} \times U,\mathcal{P}(\mathbb{N}^m_{0,k}) \otimes \mathcal{L}(U),\mu \otimes w;\mathbb{R}^d))} & = \left( \sum_{n=1}^N \left( \sum_{\alpha \in \mathbb{N}^m_{0,k}} \int_U \Vert \partial_\alpha f_n(u) \Vert^p w(u) du \right)^\frac{2}{p} \right)^\frac{1}{2} \\
			& = \left( \sum_{n=1}^N \Vert f_n \Vert_{W^{k,p}(U,\mathcal{L}(U),w;\mathbb{R}^d)}^2 \right)^\frac{1}{2} < \infty,
		\end{aligned}
	\end{equation}
	which shows that $\mathbf{f} \in L^2(\lbrace 1,...,N \rbrace,\mathcal{P}(\lbrace 1,...,N \rbrace),\eta;L^p(\mathbb{N}^m_{0,k} \times U,\mathcal{P}(\mathbb{N}^m_{0,k}) \otimes \mathcal{L}(U),\mu \otimes w;\mathbb{R}^d))$. Hence, by using first Jensen's inequality and then by combining \eqref{EqLemmaBanachSpaceTypeProof2} and \eqref{EqLemmaBanachSpaceTypeProof3} with \eqref{EqLemmaBanachSpaceTypeProof4}, we conclude that
	\begin{equation*}
		\begin{aligned}
			\widetilde{\mathbb{E}}\left[ \left\Vert \sum_{n=1}^N \epsilon_n f_n \right\Vert_{W^{k,p}(U,\mathcal{L}(U),w;\mathbb{R}^d)}^{\min(2,p)} \right]^\frac{1}{\min(2,p)} & \leq \widetilde{\mathbb{E}}\left[ \left\Vert \sum_{n=1}^N \epsilon_n f_n \right\Vert_{W^{k,p}(U,\mathcal{L}(U),w;\mathbb{R}^d)}^p \right]^\frac{1}{p} \\
			& \leq C_p \left( \sum_{\alpha \in \mathbb{N}^m_{0,k}} \int_U \left( \sum_{n=1}^N \left\Vert \partial_\alpha f_n(u) \right\Vert^2 \right)^\frac{p}{2} w(u) du \right)^\frac{1}{p} \\
			& = C_p \Vert \mathbf{f} \Vert_{L^p(\mathbb{N}^m_{0,k} \times U,\mathcal{P}(\mathbb{N}^m_{0,k}) \otimes \mathcal{L}(U),\mu \otimes w;L^2(\lbrace 1,...,N \rbrace,\mathcal{P}(\lbrace 1,...,N \rbrace),\eta;\mathbb{R}^d))} \\
			& \leq C_p \Vert \mathbf{f} \Vert_{L^2(\lbrace 1,...,N \rbrace,\mathcal{P}(\lbrace 1,...,N \rbrace),\eta;L^p(\mathbb{N}^m_{0,k} \times U,\mathcal{P}(\mathbb{N}^m_{0,k}) \otimes \mathcal{L}(U),\mu \otimes w;\mathbb{R}^d))} \\
			& = C_p \left( \sum_{n=1}^N \Vert f \Vert_{W^{k,p}(U,\mathcal{L}(U),w;\mathbb{R}^d)}^{\min(2,p)} \right)^\frac{1}{\min(2,p)}.
		\end{aligned}
	\end{equation*}
	This shows for $p \in (2,\infty)$ that the Banach space $(W^{k,p}(U,\mathcal{L}(U),w;\mathbb{R}^d),\Vert \cdot \Vert_{W^{k,p}(U,\mathcal{L}(U),w;\mathbb{R}^d)})$ is of type $t = \min(2,p)$, where the constant $C_p > 0$ depends only on $p \in [1,\infty)$.
\end{proof}

\subsubsection{Randomized neurons and strong measurability}
\label{AppBochner}

In this section, we randomly initialize the weight vectors and biases inside the activation function to obtain the approximation rates in Theorem~\ref{ThmApproxRates}. To this end, we first show that the map from the parameters to a neuron is continuous.

\begin{lemma}
	\label{LemmaWkpwCont}
	For $k \in \mathbb{N}_0$, $p \in [1,\infty)$, $U \subseteq \mathbb{R}^m$ (open, if $k \geq 1$), $\gamma \in [0,\infty)$, and $\rho \in C^k_{pol,\gamma}(\mathbb{R})$, let $w: U \rightarrow [0,\infty)$ be a weight such that the constant $C^{(\gamma,p)}_{U,w} > 0$ defined in \eqref{EqThmApproxRates1} is finite. Then, the mapping
	\begin{equation*}
		\mathbb{R}^d \times \mathbb{R}^m \times \mathbb{R} \ni (y,a,b) \quad \mapsto \quad y \rho\left( a^\top \cdot - b \right) \in W^{k,p}(U,\mathcal{L}(U),w,\mathbb{R}^d)
	\end{equation*}
	is continuous, where $y \rho\left( a^\top \cdot - b \right)$ denotes the function $U \ni u \mapsto y \rho\left( a^\top u - b \right) \in \mathbb{R}^d$.
\end{lemma}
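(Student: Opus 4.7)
The plan is to prove continuity sequentially via the dominated convergence theorem. The key observation is that the polynomial-growth hypothesis $\rho \in C^k_{pol,\gamma}(\mathbb{R})$ is tailored to mesh exactly with the weight condition \eqref{EqThmApproxRates1}, producing an integrable dominant with very little work.

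First I would fix a convergent sequence $(y_n,a_n,b_n) \rightarrow (y,a,b)$ in $\mathbb{R}^d \times \mathbb{R}^m \times \mathbb{R}$ and extract the uniform bound $M := 1 + \sup_{n \in \mathbb{N}} \max(\Vert y_n \Vert, \Vert a_n \Vert, \vert b_n \vert) < \infty$. A direct application of the chain and product rules gives, for any $(z,c,e) \in \mathbb{R}^d \times \mathbb{R}^m \times \mathbb{R}$ and any $\alpha \in \mathbb{N}^m_{0,k}$,
\begin{equation*}
	\partial_\alpha \left( z \rho(c^\top u - e) \right) = z\, \rho^{(\vert \alpha \vert)}(c^\top u - e)\, c^\alpha, \qquad c^\alpha := \prod_{l=1}^m c_l^{\alpha_l}.
\end{equation*}
Continuity of $\rho^{(j)}$ for $j = 0,\ldots,k$ then yields pointwise convergence at every $u \in U$ of these partial derivatives as $n \rightarrow \infty$.

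Next I would produce a uniform dominant. Combining the definition of $\Vert \rho \Vert_{C^k_{pol,\gamma}(\mathbb{R})}$ with the elementary inequality $1 + \vert c^\top u - e \vert \leq (1 + \Vert c \Vert + \vert e \vert)(1 + \Vert u \Vert) \leq (1+2M)(1+\Vert u \Vert)$, one obtains, for any parameters $(z,c,e)$ of norm at most $M$ and any $\alpha \in \mathbb{N}^m_{0,k}$, the estimate
\begin{equation*}
	\left\Vert z\, \rho^{(\vert \alpha \vert)}(c^\top u - e)\, c^\alpha \right\Vert \leq M^{k+1} \Vert \rho \Vert_{C^k_{pol,\gamma}(\mathbb{R})} (1+2M)^\gamma (1+\Vert u \Vert)^\gamma.
\end{equation*}
Raising the triangle inequality for the difference to the $p$-th power then produces a dominating function proportional to $(1+\Vert u \Vert)^{\gamma p}$, which is integrable with respect to $w(u)du$ precisely because $\int_U (1+\Vert u \Vert)^{\gamma p} w(u) du = \big( C^{(\gamma,p)}_{U,w} \big)^p < \infty$ by \eqref{EqThmApproxRates1}.

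The dominated convergence theorem applied for each $\alpha \in \mathbb{N}^m_{0,k}$, followed by summation over this finite index set, then shows that $\left\Vert y_n \rho(a_n^\top \cdot - b_n) - y \rho(a^\top \cdot - b) \right\Vert_{W^{k,p}(U,\mathcal{L}(U),w,\mathbb{R}^d)} \rightarrow 0$, establishing sequential (hence full) continuity of the map. I do not foresee any substantive obstacle: the argument is essentially automatic once one has computed the partial derivatives, recognized the parameter sequence as bounded, and noticed that the polynomial-growth condition on $\rho$ and the moment condition \eqref{EqThmApproxRates1} on $w$ were designed to fit together and deliver the dominant.
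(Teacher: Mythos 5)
Your proof is correct and takes essentially the same approach as the paper: fix a convergent parameter sequence, compute the partial derivatives of the neuron, exploit the polynomial growth of $\rho$ together with the boundedness of the parameter sequence to construct a uniform dominant proportional to $(1+\Vert u \Vert)^{\gamma p}$, and then apply the dominated convergence theorem using the integrability condition \eqref{EqThmApproxRates1} on $w$. The only cosmetic difference is that you package the uniform bound into a single constant $M$ rather than the paper's sequence-dependent constant $C_{y,a,b}$.
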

\begin{proof}
	Fix a sequence $(y_M,a_M,b_M)_{M \in \mathbb{N}} \subseteq \mathbb{R}^d \times \mathbb{R}^m \times \mathbb{R}$ converging to $(y,a,b) \in \mathbb{R}^d \times \mathbb{R}^m \times \mathbb{R}$. Then, by using that $y_M a_M^\alpha (1 + \Vert a_M \Vert + \vert b_M \vert)$ converges uniformly in $\alpha \in \mathbb{N}^m_{0,k}$ to $y a^\alpha (1 + \Vert a \Vert + \vert b \vert)$, where $a^\alpha := \prod_{l=1}^m a_l^{\alpha_l}$ for $a := (a_1,...,a_m)^\top \in \mathbb{R}^m$ and $\alpha := (\alpha_1,...,\alpha_m) \in \mathbb{N}^m_{0,k}$, the constant $C_{y,a,b} := \max_{\alpha \in \mathbb{N}^m_{0,k}} \left\Vert y a^\alpha \right\Vert (1 + \Vert a \Vert + \vert b \vert) + \sup_{M \in \mathbb{N}} \big( \max_{\alpha \in \mathbb{N}^m_{0,k}} \left\Vert y_M a_M^\alpha \right\Vert (1 + \Vert a_M \Vert + \vert b_M \vert) \big) \geq 0$ is finite. Hence, by using that $\rho \in C^k_{pol,\gamma}(\mathbb{R})$, i.e.~that $\left\vert \rho^{(j)}(s) \right\vert \leq \Vert \rho \Vert_{C^k_{pol,\gamma}(\mathbb{R})} (1+\vert s \vert)^\gamma$ for any $j = 0,...,k$ and $s \in \mathbb{R}$, the inequality $1+\left\vert a_M^\top u - b_M \right\vert \leq 1+\Vert a_M \Vert \Vert u \Vert + \vert b_M \vert \leq (1+\Vert a_M \Vert + \vert b_M \vert) (1 + \Vert u \Vert)$ for any $M \in \mathbb{N}$ and $u \in \mathbb{R}^m$, it follows for every $\alpha \in \mathbb{N}^m_{0,k}$, $u \in U$, and $M \in \mathbb{N}$ that
	\begin{equation}
		\label{EqLemmaWkpwContProof1}
		\begin{aligned}
			\left\Vert y_M \rho^{(\vert \alpha \vert)}\left( a_M^\top u - b_M \right) a_M^\alpha \right\Vert & \leq \left\Vert y_M a_M^\alpha \right\Vert \left\vert \rho^{(\vert \alpha \vert)}\left( a_M^\top u - b_M \right) \right\vert \\
			& \leq \left\Vert y_M a_M^\alpha \right\Vert \Vert \rho \Vert_{C^k_{pol,\gamma}(\mathbb{R})} \left( 1 + \left\vert a_M^\top u - b_M \right\vert \right)^\gamma \\
			& \leq \left\Vert y_M a_M^\alpha \right\Vert (1+\Vert a_M \Vert+\vert b_M \vert)^\gamma \Vert \rho \Vert_{C^k_{pol,\gamma}(\mathbb{R})} (1+\Vert u \Vert)^\gamma \\
			& \leq C_{y,a,b} \Vert \rho \Vert_{C^k_{pol,\gamma}(\mathbb{R})} (1+\Vert u \Vert)^\gamma.
		\end{aligned}
	\end{equation}
	Analogously, we conclude for every $\alpha \in \mathbb{N}^m_{0,k}$ and $u \in U$ that
	\begin{equation}
		\label{EqLemmaWkpwContProof2}
		\left\Vert y_M \rho^{(\vert \alpha \vert)}\left( a_M^\top u - b_M \right) a_M^\alpha \right\Vert \leq C_{y,a,b} \Vert \rho \Vert_{C^k_{pol,\gamma}(\mathbb{R})} (1+\Vert u \Vert)^\gamma.
	\end{equation}
	Hence, by using the triangle inequality together with the inequality $(x+y)^p \leq 2^{p-1} \left( x^p + y^p \right)$ for any $x,y \geq 0$ as well as the inequalities \eqref{EqLemmaWkpwContProof1} and \eqref{EqLemmaWkpwContProof2}, it follows for every $\alpha \in \mathbb{N}^m_{0,k}$, $u \in U$, and $M \in \mathbb{N}$ that
	\begin{equation}
		\label{EqLemmaWkpwContProof3}
		\begin{aligned}
			& \left\Vert y \rho^{(\vert \alpha \vert)}\left( a^\top u - b \right) a^\alpha - y_M \rho^{(\vert \alpha \vert)}\left( a_M^\top u - b_M \right) a_M^\alpha \right\Vert^p \\
			& \quad\quad \leq 2^{p-1} \left( \left\Vert y \rho^{(\vert \alpha \vert)}\left( a^\top u - b \right) a^\alpha \right\Vert^p + \left\Vert y_M \rho^{(\vert \alpha \vert)}\left( a_M^\top u - b_M \right) a_M^\alpha \right\Vert^p \right) \\
			& \quad\quad \leq 2^p C_{y,a,b}^p \Vert \rho \Vert_{C^k_{pol,\gamma}(\mathbb{R})}^p (1+\Vert u \Vert)^{\gamma p}.
		\end{aligned}
	\end{equation}
	Thus, by applying the $\mathbb{R}^d$-valued dominated convergence theorem (see \cite[Proposition~1.2.5]{hytoenen16} with \eqref{EqLemmaWkpwContProof3} and $\int_U \left( 1 + \Vert u \Vert \right)^{\gamma p} w(u) du = \big( C^{(\gamma,p)}_{U,w} \big)^p < \infty$ by assumption), we have
	\begin{equation*}
		\begin{aligned}
			& \lim_{M \rightarrow \infty} \left\Vert y \rho\left( a^\top \cdot - b \right) - y_M \rho\left( a_M^\top \cdot - b_M \right) \right\Vert_{W^{k,p}(U,\mathcal{L}(U),w;\mathbb{R}^d)} \\
			& \quad\quad = \left( \sum_{\alpha \in \mathbb{N}^m_{0,k}} \lim_{M \rightarrow \infty} \int_U \left\Vert y \rho^{(\vert \alpha \vert)}\left( a^\top u - b \right) a^\alpha - y_M \rho^{(\vert \alpha \vert)}\left( a_M^\top u - b_M \right) a_M^\alpha \right\Vert^p w(u) du \right)^\frac{1}{p} = 0,
		\end{aligned}
	\end{equation*}
	which completes the proof.
\end{proof}

Moreover, we fix throughout the rest of this paper a probability space $(\Omega,\mathcal{F},\mathbb{P})$ and assume that $(a_n)_{n \in \mathbb{N}} \sim t_m$ and $(b_n)_{n \in \mathbb{N}} \sim t_1$ are independent sequences of independent and identically distributed (i.i.d.) random variables following a (multivariate) Student's $t$-distribution\footnote{\label{FootnoteTDistr}For any $m \in \mathbb{N}$, a random variable $a \sim t_m$ following a Student's $t$-distribution has probability density function $\mathbb{R}^m \ni a \mapsto p_a(a) = \frac{\Gamma((m+1)/2)}{\pi^{(m+1)/2}} \left( 1 + \Vert a \Vert^2 \right)^{-(m+1)/2} \in (0,\infty)$.}. In this case, we write $(a_n,b_n)_{n \in \mathbb{N}} \sim t_m \otimes t_1$. Then, we show that a randomized neuron with $(a_n,b_n)_{n \in \mathbb{N}} \sim t_m \otimes t_1$ used for the weight vectors and biases inside the activation function, is a strongly measurable map in the sense of \cite[Definition~1.1.14]{hytoenen16}, where we define the $\sigma$-algebra $\mathcal{F}_{a,b} := \sigma(\lbrace a_n,b_n: n \in \mathbb{N} \rbrace)$.

\begin{lemma}
	\label{LemmaWkpwStrongMbl}
	For $k \in \mathbb{N}_0$, $p \in [1,\infty)$, $U \subseteq \mathbb{R}^m$ (open, if $k \geq 1$), $\gamma \in [0,\infty)$, and $\rho \in C^k_{pol,\gamma}(\mathbb{R})$, let $w: U \rightarrow [0,\infty)$ be a weight such that the constant $C^{(\gamma,p)}_{U,w} > 0$ defined in \eqref{EqThmApproxRates1} is finite. Moreover, for $n \in \mathbb{N}$ and an $\mathcal{F}_{a,b}/\mathcal{B}(\mathbb{R}^d)$-measurable random vector $y: \Omega \rightarrow \mathbb{R}^d$, we define the map
	\begin{equation}
		\label{EqLemmaWkpwStrongMbl1}
		\Omega \ni \omega \quad \mapsto \quad R_n(\omega) := y(\omega) \rho\left( a_n(\omega)^\top \cdot - b_n(\omega) \right) \in W^{k,p}(U,\mathcal{L}(U),w,\mathbb{R}^d).
	\end{equation}
	Then, $R_n: \Omega \rightarrow W^{k,p}(U,\mathcal{L}(U),w,\mathbb{R}^d)$ is a strongly $(\mathbb{P},\mathcal{F}_{a,b})$-measurable map with values in the separable Banach space $(W^{k,p}(U,\mathcal{L}(U),w;\mathbb{R}^d),\Vert \cdot \Vert_{W^{k,p}(U,\mathcal{L}(U),w;\mathbb{R}^d)})$.
\end{lemma}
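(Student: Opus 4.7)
The plan is to obtain strong measurability via the Pettis measurability theorem by combining the separability of the target space (Lemma~\ref{LemmaWkpwSep}) with the continuity of the parametrization map (Lemma~\ref{LemmaWkpwCont}).

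First, I note that by Lemma~\ref{LemmaWkpwSep} the Banach space $(W^{k,p}(U,\mathcal{L}(U),w;\mathbb{R}^d),\Vert \cdot \Vert_{W^{k,p}(U,\mathcal{L}(U),w;\mathbb{R}^d)})$ is separable. Next, by Lemma~\ref{LemmaWkpwCont} the parametrization map
\begin{equation*}
	\Phi:\; \mathbb{R}^d \times \mathbb{R}^m \times \mathbb{R} \ni (y,a,b) \quad \mapsto \quad y\rho\bigl( a^\top \cdot - b \bigr) \in W^{k,p}(U,\mathcal{L}(U),w;\mathbb{R}^d)
\end{equation*}
is continuous, hence Borel measurable with respect to $\mathcal{B}(\mathbb{R}^d \times \mathbb{R}^m \times \mathbb{R})/\mathcal{B}(W^{k,p}(U,\mathcal{L}(U),w;\mathbb{R}^d))$.

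Second, by assumption $y: \Omega \to \mathbb{R}^d$ is $\mathcal{F}_{a,b}/\mathcal{B}(\mathbb{R}^d)$-measurable, and $a_n: \Omega \to \mathbb{R}^m$, $b_n: \Omega \to \mathbb{R}$ are $\mathcal{F}_{a,b}$-measurable by the very definition of $\mathcal{F}_{a,b}:=\sigma(\{a_n,b_n : n \in \mathbb{N}\})$. Hence the product map $\Omega \ni \omega \mapsto (y(\omega), a_n(\omega), b_n(\omega)) \in \mathbb{R}^d \times \mathbb{R}^m \times \mathbb{R}$ is $\mathcal{F}_{a,b}/\mathcal{B}(\mathbb{R}^d \times \mathbb{R}^m \times \mathbb{R})$-measurable. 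Composing with the continuous map $\Phi$ yields that
\begin{equation*}
	R_n \;=\; \Phi \circ (y,a_n,b_n): \Omega \rightarrow W^{k,p}(U,\mathcal{L}(U),w;\mathbb{R}^d)
\end{equation*}
is $\mathcal{F}_{a,b}/\mathcal{B}(W^{k,p}(U,\mathcal{L}(U),w;\mathbb{R}^d))$-measurable, i.e.~Borel measurable.

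Finally, since the target space $W^{k,p}(U,\mathcal{L}(U),w;\mathbb{R}^d)$ is separable, the Pettis measurability theorem (see \cite[Theorem~1.1.20]{hytoenen16}) ensures that Borel measurability into this space is equivalent to strong $(\mathbb{P},\mathcal{F}_{a,b})$-measurability, which gives the claim. There is no substantial obstacle here: the conceptual content has already been placed into Lemmas~\ref{LemmaWkpwSep} and \ref{LemmaWkpwCont}, so this proof is essentially an invocation of Pettis's theorem on the composition $\Phi \circ (y,a_n,b_n)$.
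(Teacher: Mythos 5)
Your proposal is correct and follows essentially the same route as the paper: separability from Lemma~\ref{LemmaWkpwSep}, continuity of $(y,a,b) \mapsto y\rho(a^\top\cdot - b)$ from Lemma~\ref{LemmaWkpwCont}, Borel measurability of $R_n$ as a composition, and then strong measurability via \cite[Theorems~1.1.6 and 1.1.20]{hytoenen16}. The only (harmless) difference is that the paper also spells out an explicit verification that $R_n(\omega)$ lies in $W^{k,p}(U,\mathcal{L}(U),w;\mathbb{R}^d)$ for each $\omega$, whereas you correctly observe that this is already implicit in the statement of Lemma~\ref{LemmaWkpwCont}.
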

\begin{proof}
	First, we show that the map $R_n: \Omega \rightarrow W^{k,p}(U,\mathcal{L}(U),w,\mathbb{R}^d)$ takes values in the Banach space $(W^{k,p}(U,\mathcal{L}(U),w;\mathbb{R}^d),\Vert \cdot \Vert_{W^{k,p}(U,\mathcal{L}(U),w;\mathbb{R}^d)})$, which is by Lemma~\ref{LemmaWkpwSep} separable. Indeed, since $\rho \in C^k_{pol,\gamma}(\mathbb{R})$ is $k$-times differentiable, it follows for every fixed $\omega \in \Omega$ and $\alpha \in \mathbb{N}^m_{0,k}$ that $U \ni u \mapsto \partial_\alpha R_n(\omega) = y(\omega) \rho^{(\vert \alpha \vert)}\left( a_n(\omega)^\top u - b_n(\omega) \right) a_n(\omega)^\alpha \in \mathbb{R}^d$ is $\mathcal{L}(U)/\mathcal{B}(\mathbb{R}^d)$-measurable. Moreover, by using that $\rho \in C^k_{pol,\gamma}(\mathbb{R})$, i.e.~that $\left\vert \rho^{(j)}(s) \right\vert \leq \Vert \rho \Vert_{C^k_{pol,\gamma}(\mathbb{R})} (1+\vert s \vert)^\gamma$ for any $j = 0,...,k$ and $s \in \mathbb{R}$, the inequality $1+\left\vert a_n(\omega)^\top u - b_n(\omega) \right\vert \leq 1+\Vert a_n(\omega) \Vert \Vert u \Vert + \vert b_n(\omega) \vert \leq (1+\Vert a_n(\omega) \Vert + \vert b_n(\omega) \vert) (1 + \Vert u \Vert)$ for any $u \in \mathbb{R}^m$, and that $C^{(\gamma,p)}_{U,w} := \left( \int_U (1+\Vert u \Vert)^{\gamma p} w(u) du \right)^{1/p} > 0$ is finite, we conclude that
	\begin{equation*}
		\begin{aligned}
			& \left\Vert R_n(\omega) \right\Vert_{W^{k,p}(U,\mathcal{L}(U),w;\mathbb{R}^d)}^p = \sum_{\alpha \in \mathbb{N}^m_{0,k}} \int_U \left\Vert y(\omega) \rho^{(\vert \alpha \vert)}\left( a_n(\omega)^\top u - b_n(\omega) \right) a_n(\omega)^\alpha \right\Vert^p w(u) du \\
			& \quad\quad \leq \left( \sum_{\alpha \in \mathbb{N}^m_{0,k}} \left\Vert y(\omega) a_n(\omega)^\alpha \right\Vert^p \right) \int_U \left( 1 + \left\vert a_n(\omega)^\top u - b_n(\omega) \right\vert \right)^{\gamma p} w(u) du \\
			& \quad\quad \leq \left( \sum_{\alpha \in \mathbb{N}^m_{0,k}} \left\Vert y(\omega) a_n(\omega)^\alpha \right\Vert^p \right) (1 + \Vert a_n(\omega) \Vert + \vert b_n(\omega) \vert)^{\gamma p} \int_U (1+\Vert u \Vert)^{\gamma p} w(u) du < \infty.
		\end{aligned}
	\end{equation*}
	This shows that $R_n(\omega) \in W^{k,p}(U,\mathcal{L}(U),w;\mathbb{R}^d)$ for all $\omega \in \Omega$.
	
	Finally, in order to show that the map \eqref{EqLemmaWkpwStrongMbl1} is strongly $(\mathbb{P},\mathcal{F}_{a,b})$-measurable, we use that $\Omega \ni \omega \mapsto (y(\omega),a_n(\omega),b_n(\omega)) \in \mathbb{R}^d \times \mathbb{R}^m \times \mathbb{R}$ is by definition $\mathcal{F}_{a,b}/\mathbb{B}(\mathbb{R}^d \times \mathbb{R}^m \times \mathbb{R})$-measurable and that $\mathbb{R}^d \times \mathbb{R}^m \times \mathbb{R} \ni (y,a,b) \mapsto y \rho\big( a^\top \cdot - b \big) \in W^{k,p}(U,\mathcal{L}(U),w;\mathbb{R}^d)$ is by Lemma~\ref{LemmaWkpwCont} continuous to conclude that the concatenation \eqref{EqLemmaWkpwStrongMbl1} is $\mathcal{F}_{a,b}/\mathcal{B}(W^{k,p}(U,\mathcal{L}(U),w;\mathbb{R}^d))$-measurable, where $\mathcal{B}(W^{k,p}(U,\mathcal{L}(U),w;\mathbb{R}^d))$ denotes the Borel $\sigma$-algebra of $(W^{k,p}(U,\mathcal{L}(U),w;\mathbb{R}^d),\Vert \cdot \Vert_{W^{k,p}(U,\mathcal{L}(U),w;\mathbb{R}^d)})$. Since $(W^{k,p}(U,\mathcal{L}(U),w;\mathbb{R}^d,\Vert \cdot \Vert_{W^{k,p}(U,\mathcal{L}(U),w;\mathbb{R}^d})$ is by Lemma~\ref{LemmaWkpwSep} separable, we can apply \cite[Theorem 1.1.6+1.1.20]{hytoenen16} to conclude that \eqref{EqLemmaWkpwStrongMbl1} is strongly $(\mathbb{P},\mathcal{F}_{a,b})$-measurable.
\end{proof}

\subsubsection{Proof of Theorem~\ref{ThmApproxRates}}
\label{AppApproxRates}

In this section, we prove the approximation rates in Theorem~\ref{ThmApproxRates}. Let us first sketch the main ideas of the proof. For some fixed $f \in W^{k,p}(U,\mathcal{L}(U),w;\mathbb{R}^d) \cap \mathbb{B}^{k,\gamma}_\psi(U;\mathbb{R}^d)$ and $N \in \mathbb{N}$, we use the randomized neuron $R_n: \Omega \rightarrow W^{k,p}(U,\mathcal{L}(U),w,\mathbb{R}^d)$ in \eqref{EqLemmaWkpwStrongMbl1} with a particular linear readout. Then, by using the integral representation in Proposition~\ref{PropIntRepr} implying that $f = \mathbb{E}[R_n]$, a symmetrization argument with Rademacher averages, and the Banach space type of $W^{k,p}(U,\mathcal{L}(U),w,\mathbb{R}^d)$, we obtain
\begin{equation*}
	\begin{aligned}
		\mathbb{E}\left[ \left\Vert f - \frac{1}{N} \sum_{n=1}^N R_n \right\Vert_{W^{k,p}(U,\mathcal{L}(U),w,\mathbb{R}^d)}^2 \right]^\frac{1}{2} & = \frac{1}{N} \mathbb{E}\left[ \left\Vert \sum_{n=1}^N \left( \mathbb{E}\left[ R_n \right] - R_n \right) \right\Vert_{W^{k,p}(U,\mathcal{L}(U),w,\mathbb{R}^d)}^2 \right]^\frac{1}{2} \\
		& \leq C \frac{\left\Vert R_n \right\Vert_{L^2(\Omega,\mathcal{F},\mathbb{P};W^{k,p}(U,\mathcal{L}(U),w;\mathbb{R}^d))}}{N^{1-\frac{1}{\min(2,p)}}},
	\end{aligned}
\end{equation*}
where $C > 0$ is a constant and where $\Vert R_n \Vert_{L^2(\Omega,\mathcal{F},\mathbb{P};W^{k,p}(U,\mathcal{L}(U),w;\mathbb{R}^d))}$ can be bounded by $\Vert f \Vert_{\mathbb{B}^{k,\gamma}_\psi(U;\mathbb{R}^d)}$. Hence, there exists some $\omega \in \Omega$ such that the neural network $\varphi_N := \frac{1}{N} \sum_{n=1}^N R_n(\omega) \in \mathcal{NN}^\rho_{U,d}$ satisfies
\begin{equation*}
	\begin{aligned}
		\Vert f - \varphi \Vert_{W^{k,p}(U,\mathcal{L}(U),w,\mathbb{R}^d)} & \leq \mathbb{E}\left[ \left\Vert f - \frac{1}{N} \sum_{n=1}^N R_n \right\Vert_{W^{k,p}(U,\mathcal{L}(U),w,\mathbb{R}^d)}^2 \right]^\frac{1}{2} \\
		& \leq C \frac{\left\Vert R_n \right\Vert_{L^2(\Omega,\mathcal{F},\mathbb{P};W^{k,p}(U,\mathcal{L}(U),w;\mathbb{R}^d))}}{N^{1-\frac{1}{\min(2,p)}}}.
	\end{aligned}
\end{equation*}

\begin{proof}[Proof of Theorem~\ref{ThmApproxRates}]
	Fix $f \in W^{k,p}(U,\mathcal{L}(U),w;\mathbb{R}^d) \cap \mathbb{B}^{k,\gamma}_\psi(U;\mathbb{R}^d)$ and $N \in \mathbb{N}$. Then, by definition of $\mathbb{B}^{k,\gamma}_\psi(U;\mathbb{R}^d)$, there exists some $g \in L^1(\mathbb{R}^m,\mathcal{L}(\mathbb{R}^m),du;\mathbb{R}^d)$ with $\widehat{g} \in L^1(\mathbb{R}^m,\mathcal{L}(\mathbb{R}^m),du;\mathbb{C}^d)$ such that
	\begin{equation}
		\label{EqThmApproxRatesProof0}
		\left( \int_{\mathbb{R}^m} \int_{\mathbb{R}} \left( 1 + \Vert a \Vert^2 \right)^{\gamma+k+\frac{m+1}{2}} \left( 1 + \vert b \vert^2 \right)^{\gamma+1} \Vert (\mathfrak{R}_\psi g)(a,b) \Vert^2 db da \right)^\frac{1}{2} \leq 2 \Vert f \Vert_{\mathbb{B}^{k,\gamma}_\psi(U;\mathbb{R}^d)}.
	\end{equation}
	From this, we define for every $n = 1,...,N$ the map
	\begin{equation}
		\label{EqThmApproxRatesProof1}
		\Omega \ni \omega \quad \mapsto \quad R_n(\omega) := y_n(\omega) \rho\left( a_n(\omega)^\top \cdot - b_n(\omega) \right) \in W^{k,p}(U,\mathcal{L}(U),w,\mathbb{R}^d)
	\end{equation}
	with
	\begin{equation}
		\label{EqThmApproxRatesProof2}
		\Omega \ni \omega \quad \mapsto \quad y_n(\omega) := \re\left( \frac{(\mathfrak{R}_\psi g)(a_n(\omega),b_n(\omega))}{C^{(\psi,\rho)}_m p_a(a_n(\omega)) p_b(b_n(\omega))} \right) \in \mathbb{R}^d,
	\end{equation}
	where $p_a: \mathbb{R}^m \rightarrow (0,\infty)$ and $p_b: \mathbb{R} \rightarrow (0,\infty)$ denote the probability density function of the (multivariate) Student's $t$-distributions$^{\text{\ref{FootnoteTDistr}}}$. Then, by using that $\mathfrak{R}_\psi: \mathbb{R}^m \times \mathbb{R} \rightarrow \mathbb{C}^d$ is continuous (see Lemma~\ref{LemmaRidgeletCont}), we observe that $y_n: \Omega \rightarrow \mathbb{R}^d$ is $\mathcal{F}_{a,b}/\mathcal{B}(\mathbb{R}^d)$-measurable. Hence, we can apply Lemma~\ref{LemmaWkpwStrongMbl} to conclude that $R_n: \Omega \rightarrow W^{k,p}(U,\mathcal{L}(U),w;\mathbb{R}^d)$ is a strongly $(\mathbb{P},\mathcal{F}_{a,b})$-measurable map with values in the separable Banach space $(W^{k,p}(U,\mathcal{L}(U),w;\mathbb{R}^d),\Vert \cdot \Vert_{W^{k,p}(U,\mathcal{L}(U),w;\mathbb{R}^d)})$.
	
	Now, we show $R_n \in L^2(\Omega,\mathcal{F}_{a,b},\mathbb{P};W^{k,p}(U,\mathcal{L}(U),w;\mathbb{R}^d))$ and $\mathbb{E}[R_n] = f \in W^{k,p}(U,\mathcal{L}(U),w;\mathbb{R}^d)$. To this end, we use that $\rho \in C^k_{pol,\gamma}(\mathbb{R})$, i.e.~that $\left\vert \rho^{(j)}(s) \right\vert \leq \Vert \rho \Vert_{C^k_{pol,\gamma}(\mathbb{R})} (1+\vert s \vert)^\gamma$ for any $j = 0,...,k$ and $s \in \mathbb{R}$, the inequality $1+\left\vert a^\top u - b \right\vert \leq 1+\Vert a \Vert \Vert u \Vert + \vert b \vert \leq (1+\Vert a \Vert) (1+\vert b \vert) (1+\Vert u \Vert)$ for any $a,u \in \mathbb{R}^m$ and $b \in \mathbb{R}$, twice the inequality $(x+y)^2 \leq 2\big( x^2 + y^2 \big)$ for any $x,y \in [0,\infty)$, and the finite constant $C^{(\gamma,p)}_{U,w} > 0$ to conclude for every $a \in \mathbb{R}^m$, $b \in \mathbb{R}$, and $j=0,...,k$ that
	\begin{equation}
		\label{EqThmApproxRatesProof3}
		\begin{aligned}
			\left( \int_U \left\vert \rho^{(j)}\left( a^\top u - b \right) \right\vert^p w(u) du \right)^\frac{1}{p} & \leq \Vert \rho \Vert_{C^k_{pol,\gamma}(\mathbb{R})} \left( \int_U \left( 1 + \left\vert a^\top u - b \right\vert \right)^{\gamma p} w(u) du \right)^\frac{1}{p} \\
			& \leq \Vert \rho \Vert_{C^k_{pol,\gamma}(\mathbb{R})} (1+\Vert a \Vert)^\gamma (1+\vert b \vert)^\gamma \left( \int_U \left( 1 + \Vert u \Vert \right)^{\gamma p} w(u) du \right)^\frac{1}{p} \\
			& \leq 4 \Vert \rho \Vert_{C^k_{pol,\gamma}(\mathbb{R})} \left( 1 + \Vert a \Vert^2 \right)^\frac{\gamma}{2} \left( 1 + \vert b \vert^2 \right)^\frac{\gamma}{2} C^{(\gamma,p)}_{U,w}.
		\end{aligned}
	\end{equation}
	Hence, by using the inequality $\big\vert a^\alpha \big\vert := \prod_{l=1}^m \vert a_l \vert^{\alpha_l} \leq \big( 1+\Vert a \Vert^2 \big)^{\vert \alpha \vert/2} \leq \big( 1+\Vert a \Vert^2 \big)^{k/2}$ for any $\alpha \in \mathbb{N}^m_{0,k}$ and $a \in \mathbb{R}^m$, the inequality \eqref{EqThmApproxRatesProof3}, that $\big\vert \mathbb{N}^m_{0,k} \big\vert = \sum_{j=0}^k m^j \leq 2m^k$, and the inequality \eqref{EqThmApproxRatesProof1}, we obtain that
	\begin{equation}
		\label{EqThmApproxRatesProof4}
		\begin{aligned}
			& \Vert R_n \Vert_{L^2(\Omega,\mathcal{F},\mathbb{P};W^{k,p}(U,\mathcal{L}(U),w;\mathbb{R}^d))} = \mathbb{E}\left[ \left\Vert y_n \rho\left( a_n^\top \cdot \, - b_n \right) \right\Vert_{W^{k,p}(U,\mathcal{L}(U),w;\mathbb{R}^d)}^2 \right]^\frac{1}{2} \\
			& \quad\quad = \mathbb{E}\left[ \left( \sum_{\alpha \in \mathbb{N}^m_{0,k}} \int_U \left\Vert \partial_\alpha \left( y_n \rho\left( a_n^\top u - b_n \right) \right) \right\Vert^p du \right)^\frac{2}{p} \right]^\frac{1}{2} \\
			& \quad\quad = \mathbb{E}\left[ \left( \sum_{\alpha \in \mathbb{N}^m_{0,k}} \left\Vert \re\left( \frac{a_n^\alpha (\mathfrak{R}_\psi g)(a_n,b_n)}{C^{(\psi,\rho)}_m p_a(a_n) p_b(b_n)} \right) \right\Vert^p \int_U \left\vert \rho^{(\vert\alpha\vert)}\left( a_n^\top u - b_n \right) \right\vert^p du \right)^\frac{2}{p} \right]^\frac{1}{2} \\
			& \quad\quad \leq 4 \Vert \rho \Vert_{C^k_{pol,\gamma}(\mathbb{R})} \frac{C^{(\gamma,p)}_{U,w} \left\vert \mathbb{N}^m_{0,k} \right\vert^\frac{1}{p}}{\left\vert C^{(\psi,\rho)}_m \right\vert} \mathbb{E}\left[ \frac{\left( 1 + \Vert a_n \Vert^2 \right)^{\gamma+k} \left( 1 + \vert b_n \vert^2 \right)^\gamma}{p_a(a_n)^2 p_b(b_n)^2} \Vert (\mathfrak{R}_\psi g)(a_n,b_n) \Vert^2 \right]^\frac{1}{2} \\
			& \quad\quad \leq 2^{3+\frac{1}{p}} \Vert \rho \Vert_{C^k_{pol,\gamma}(\mathbb{R})} \frac{C^{(\gamma,p)}_{U,w} m^\frac{k}{p}}{\left\vert C^{(\psi,\rho)}_m \right\vert} \left( \int_{\mathbb{R}^m} \int_{\mathbb{R}} \frac{\left( 1 + \Vert a \Vert^2 \right)^{\gamma+k} \left( 1 + \vert b \vert^2 \right)^\gamma}{p_a(a)^2 p_b(b)^2} \Vert (\mathfrak{R}_\psi g)(a,b) \Vert^2 p_a(a) p_b(b) db da \right)^\frac{1}{2} \\
			& \quad\quad \leq 2^{3+\frac{1}{p}} \Vert \rho \Vert_{C^k_{pol,\gamma}(\mathbb{R})} \frac{C^{(\gamma,p)}_{U,w} m^\frac{k}{p}}{\left\vert C^{(\psi,\rho)}_m \right\vert} \\
			& \quad\quad\quad\quad \cdot \left( \frac{\pi^\frac{m+1}{2}}{\Gamma\left( \frac{m+1}{2} \right)} \pi \int_{\mathbb{R}^m} \int_{\mathbb{R}} \left( 1 + \Vert a \Vert^2 \right)^{\gamma+k+\frac{m+1}{2}} \left( 1 + \vert b \vert^2 \right)^{\gamma+1} \Vert (\mathfrak{R}_\psi g)(a,b) \Vert^2 db da \right)^\frac{1}{2} \\
			& \quad\quad \leq 2^{4+\frac{1}{p}} \pi \Vert \rho \Vert_{C^k_{pol,\gamma}(\mathbb{R})} \frac{C^{(\gamma,p)}_{U,w} m^\frac{k}{p} \pi^\frac{m+1}{4}}{\left\vert C^{(\psi,\rho)}_m \right\vert \Gamma\left( \frac{m+1}{2} \right)^\frac{1}{2}} \Vert f \Vert_{\mathbb{B}^{k,\gamma}_\psi(U;\mathbb{R}^d)} < \infty,
		\end{aligned}
	\end{equation}
	which shows that $R_n \in L^2(\Omega,\mathcal{F}_{a,b},\mathbb{P};W^{k,p}(U,\mathcal{L}(U),w;\mathbb{R}^d))$. Moreover, by using the probability density functions $p_a: \mathbb{R}^m \rightarrow (0,\infty)$ and $p_b: \mathbb{R} \rightarrow (0,\infty)$, Proposition~\ref{PropIntRepr}, and that $f = g$ a.e.~on $U$, it follows for a.e.~$u \in U$ that
	\begin{equation*}
		\begin{aligned}
			\mathbb{E}[R_n(u)] & = \mathbb{E}\left[ \re\left( \frac{(\mathfrak{R}_\psi g)(a_n,b_n)}{C^{(\psi,\rho)}_m p_a(a_n) p_b(b_n)} \right) \rho\left( a_n^\top u - b_n \right) \right] \\
			& = \int_{\mathbb{R}^m} \int_{\mathbb{R}} \re\left( \frac{(\mathfrak{R}_\psi g)(a,b)}{C^{(\psi,\rho)}_m p_a(a) p_b(b)} \right) \rho\left( a^\top u - b \right) p_a(a) p_b(b) db da \\
			& = \re\left( \frac{1}{C^{(\psi,\rho)}_m} \int_{\mathbb{R}^m} \int_{\mathbb{R}} (\mathfrak{R}_\psi g)(a,b) \rho\left( a^\top u - b \right) db da \right) \\
			& = \re\left( \frac{1}{C^{(\psi,\rho)}_m} C^{(\psi,\rho)}_m g(u) \right) = g(u) = f(u).
		\end{aligned}
	\end{equation*}
	Moreover, if $k \geq 1$, we use integration by parts to conclude for every $\alpha \in \mathbb{N}^m_{0,k}$ and $h \in C^\infty_c(U)$ that
	\begin{equation*}
		\begin{aligned}
			\int_U \partial_\alpha \mathbb{E}[R_n(u)] h(u) du & = (-1)^{\vert \alpha \vert} \int_U \mathbb{E}[R_n(u)](u) \partial_\alpha h(u) du = (-1)^{\vert \alpha \vert} \int_U f(u) \partial_\alpha h(u) du \\
			& = \int_U \partial_\alpha f(u) h(u) du.
		\end{aligned}
	\end{equation*}
	This shows for every $\alpha \in \mathbb{N}^m_{0,k}$ and a.e.~$u \in U$ that $\partial_\alpha \mathbb{E}[R_n](u) = \partial_\alpha \mathbb{E}[R_n(u)] = \partial_\alpha f(u)$, which implies that $f = \mathbb{E}[R_n] \in W^{k,p}(U,\mathcal{L}(U),w;\mathbb{R}^d)$.
	
	Finally, we use that $f = \mathbb{E}\big[ R_n \big] \in W^{k,p}(U,\mathcal{L}(U),w;\mathbb{R}^d)$, the right-hand side of \cite[Lemma~6.3]{ledoux91} for the independent mean-zero random variables $\big( \mathbb{E}\big[ R_n \big] - R_n \big)_{n=1,...,N}$ (with i.i.d.~$(\epsilon_n)_{n=1,...,N}$ satisfying $\mathbb{P}[\epsilon_n = \pm 1] = 1/2$ being independent of $\big( \mathbb{E}\big[ R_n \big] - R_n \big)_{n=1,...,N}$), the Kahane-Khintchine inequality in \cite[Theorem~3.2.23]{hytoenen16} with constant $\kappa_{2,\min(2,p)} > 0$ depending only on $p \in [1,\infty)$, that $(W^{k,p}(U,\mathcal{L}(U),w;\mathbb{R}^d),\Vert \cdot \Vert_{W^{k,p}(U,\mathcal{L}(U),w;\mathbb{R}^d)})$ is by Lemma~\ref{LemmaBanachSpaceType} a Banach space of type $\min(2,p) \in (1,2]$ (with constant $\widetilde{C}_p := C_{W^{k,p}(U,\mathcal{L}(U),w;\mathbb{R}^d)} > 0$ depending only on $p \in (1,2]$), that $(R_n)_{n = 1,...,N} \sim R_1$ are identically distributed, and Jensen's inequality, we obtain that
	\begin{equation*}
		\begin{aligned}
			& \mathbb{E}\left[ \left\Vert f - \frac{1}{N} \sum_{n=1}^N R_n \right\Vert_{W^{k,p}(U,\mathcal{L}(U),w;\mathbb{R}^d)}^2 \right]^\frac{1}{2} = \frac{1}{N} \mathbb{E}\left[ \left\Vert \sum_{n=1}^N \left( \mathbb{E}\left[ R_n \right] - R_n \right) \right\Vert_{W^{k,p}(U,\mathcal{L}(U),w;\mathbb{R}^d)}^2 \right]^\frac{1}{2} \\
			& \quad\quad \leq \frac{2}{N} \mathbb{E}\left[ \left\Vert \sum_{n=1}^N \epsilon_n \left( \mathbb{E}\left[ R_n \right] - R_n \right) \right\Vert_{W^{k,p}(U,\mathcal{L}(U),w;\mathbb{R}^d)}^2 \right]^\frac{1}{2} \\
			& \quad\quad \leq \frac{2 \kappa_{2,\min(2,p)}}{N} \mathbb{E}\left[ \left\Vert \sum_{n=1}^N \epsilon_n \left( \mathbb{E}\left[ R_n \right] - R_n \right) \right\Vert_{W^{k,p}(U,\mathcal{L}(U),w;\mathbb{R}^d)}^{\min(2,p)} \right]^\frac{1}{\min(2,p)} \\
			& \quad\quad \leq \frac{2 \widetilde{C}_p \kappa_{2,\min(2,p)}}{N} \left( \sum_{n=1}^N \mathbb{E}\left[ \left\Vert \mathbb{E}\left[ R_n \right] - R_n \right\Vert_{W^{k,p}(U,\mathcal{L}(U),w;\mathbb{R}^d)}^{\min(2,p)} \right] \right)^\frac{1}{\min(2,p)} \\
			& \quad\quad = \frac{2 \widetilde{C}_p \kappa_{2,\min(2,p)}}{N^{1-\frac{1}{\min(2,p)}}} \mathbb{E}\left[ \left\Vert \mathbb{E}\left[ R_1 \right] - R_1 \right\Vert_{W^{k,p}(U,\mathcal{L}(U),w;\mathbb{R}^d)}^{\min(2,p)} \right]^\frac{1}{\min(2,p)} \\
			& \quad\quad \leq \frac{2 \widetilde{C}_p \kappa_{2,\min(2,p)}}{N^{1-\frac{1}{\min(2,p)}}} \mathbb{E}\left[ \left\Vert \mathbb{E}\left[ R_1 \right] - R_1 \right\Vert_{W^{k,p}(U,\mathcal{L}(U),w;\mathbb{R}^d)}^2 \right]^\frac{1}{2}.
		\end{aligned}
	\end{equation*}
	Hence, by using this, Jensen's inequality, Minkowski's inequality together with \cite[Proposition~1.2.2]{hytoenen16}, the inequality \eqref{EqThmApproxRatesProof4}, and the constant $C_p := 4 \widetilde{C}_p \kappa_{2,\min(2,p)} \pi > 0$ (depending only on $p \in [1,\infty)$), it follows that
	\begin{equation*}
		\begin{aligned}
			& \mathbb{E}\left[ \left\Vert f - \frac{1}{N} \sum_{n=1}^N R_n \right\Vert_{W^{k,p}(U,\mathcal{L}(U),w;\mathbb{R}^d)}^2 \right]^\frac{1}{2} \leq \frac{2 \widetilde{C}_p \kappa_{2,\min(2,p)}}{N^{1-\frac{1}{\min(2,p)}}} \mathbb{E}\left[ \left\Vert \mathbb{E}\left[ R_1 \right] - R_1 \right\Vert_{W^{k,p}(U,\mathcal{L}(U),w;\mathbb{R}^d)}^2 \right]^\frac{1}{2} \\
			& \quad\quad \leq \frac{4 \widetilde{C}_p \kappa_{2,\min(2,p)}}{N^{1-\frac{1}{\min(2,p)}}} \left\Vert R_1 \right\Vert_{L^2(\Omega,\mathcal{F},\mathbb{P};W^{k,p}(U,\mathcal{L}(U),w;\mathbb{R}^d))} \\
			& \quad\quad \leq \frac{4 \widetilde{C}_p \kappa_{2,\min(2,p)}}{N^{1-\frac{1}{\min(2,p)}}} 2^{4+\frac{1}{p}} \pi \Vert \rho \Vert_{C^k_{pol,\gamma}(\mathbb{R})} \frac{C^{(\gamma,p)}_{U,w} m^\frac{k}{p} \pi^\frac{m+1}{4}}{\left\vert C^{(\psi,\rho)}_m \right\vert \Gamma\left( \frac{m+1}{2} \right)^\frac{1}{2}} \Vert f \Vert_{\mathbb{B}^{k,\gamma}_\psi(U;\mathbb{R}^d)} \\
			& \quad\quad \leq C_p \Vert \rho \Vert_{C^k_{pol,\gamma}(\mathbb{R})} \frac{C^{(\gamma,p)}_{U,w} m^\frac{k}{p} \pi^\frac{m+1}{4}}{\left\vert C^{(\psi,\rho)}_m \right\vert \Gamma\left( \frac{m+1}{2} \right)^\frac{1}{2}} \frac{\Vert f \Vert_{\mathbb{B}^{k,\gamma}_\psi(U;\mathbb{R}^d)}}{N^{1-\frac{1}{\min(2,p)}}}.
		\end{aligned}
	\end{equation*}	
	Thus, there exists some $\omega \in \Omega$ such that $\varphi_N := \frac{1}{N} \sum_{n=1}^N R_n(\omega) \in \mathcal{NN}^\rho_{U,d}$ satisfies
	\begin{equation*}
		\begin{aligned}
			\left\Vert f - \varphi_N \right\Vert_{W^{k,p}(U,\mathcal{L}(U),w;\mathbb{R}^d)} & = \left\Vert f - \Phi_N(\omega) \right\Vert_{W^{k,p}(U,\mathcal{L}(U),w;\mathbb{R}^d)} \\
			& \leq \mathbb{E}\left[ \left\Vert f - \Phi_N \right\Vert_{W^{k,p}(U,\mathcal{L}(U),w;\mathbb{R}^d)}^2 \right]^\frac{1}{2} \\
			& \leq C_p \Vert \rho \Vert_{C^k_{pol,\gamma}(\mathbb{R})} \frac{C^{(\gamma,p)}_{U,w} m^\frac{k}{p} \pi^\frac{m}{4}}{\left\vert C^{(\psi,\rho)}_m \right\vert \Gamma\left( \frac{m+1}{2} \right)^\frac{1}{2}} \frac{\Vert f \Vert_{\mathbb{B}^{k,\gamma}_\psi(U;\mathbb{R}^d)}}{N^{1-\frac{1}{\min(2,p)}}},
		\end{aligned}
	\end{equation*}
	which completes the proof.
\end{proof}

\subsubsection{Proof of Example~\ref{ExAdm}, Lemma~\ref{LemmaWeight}, and Proposition~\ref{PropConst}}
\label{AppConstants}

\begin{proof}[Proof of Example~\ref{ExAdm}]
	First, we observe that $\rho \in C^k_{pol,\gamma}(\mathbb{R})$ is in each case \ref{ExAdm1}-\ref{ExAdm4} of polynomial growth, which ensures that $\rho \in C^k_{pol,\gamma}(\mathbb{R})$ induces $\big( g \mapsto T_\rho(g) := \int_{\mathbb{R}} \rho(s) g(s) ds \big) \in \mathcal{S}'(\mathbb{R};\mathbb{C})$ (see \cite[p.~332]{folland92}).
	
	For \ref{ExAdm2}, we recall that $\tanh'(\xi) = \cosh(\xi)^{-2}$ holds true for all $\xi \in \mathbb{R}$. Moreover, the Fourier transform of the function $\big( s \mapsto h(s) := \frac{\pi s}{\sinh(\pi s/2)} \big) \in L^1(\mathbb{R},\mathcal{L}(\mathbb{R}),du)$ is for every $\xi \in \mathbb{R}$ given by
	\begin{equation}
		\label{EqExNonPolyProof0}
		\widehat{h}(\xi) = \frac{2\pi}{\cosh(\xi)^2} = 2\pi \tanh'(\xi).
	\end{equation}
	Then, by using $\big( g \mapsto \big( \id \cdot \widehat{T_{\tanh}} \big)(g) := \widehat{T_{\tanh}}(\id \cdot g) \big) \in \mathcal{S}'(\mathbb{R};\mathbb{C})$, \cite[Equation~9.31]{folland92} with $\mathbb{R} \ni s \mapsto \id(s) := s \in \mathbb{R}$, the definition of $\widehat{T_{\tanh}} \in \mathcal{S}'(\mathbb{R};\mathbb{C})$, the identity \eqref{EqExNonPolyProof0}, and the Plancherel theorem in \cite[p.~222]{folland92}, it follows for every $g \in C^\infty_c(\mathbb{R} \setminus \lbrace 0 \rbrace;\mathbb{C})$ that
	\begin{equation}
		\label{EqExNonPolyProof1}
		\begin{aligned}
			\widehat{T_{\tanh}}(\id \cdot g) & = \left( \id \cdot \widehat{T_{\tanh}} \right)(g) = \frac{1}{i} \widehat{T_{\tanh'}}(g) = (-\mathbf{i}) T_{\tanh'}\left( \widehat{g} \right) \\
			& = (-\mathbf{i}) \int_{\mathbb{R}} \tanh'(\xi) \widehat{g}(\xi) d\xi = \frac{-\mathbf{i}}{2\pi} \int_{\mathbb{R}} \overline{\widehat{h}(\xi)} \widehat{g}(\xi) d\xi \\
			& = (-\mathbf{i}) \int_{\mathbb{R}} \overline{h(\xi)} g(\xi) d\xi = \int_{\mathbb{R}} \frac{-\mathbf{i} \pi}{\sinh\left( \pi \xi/2 \right)} (\id \cdot g)(\xi) d\xi.
		\end{aligned}
	\end{equation}
	Hence, $\widehat{T_{\tanh}} \in \mathcal{S}'(\mathbb{R};\mathbb{C})$ coincides on $\mathbb{R} \setminus \lbrace 0 \rbrace$ with $\big( \xi \mapsto f_{\widehat{T_{\tanh}}}(\xi) := \frac{\mathbf{i} \pi}{\sinh(\pi \xi/2)} \big) \in L^1_{loc}(\mathbb{R} \setminus \lbrace 0 \rbrace;\mathbb{C})$.
	
	For \ref{ExAdm1}, we denote by $\big( s \mapsto \sigma(s) := \frac{1}{1+\exp(-s)} \big) \in C^k_{pol,\gamma}(\mathbb{R})$ the sigmoid function and observe that $\sigma(s) = \frac{1}{2} \big( \tanh\left( \frac{s}{2} \right) + 1 \big)$ for all $s \in \mathbb{R}$. Then, by using the linearity of the Fourier transform on $\mathcal{S}'(\mathbb{R};\mathbb{C})$, \cite[Equation~9.30]{folland92}, that $\widehat{T_1}(g) = 2 \pi \delta(g) := 2 \pi g(0)$ for any $g \in \mathcal{S}(\mathbb{R};\mathbb{C})$ (see \cite[Equation~9.35]{folland92}), the identity \eqref{EqExNonPolyProof1}, and the substitution $\xi \mapsto \widetilde{\xi}/2$, it follows for every $g \in C^\infty_c(\mathbb{R} \setminus \lbrace 0 \rbrace;\mathbb{C})$ that
	\begin{equation}
		\label{EqExNonPolyProof2}
		\begin{aligned}
			\widehat{T_\sigma}(g) & = \frac{1}{2} \reallywidehat{T_{\tanh(\frac{\cdot}{2})}}(g) + \frac{1}{2} \widehat{T_1}(g) = \frac{1}{2} \widehat{T_{\tanh}}\left(g\left(\frac{\cdot}{2}\right)\right) + \frac{2\pi}{2} g(0) \\
			& = \frac{1}{2} \int_{\mathbb{R}} \frac{-\mathbf{i} \pi}{\sinh\left( \pi \widetilde{\xi}/2 \right)} g\left( \widetilde{\xi}/2 \right) d\widetilde{\xi} = \int_{\mathbb{R}} \frac{-\mathbf{i} \pi}{\sinh(\pi \xi)} g(\xi) d\xi.
		\end{aligned}
	\end{equation}
	Hence, $\widehat{T_\sigma} \in \mathcal{S}'(\mathbb{R};\mathbb{C})$ coincides on $\mathbb{R} \setminus \lbrace 0 \rbrace$ with $\big( \xi \mapsto f_{\widehat{T_\sigma}}(\xi) := \frac{-\mathbf{i} \pi}{\sinh(\pi \xi)} \big) \in L^1_{loc}(\mathbb{R} \setminus \lbrace 0 \rbrace;\mathbb{C})$.
	
	For \ref{ExAdm3}, we denote by $\big( s \mapsto \sigma^{(-1)}(s) := \ln(1+\exp(s)) \big) \in C^k_{pol,\gamma}(\mathbb{R})$ the softplus function and observe that $\frac{d}{ds} \sigma^{(-1)}(s) = \sigma(s)$ for all $s \in \mathbb{R}$. Then, by using \cite[Equation~9.31]{folland92} with $\mathbb{R} \ni s \mapsto \id(s) := s \in \mathbb{R}$ and the identity \eqref{EqExNonPolyProof2}, it follows for every $g \in C^\infty_c(\mathbb{R} \setminus \lbrace 0 \rbrace;\mathbb{C})$ that
	\begin{equation*}
		\widehat{T_{\sigma^{(-1)}}}(\id \cdot g) = \left( \id \cdot \widehat{T_{\sigma^{(-1)}}} \right)(g) = \frac{1}{\mathbf{i}} \widehat{T_\sigma}(g) = \frac{1}{\mathbf{i}}\int_{\mathbb{R}} \frac{-\mathbf{i} \pi}{\sinh(\pi \xi)} g(\xi) d\xi = \int_{\mathbb{R}} \frac{- \pi}{\xi \sinh(\pi \xi)} (\id \cdot g)(\xi) d\xi.
	\end{equation*}
	Hence, $\widehat{T_{\sigma^{(-1)}}} \in \mathcal{S}'(\mathbb{R};\mathbb{C})$ coincides on $\mathbb{R} \setminus \lbrace 0 \rbrace$ with $\big( \xi \mapsto f_{\widehat{T_{\sigma^{(-1)}}}}(\xi) := \frac{-\pi}{\xi \sinh(\pi \xi)} \big) \in L^1_{loc}(\mathbb{R} \setminus \lbrace 0 \rbrace;\mathbb{C})$.
	
	For \ref{ExAdm4}, we denote by $\big( s \mapsto \text{ReLU}(s) := \max(s,0) \big) \in C^k_{pol,\gamma}(\mathbb{R})$ the ReLU function and observe that $\text{ReLU}(s) = \max(s,0) = \frac{s+\vert s \vert}{2}$ for all $s \in \mathbb{R}$. Moreover, the absolute value $\mathbb{R} \ni s \mapsto \vert s \vert \in \mathbb{R}$ is weakly differentiable with $\frac{d}{ds} \vert s \vert = \sgn(s)$ for all $s \in \mathbb{R}$, where $\sgn(s) := 1$ if $s > 0$, $\sgn(0) := 0$, and $\sgn(s) := -1$ if $s < 0$. Then, by using the linearity of the Fourier transform on $\mathcal{S}'(\mathbb{R};\mathbb{C})$, that $\widehat{T_{\id}}(g) = 2\pi \mathbf{i} \delta'(g) := 2\pi \mathbf{i} g'(0)$ for any $g \in \mathcal{S}(\mathbb{R};\mathbb{C})$ with $\mathbb{R} \ni s \mapsto \id(s) := s \in \mathbb{R}$ (see \cite[Equation~9.35]{folland92}), \cite[Equation~9.31]{folland92}, and \cite[Example~9.4.4]{folland92}, i.e.~that $\widehat{T_{\sgn}}(g) = -2\mathbf{i} \int_{\mathbb{R}} \frac{g(\xi)}{\xi} d\xi$ for any $g \in C^\infty_c(\mathbb{R} \setminus \lbrace 0 \rbrace;\mathbb{C})$, it follows for every $g \in C^\infty_c(\mathbb{R} \setminus \lbrace 0 \rbrace;\mathbb{C})$ that
	\begin{equation*}
		\begin{aligned}
			\widehat{T_{\text{ReLU}}}(\id \cdot g) & = \frac{1}{2} \widehat{T_{\id}}(\id \cdot g) + \frac{1}{2} \widehat{T_{\vert \cdot \vert}}(\id \cdot g) = \frac{2\pi \mathbf{i}}{2} (\id \cdot g)'(0) + \frac{1}{2} \left( \id \cdot \widehat{T_{\vert \cdot \vert}} \right)(g) \\
			& = \frac{1}{2\mathbf{i}} \widehat{T_{\sgn}}(g) = \frac{-2\mathbf{i}}{2\mathbf{i}} \int_{\mathbb{R}} \frac{g(\xi)}{\xi} d\xi = \int_{\mathbb{R}} \frac{-1}{\xi^2} (\id \cdot g)(\xi) d\xi.
		\end{aligned}
	\end{equation*}
	Hence, $\widehat{T_{\text{ReLU}}} \in \mathcal{S}'(\mathbb{R};\mathbb{C})$ coincides on $\mathbb{R} \setminus \lbrace 0 \rbrace$ with $\big( \xi \mapsto f_{\widehat{T_{\text{ReLU}}}}(\xi) := -\frac{1}{\xi^2} \big) \in L^1_{loc}(\mathbb{R} \setminus \lbrace 0 \rbrace;\mathbb{C})$.
	
	Now, for each case \ref{ExAdm1}-\ref{ExAdm4}, we fix some $m \in \mathbb{N}$ and $\psi \in \mathcal{S}_0(\mathbb{R};\mathbb{C})$ with non-negative $\widehat{\psi} \in C^\infty_c(\mathbb{R})$ such that $\supp(\widehat{\psi}) = [\zeta_1,\zeta_2]$ for some $0 < \zeta_1 < \zeta_2 < \infty$. Then, by using that $\widehat{\psi} \in C^\infty_c(\mathbb{R})$ is non-negative, it follows that
	\begin{equation*}
		C^{(\psi,\rho)}_m = (2\pi)^{m-1} \int_{\mathbb{R} \setminus \lbrace 0 \rbrace} \frac{\overline{\widehat{\psi}(\xi)} f_{\widehat{T_\rho}}(\xi)}{\vert \xi \vert^m} d\xi = (2\pi)^{m-1} \int_{\zeta_1}^{\zeta_2} \frac{\overline{\widehat{\psi}(\xi)} f_{\widehat{T_\rho}}(\xi)}{\vert \xi \vert^m} d\xi \neq 0.
	\end{equation*}
	This shows that $(\psi,\rho) \in \mathcal{S}_0(\mathbb{R};\mathbb{C}) \times C^k_{pol,\gamma}(\mathbb{R})$ is $m$-admissible. Moreover, in each case \ref{ExAdm1}-\ref{ExAdm4}, we define the constant $C_{\psi,\rho} := (2\pi)^{-1} \big\vert \int_{\zeta_1}^{\zeta_2} \overline{\widehat{\psi}(\xi)} f_{\widehat{T_\rho}}(\xi) d\xi \big\vert$ (independent of $m \in \mathbb{N}$) to conclude that
	\begin{equation*}
		\left\vert C^{(\psi,\rho)}_m \right\vert = (2\pi)^{m-1} \left\vert \int_{\zeta_1}^{\zeta_2} \frac{\overline{\widehat{\psi}(\xi)} f_{\widehat{T_\rho}}(\xi)}{\vert \xi \vert^m} d\xi \right\vert \geq \left\vert \int_{\zeta_1}^{\zeta_2} \frac{\overline{\widehat{\psi}(\xi)} f_{\widehat{T_\rho}}(\xi)}{2\pi} d\xi \right\vert \left( \frac{2\pi}{\zeta_2} \right)^m = C_{\psi,\rho} \left( \frac{2\pi}{\zeta_2} \right)^m,
	\end{equation*}
	which completes the proof.
\end{proof}

\begin{proof}[Proof of Lemma~\ref{LemmaWeight}]
	Let $U \ni u \mapsto w(u) := \prod_{l=1}^m w_0(u_l) \in [0,\infty)$ be a weight, where $w_0: \mathbb{R} \rightarrow [0,\infty)$ satisfies $\int_{\mathbb{R}} w_0(s) ds = 1$ and $C^{(\gamma,p)}_{\mathbb{R},w_0} := \big( \int_{\mathbb{R}} (1+\vert s \vert)^{\gamma p} w_0(s) ds \big)^{1/p} < \infty$. Then, by using that $1+\Vert u \Vert \leq 1+\sum_{l=1}^m \vert u_l \vert \leq \sum_{l=1}^m (1+\vert u_l \vert)$ for any $u := (u_1,...,u_m)^\top \in \mathbb{R}^m$, that $(x_1+...+x_m)^{\gamma p} \leq m^{\gamma p} \left( x_1^{\gamma p} + ... + x_m^{\gamma p} \right)$ for any $x_1,...,x_m \geq 0$, and Fubini's theorem, it follows that
	\begin{equation*}
		\begin{aligned}
			C^{(\gamma,p)}_{U,w} & = \left( \int_U (1+\Vert u \Vert)^{\gamma p} w(u) du \right)^\frac{1}{p} \\
			& \leq \left( \int_U \left( \sum_{l=1}^m (1+\vert u_l \vert) \right)^{\gamma p} w(u) du \right)^\frac{1}{p} \\
			& \leq m^\gamma \left( \sum_{l=1}^m \int_{\mathbb{R}^m} \left( 1 + \vert u_l \vert \right)^{\gamma p} \prod_{i=1}^m w_0(u_i) du \right)^\frac{1}{p} \\
			& \leq m^\gamma \Bigg( \sum_{l=1}^m \bigg( \underbrace{\int_{\mathbb{R}} \left( 1 + \vert u_l \vert \right)^{\gamma p} w_0(u_l) du_l}_{= \big( C^{(\gamma,p)}_{\mathbb{R},w_0} \big)^p} \bigg) \prod_{i=1 \atop i \neq l}^m \underbrace{\int_{\mathbb{R}^m} w_0(u_i) du_i}_{=1} \Bigg)^\frac{1}{p} \\
			& \leq C^{(\gamma,p)}_{\mathbb{R},w_0} m^{\gamma+\frac{1}{p}},
		\end{aligned}
	\end{equation*}
	which completes the proof.
\end{proof}

\begin{proof}[Proof of Proposition~\ref{PropConst}]
	Fix some $f \in L^1(\mathbb{R}^m,\mathcal{L}(\mathbb{R}^m),du;\mathbb{R}^d)$ with $(\lceil\gamma\rceil+2)$-times differentiable Fourier transform. Then, for any fixed $c \in \lbrace 0, \lceil\gamma\rceil+2 \rbrace$, we use that $(\mathfrak{R}_\psi f)(a,b) = (\widetilde{\mathfrak{R}}_\psi f)(v,s,t)$ for any $(a,b) \in (\mathbb{R}^m \setminus \lbrace 0 \rbrace) \times \mathbb{R}$ with $(v,s,t) := \big( \frac{a}{\Vert a \Vert}, \frac{1}{\Vert a \Vert}, \frac{b}{\Vert a \Vert} \big)$, where $\widetilde{\mathfrak{R}}_\psi f$ is introduced in \eqref{EqDefRidgeletPolar}, the identities \cite[Equation~(36)-(40)]{sonoda17}, $c$-times integration by parts, and the Leibniz product rule together with the chain rule, to conclude for every $(a,b) \in (\mathbb{R}^m \setminus \lbrace 0 \rbrace) \times \mathbb{R}$ that
	\begin{equation}
		\label{EqPropConstProof0}
		\begin{aligned}
			b^c (\mathfrak{R}_\psi f)(a,b) & = \frac{t^c}{s^c} (\widetilde{\mathfrak{R}}_\psi f)(v,s,t) \\
			& = \frac{1}{2\pi} \frac{t^c}{s^c} \int_{\mathbb{R}} \widehat{f}(\xi v) \overline{\widehat{\psi}(\xi s)} e^{\mathbf{i} \xi t} d\xi \\
			& = \frac{1}{2\pi} \frac{(-\mathbf{i})^c}{s^c} \int_{\mathbb{R}} \widehat{f}(\xi v) \overline{\widehat{\psi}(\xi s)} \frac{\partial^c}{\partial \xi^c} \left( e^{\mathbf{i} \xi t} \right) d\xi \\
			& = \frac{1}{2\pi} \frac{\mathbf{i}^c}{s^c} \int_{\mathbb{R}} \frac{\partial^c}{\partial \xi^c} \left( \widehat{f}(\xi v) \overline{\widehat{\psi}(\xi s)} \right) e^{\mathbf{i} \xi t} d\xi \\
			& = \frac{1}{2\pi} \frac{\mathbf{i}^c}{s^c} \sum_{\beta \in \mathbb{N}^m_{0,c}} \frac{c!}{\vert \beta \vert! (c-\vert \beta \vert)!} \int_{\mathbb{R}} v^\beta \partial_\beta \widehat{f}(\xi v) \overline{\widehat{\psi}^{(c-\vert\beta\vert)}(\xi s)} s^{c-\vert\beta\vert} e^{\mathbf{i} \xi t} d\xi \\
			& = \frac{1}{2\pi} \mathbf{i}^c \sum_{\beta \in \mathbb{N}^m_{0,c}} \frac{c!}{\vert \beta \vert! (c-\vert \beta \vert)!} \int_{\mathbb{R}} \left( \frac{v}{s} \right)^\beta \partial_\beta \widehat{f}(\xi v) \widehat{\psi}^{(c-\vert\beta\vert)}(\xi s) e^{\mathbf{i} \xi t} d\xi.
		\end{aligned}
	\end{equation}
	Therefore, by taking the norm in \eqref{EqPropConstProof0} and by using the substitution $\zeta \mapsto \xi s$ as well as the inequality $\left\vert (v/s)^\beta \right\vert := \big\vert \prod_{l=1}^m (v_l/s)^{\beta_l} \big\vert = \prod_{l=1}^\beta \vert v_l/s \vert^{\beta_l} \leq \big( 1+\Vert v/s \Vert^2 \big)^{\vert \beta \vert/2} \leq \big( 1+1/s^2 \big)^{c/2}$ for any $v \in \mathbb{S}^{m-1}$, $s \in (0,\infty)$, and $\beta \in \mathbb{N}^m_{0,c}$, we obtain for every $(a,b) \in (\mathbb{R}^m \setminus \lbrace 0 \rbrace) \times \mathbb{R}$ that
	\begin{equation}
		\label{EqPropConstProof1}
		\begin{aligned}
			\vert b \vert^c \left\Vert (\mathfrak{R}_\psi f)(a,b) \right\Vert & \leq \frac{1}{2\pi} \sum_{\beta \in \mathbb{N}^m_{0,c}} \frac{c!}{\vert \beta \vert! (c-\vert \beta \vert)!} \int_{\mathbb{R}} \left\vert \left( \frac{v}{s} \right)^\beta \right\vert \big\Vert \partial_\beta \widehat{f}(\xi v) \big\Vert \left\vert \widehat{\psi}^{(c-\vert\beta\vert)}(\xi s) \right\vert d\xi \\
			& = \frac{1}{2\pi} \sum_{\beta \in \mathbb{N}^m_{0,c}} \frac{c!}{\vert \beta \vert! (c-\vert \beta \vert)!} \int_{\mathbb{R}} \left\vert \left( \frac{v}{s} \right)^\beta \right\vert \left\Vert \partial_\beta \widehat{f}\left( \frac{\zeta v}{s} \right) \right\Vert \left\vert \widehat{\psi}^{(c-\vert\beta\vert)}(\zeta) \right\vert \frac{1}{s} d\zeta \\
			& \leq \frac{c!}{2\pi} \left( 1 + \frac{1}{s^2} \right)^\frac{c}{2} \frac{1}{s} \sum_{\beta \in \mathbb{N}^m_{0,c}} \int_{\mathbb{R}} \left\Vert \partial_\beta \widehat{f}\left( \frac{\zeta v}{s} \right) \right\Vert \left\vert \widehat{\psi}^{(c-\vert\beta\vert)}(\zeta) \right\vert d\zeta \\
			& \leq \frac{(\lceil\gamma\rceil+2)!}{2\pi} \left( 1 + \Vert a \Vert^2 \right)^\frac{\lceil\gamma\rceil+2}{2} \sum_{\beta \in \mathbb{N}^m_{0,\lceil\gamma\rceil+2}} \int_{\mathbb{R}} \big\Vert \partial_\beta \widehat{f}(\zeta a) \big\Vert \left\vert \widehat{\psi}^{(\lceil\gamma\rceil+2-\vert\beta\vert)}(\zeta) \right\vert d\zeta.
		\end{aligned}
	\end{equation}
	Hence, by using the inequality $(x+y)^s \leq 2^{s-1} \big( x^s + y^s \big)$ for any $x,y \in [0,\infty)$ and $s \in [1,\infty)$ and the inequality \eqref{EqPropConstProof1}, it follows for every $(a,b) \in (\mathbb{R}^m \setminus \lbrace 0 \rbrace) \times \mathbb{R}$ that
	\begin{equation}
		\label{EqPropConstProof2}
		\begin{aligned}
			& \left( 1 + \vert b \vert^2 \right)^\frac{\lceil\gamma\rceil+2}{2} \Vert (\mathfrak{R}_\psi f)(a,b) \Vert \leq 2^\frac{\lceil\gamma\rceil}{2} \left( \Vert (\mathfrak{R}_\psi f)(a,b) \Vert + \vert b \vert^{\lceil\gamma\rceil+2} \Vert (\mathfrak{R}_\psi f)(a,b) \Vert \right) \\
			& \quad\quad \leq 2^\frac{\lceil\gamma\rceil}{2} \frac{(\lceil\gamma\rceil+2)!}{\pi} \left( 1 + \Vert a \Vert^2 \right)^\frac{\lceil\gamma\rceil+2}{2} \sum_{\beta \in \mathbb{N}^m_{0,\lceil\gamma\rceil+2}} \int_{\mathbb{R}} \big\Vert \partial_\beta \widehat{f}(\zeta a) \big\Vert \left\vert \widehat{\psi}^{(\lceil\gamma\rceil+2-\vert\beta\vert)}(\zeta) \right\vert d\zeta.
		\end{aligned}
	\end{equation}
	Moreover, by using Fubini's theorem and that $(\mathfrak{R}_\psi f)(a,b) = 0$ for any $(a,b) \in \lbrace 0 \rbrace \times \mathbb{R}$, we have
	\begin{equation}
		\label{EqPropConstProof3}
		\begin{aligned}
			& \Vert f \Vert_{\mathbb{B}^{k,\gamma}_\psi(U;\mathbb{R}^d)} \leq \left( \int_{\mathbb{R}^m} \int_{\mathbb{R}} \left( 1 + \Vert a \Vert^2 \right)^{\gamma+k+\frac{m+1}{2}} \left( 1 + \vert b \vert^2 \right)^{\gamma+1} \Vert (\mathfrak{R}_\psi f)(a,b) \Vert^2 db da \right)^\frac{1}{2} \\
			& \quad \leq \left( \int_{\mathbb{R}} \int_{\mathbb{R}^m}  \left( 1 + \Vert a \Vert^2 \right)^{\gamma+k+\frac{m+1}{2}} \left( 1 + \vert b \vert^2 \right)^{\gamma+2} \Vert (\mathfrak{R}_\psi f)(a,b) \Vert^2 da \, \frac{1}{1+\vert b \vert^2} db \right)^\frac{1}{2} \\
			& \quad \leq \left( \sup_{b \in \mathbb{R}} \int_{\mathbb{R}^m \setminus \lbrace 0 \rbrace}  \left( \left( 1 + \Vert a \Vert^2 \right)^\frac{\lceil\gamma\rceil+k+\frac{m+1}{2}}{2} \left( 1 + \vert b \vert^2 \right)^\frac{\lceil\gamma\rceil+2}{2} \Vert (\mathfrak{R}_\psi f)(a,b) \Vert \right)^2 da \right)^\frac{1}{2} \Bigg( \underbrace{\int_{\mathbb{R}} \frac{1}{1+\vert b \vert^2} db}_{=\pi} \Bigg)^\frac{1}{2}.
		\end{aligned}
	\end{equation}
	Thus, by inserting the inequality \eqref{EqPropConstProof2} into the right-hand side of \eqref{EqPropConstProof3}, using Minkowski's integral inequality (with measure spaces $(\mathbb{R}^m \setminus \lbrace 0 \rbrace,\mathcal{L}(\mathbb{R}^m \setminus \lbrace 0 \rbrace),da)$ and $(\mathbb{N}^m_{0,k} \times \mathbb{R},\mathcal{P}(\mathbb{N}^m_{0,k}) \otimes \mathcal{B}(\mathbb{R}),\mu \otimes d\zeta)$, where $\mathcal{P}(\mathbb{N}^m_{0,k})$ denotes the power set of $\mathbb{N}^m_{0,k}$, and where $\mathcal{P}(\mathbb{N}^m_{0,k}) \ni E \mapsto \mu(E) := \sum_{\alpha \in \mathbb{N}^m_{0,k}} \mathds{1}_E(\alpha) \in [0,\infty)$ is the counting measure), the substitution $\xi \mapsto \zeta a$ with Jacobi determinant $d\xi = \vert \zeta \vert^m da$, that $\zeta_1 := \inf\big\lbrace \vert \zeta \vert : \zeta \in \supp(\widehat{\psi}) \big\rbrace > 0$, and the constant $C_1 := 2^{\lceil\gamma\rceil/2} \pi^{-1/2} (\lceil\gamma\rceil+2)! \max_{j=0,...,\lceil\gamma\rceil+2} \int_{\mathbb{R}} \big\vert \widehat{\psi}^{(j)}(\zeta) \big\vert d\zeta > 0$, we conclude that
	\begin{equation*}
		\begin{aligned}
			& \Vert f \Vert_{\mathbb{B}^{k,\gamma}_\psi(U;\mathbb{R}^d)} \\
			& \quad \leq 2^\frac{\lceil\gamma\rceil}{2} \frac{(\lceil\gamma\rceil+2)!}{\pi} \sqrt{\pi} \left( \int_{\mathbb{R}^m} \left( \left( 1 + \Vert a \Vert^2 \right)^\frac{2\lceil\gamma\rceil+k+\frac{m+5}{2}}{2} \sum_{\beta \in \mathbb{N}^m_{0,\lceil\gamma\rceil+2}} \int_{\mathbb{R}} \big\Vert \partial_\beta \widehat{f}(\zeta a) \big\Vert \left\vert \widehat{\psi}^{(\lceil\gamma\rceil+2-\vert\beta\vert)}(\zeta) \right\vert d\zeta \right)^2 da \right)^\frac{1}{2} \\
			& \quad \leq 2^\frac{\lceil\gamma\rceil}{2} \frac{(\lceil\gamma\rceil+2)!}{\sqrt{\pi}} \sum_{\beta \in \mathbb{N}^m_{0,\lceil\gamma\rceil+2}} \int_{\mathbb{R}} \left\vert \widehat{\psi}^{(\lceil\gamma\rceil+2-\vert\beta\vert)}(\zeta) \right\vert \left( \int_{\mathbb{R}^m} \big\Vert \partial_\beta \widehat{f}(\zeta a) \big\Vert^2 \left( 1 + \Vert a \Vert^2 \right)^{2\lceil\gamma\rceil+k+\frac{m+5}{2}} da \right)^\frac{1}{2} d\zeta \\
			& \quad \leq 2^\frac{\lceil\gamma\rceil}{2} \frac{(\lceil\gamma\rceil+2)!}{\sqrt{\pi}} \sum_{\beta \in \mathbb{N}^m_{0,\lceil\gamma\rceil+2}} \int_{\supp(\widehat{\psi})} \frac{\left\vert \widehat{\psi}^{(\lceil\gamma\rceil+2-\vert\beta\vert)}(\zeta) \right\vert}{\zeta^\frac{m}{2}} \left( \int_{\mathbb{R}^m} \big\Vert \partial_\beta \widehat{f}(\xi) \big\Vert^2 \left( 1 + \Vert \xi/\zeta \Vert^2 \right)^\frac{4\lceil\gamma\rceil+2k+m+5}{2} d\xi \right)^\frac{1}{2} d\zeta \\
			& \quad \leq \frac{C_1}{\zeta_1^\frac{m}{2}} \sum_{\beta \in \mathbb{N}^m_{0,\lceil\gamma\rceil+2}} \left( \int_{\mathbb{R}^m} \big\Vert \partial_\beta \widehat{f}(\xi) \big\Vert^2 \left( 1 + \Vert \xi/\zeta_1 \Vert^2 \right)^\frac{4\lceil\gamma\rceil+2k+m+5}{2} d\xi \right)^\frac{1}{2},
		\end{aligned}
	\end{equation*}
	which completes the proof.
\end{proof}

\begin{proof}[Proof of Proposition~\ref{PropCOD}]
	Fix some $m,d \in \mathbb{N}$ and $\varepsilon > 0$. Moreover, let $p \in (1,\infty)$ and $w: U \rightarrow [0,\infty)$ be a weight as in Lemma~\ref{LemmaWeight} (with constant $C^{(\gamma,p)}_{\mathbb{R},w_0} > 0$ being independent of $m,d \in \mathbb{N}$ and $\varepsilon > 0$), let $(\psi,\rho) \in \mathcal{S}_0(\mathbb{R};\mathbb{C}) \times C^k_{pol,\gamma}(\mathbb{R})$ be a pair as in Example~\ref{ExAdm} (with $0 < \zeta_1 < \zeta_2 < \infty$ and constant $C_{\psi,\rho} > 0$ being independent of $m,d \in \mathbb{N}$ and $\varepsilon > 0$), and fix some $f \in W^{k,p}(U,\mathcal{L}(U),w;\mathbb{R}^d)$ satisfying the conditions of Proposition~\ref{PropConst} such that the right-hand side of \eqref{EqPropConst1} satisfies $\mathcal{O}\left( m^s (2/\zeta_2)^m (m+1)^{m/2} \right)$ for some $s \in \mathbb{N}_0$. Then, there exists some constant $C > 0$ (being independent of $m,d \in \mathbb{N}$ and $\varepsilon > 0$) such that for every $m,d \in \mathbb{N}$ it holds that
	\begin{equation}
		\label{EqPropCODProof1}
		\frac{C_1}{\zeta_1^\frac{m}{2}} \sum_{\beta \in \mathbb{N}^m_{0,\lceil\gamma\rceil+2}} \left( \int_{\mathbb{R}^m} \big\Vert \partial_\beta \widehat{f}(\xi) \big\Vert^2 \left( 1 + \Vert \xi/\zeta_1 \Vert^2 \right)^{2\lceil\gamma\rceil+k+\frac{m+5}{2}} d\xi \right)^\frac{1}{2} \leq C m^s \left( \frac{2}{\zeta_2} \right)^m (m+1)^\frac{m}{2}.
	\end{equation}
	Hence, by using Proposition~\ref{PropConst} together with \eqref{EqPropCODProof1}, Lemma~\ref{LemmaWeight}, the inequality in Example~\ref{ExAdm}, that $\Gamma(x) \geq \sqrt{2\pi/x} (x/e)^x$ for any $x \in (0,\infty)$ (see \cite[Lemma~2.4]{gonon19}), and that $\frac{\pi^{m/4} (2/\zeta_2)^m}{(2\pi/\zeta_2)^m (1/(2e))^{m/2}} = \big( \frac{2e\sqrt{\pi}}{\pi^2} \big)^{m/2} \leq 1$ for any $m \in \mathbb{N}$, we conclude that there exist some constants $C_2, C_3 > 0$ (being independent of $m,d \in \mathbb{N}$ and $\varepsilon > 0$) such that
	\begin{equation}
		\label{EqPropCODProof2}
		\begin{aligned}
			& C_p \Vert \rho \Vert_{C^k_{pol,\gamma}(\mathbb{R})} \frac{C^{(\gamma,p)}_{U,w} m^\frac{k}{p} \pi^\frac{m+1}{4}}{\left\vert C^{(\psi,\rho)}_m \right\vert \Gamma\left( \frac{m+1}{2} \right)^\frac{1}{2}} \frac{\Vert f \Vert_{\mathbb{B}^{k,\gamma}_\psi(U;\mathbb{R}^d)}}{N^{1-\frac{1}{\min(2,p)}}} \\
			& \quad\quad \leq C_p \Vert \rho \Vert_{C^k_{pol,\gamma}(\mathbb{R})} \frac{C^{(\gamma,p)}_{\mathbb{R},w_0} m^{\gamma+\frac{k+1}{p}} \pi^\frac{m+1}{4}}{C_{\psi,\rho} \left( \frac{2\pi}{\zeta_2} \right)^m \left( \frac{4\pi}{m+1} \right)^\frac{1}{4} \left( \frac{m+1}{2e} \right)^\frac{m+1}{2}} C m^s \left( \frac{2}{\zeta_2} \right)^m (m+1)^\frac{m}{2} \\
			& \quad\quad \leq C_p \Vert \rho \Vert_{C^k_{pol,\gamma}(\mathbb{R})} \frac{C^{(\gamma,p)}_{\mathbb{R},w_0} m^{\gamma+\frac{k+1}{p}} \pi^\frac{1}{4} (2e)^\frac{1}{2} C_3}{C_{\psi,\rho} (4\pi)^\frac{1}{4}} C m^s \\
			& \quad\quad \leq \left( C_2 m^{C_3} \right)^{1-\frac{1}{\min(2,p)}}.
		\end{aligned}
	\end{equation}
	Hence, by using that $f \in \mathbb{B}^{k,\gamma}_\psi(U;\mathbb{R}^d)$ (see Proposition~\ref{PropConst}), we can apply Theorem~\ref{ThmApproxRates} with $N = \Big\lceil C_2 m^{C_3} \varepsilon^{-\frac{\min(2,p)}{\min(2,p)-1}} \Big\rceil$ and insert the inequality~\eqref{EqPropCODProof2} to obtain a neural network $\varphi \in \mathcal{NN}^\rho_{U,d}$ with $N$ neurons satisfying
	\begin{equation*}
		\begin{aligned}
			\Vert f - \varphi_N \Vert_{W^{k,p}(U,\mathcal{L}(U),w;\mathbb{R}^d)} & \leq C_p \Vert \rho \Vert_{C^k_{pol,\gamma}(\mathbb{R})} \frac{C^{(\gamma,p)}_{U,w} m^\frac{k}{p} \pi^\frac{m+1}{4}}{\left\vert C^{(\psi,\rho)}_m \right\vert \Gamma\left( \frac{m+1}{2} \right)^\frac{1}{2}} \frac{\Vert f \Vert_{\mathbb{B}^{k,\gamma}_\psi(U;\mathbb{R}^d)}}{N^{1-\frac{1}{\min(2,p)}}} \\
			& \leq \frac{\left( C_2 m^{C_3} \right)^{1-\frac{1}{\min(2,p)}}}{N^{1-\frac{1}{\min(2,p)}}} \leq \varepsilon,
		\end{aligned}
	\end{equation*}
	which completes the proof.
\end{proof}

\bibliographystyle{plain}
\bibliography{mybib}

\end{document}